\newtheorem{theorem}{Theorem}
\newtheorem{proposition}[theorem]{Proposition}
\newtheorem{lemma}[theorem]{Lemma}
\newtheorem{corollary}[theorem]{Corollary}
\newtheorem{claim}[theorem]{Claim}
\newtheorem{fact}[theorem]{Fact}
\newtheoremstyle{named}{}{}{\itshape}{}{\bfseries}{.}{.5em}{\thmnote{#3}}
\theoremstyle{named}
\newtheorem*{namedtheorem}{Theorem}
\DeclareSymbolFont{extraup}{U}{zavm}{m}{n}
\DeclareMathSymbol{\varheart}{\mathalpha}{extraup}{86}
\DeclareMathSymbol{\vardiamond}{\mathalpha}{extraup}{87}
\DeclareMathOperator*{\argmax}{arg max}
\DeclareMathOperator*{\argmin}{arg min}
\newcommand{\RR}{\mathbb{R}}
\newcommand{\NN}{\mathbb{N}}
\newcommand{\QQ}{\mathbb{Q}}
\newcommand{\ZZ}{\mathbb{Z}}
\newcommand{\BB}{\mathbb{B}}
\newcommand{\cN}{\mathcal{N}}
\newcommand{\cI}{\mathcal{I}}
\newcommand{\OO}{\mathcal{O}}
\newcommand{\cS}{\mathcal{S}}
\newcommand{\cA}{\mathcal{A}}
\newcommand{\set}[1]{\left\{#1\right\}}
\newcommand{\bracket}[1]{\left(#1\right)}
\newcommand{\E}[1]{\mathbb{E}\left[#1\right]}
\title{The Gambler's Problem and Beyond}
\author{Baoxiang Wang \\
Department of Computer Science and Engineering\\
The Chinese University of Hong Kong\\
\texttt{bxwang@cse.cuhk.edu.hk} \\
\And
Shuai Li \\
John Hopcroft Center for Computer Science \\
Shanghai Jiao Tong University \\
\texttt{shuaili8@sjtu.edu.cn} \\
\AND
Jiajin Li \\
Department of SEEM \\
The Chinese University of Hong Kong \\
\texttt{jjli@se.cuhk.edu.hk} \\
\AND
Siu On Chan \\
Department of Computer Science and Engineering\\
The Chinese University of Hong Kong\\
\texttt{siuon@cse.cuhk.edu.hk}
}
\begin{document}

\maketitle

\begin{abstract}
We analyze the Gambler's problem, a simple reinforcement learning problem where the gambler has the chance to double or lose the bets until the target is reached. This is an early example introduced in the reinforcement learning textbook by \cite{sutton2018reinforcement}, where they mention an interesting pattern of the optimal value function with high-frequency components and repeating non-smooth points. It is however without further investigation. We provide the exact formula for the optimal value function for both the discrete and the continuous cases. Though simple as it might seem, the value function is pathological: fractal, self-similar, derivative taking either zero or infinity, and not written as elementary functions. It is in fact one of the generalized Cantor functions, where it holds a complexity that has been uncharted thus far. Our analyses could provide insights into improving value function approximation, gradient-based algorithms, and Q-learning, in real applications and implementations.
\end{abstract}

\section{Introduction}

We analytically investigate a deceptively simple problem, the Gambler's problem, introduced in the reinforcement learning textbook by \cite{sutton2018reinforcement} in Example 4.3. 
The problem setting is natural and simple enough that little discussion was given in the book apart from a numerical solution by value iteration.
A close inspection will however show that the problem, as a representative of the entire family of Markov decision processes (MDP), involves a level of complexity and curiosity uncharted in years of reinforcement learning research.

The problem discusses a typical double-or-nothing casino game, where the gambler places multiple rounds of betting. 
The gambler gains the bet amount upon winning a round or loses the bet upon losing the round, with probability $1- p\leq 0.5$ and  $p\geq 0.5$, respectively.
The game terminates when the gambler's capital reaches either the target or $0$.

In each round, the gambler must decide what portion of the capital to stake. 
In the discrete setting the target and the bet amount must be integers, but both can be real numbers in the continuous setting. 
The problem is formulated as an MDP (see Section \ref{sec:prelim} for a preliminary), where state $s$ is the current capital and action $a$ is the bet amount. 
The reward is $+1$ when the state reaches the target state, and zero otherwise.
Our goal is to solve the optimal value function of the MDP.

We first give the solution to the discrete Gambler's problem. Denote $N\in \NN^+$ as the target capital, $n\in \NN^+$ as the current capital (which is the state in the discrete setting), $p> 0.5$ as the probability of losing a bet, and $\gamma$ as the discount factor. The special case of $N=100$, $\gamma=1$ corresponds to the original setting described in Sutton and Barto's book.

\begin{figure}[t!]
\centering
\includegraphics[width=0.95\textwidth,height=0.425\textwidth]{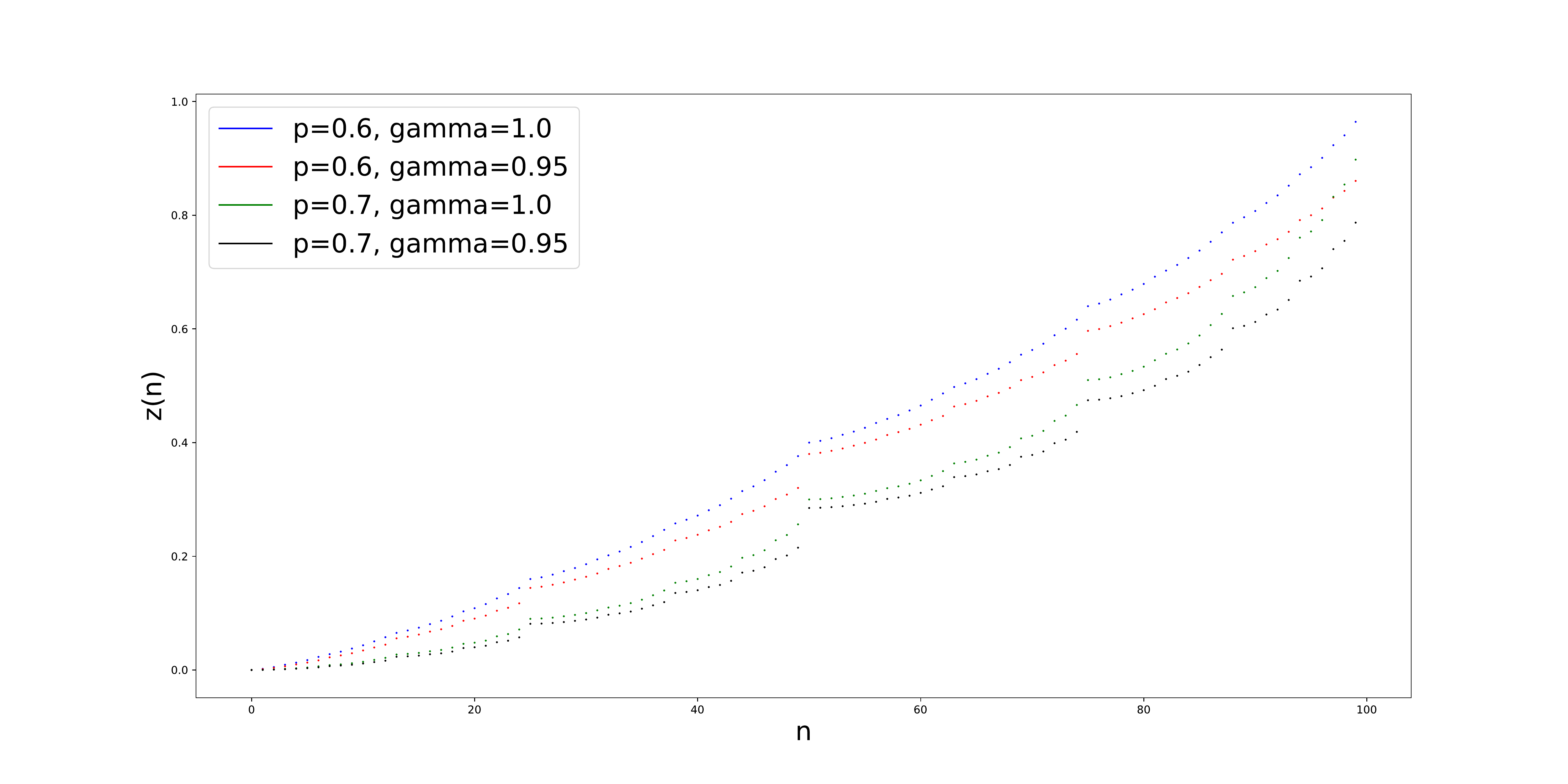}
\caption{The optimal state-value function of the discrete Gambler's problem.}
\label{fig:valuefunctiondiscrete}
\end{figure}

\begin{namedtheorem}[Proposition \ref{discretevaluefunction}]
Let $0\leq\gamma\leq 1$ and $p>0.5$. The optimal value function $z(n)$ is $v(n/N)$ in the discrete setting of the Gambler's problem, where $v(\cdot)$ is the optimal value function under the continuous case defined in Theorem \ref{valuefunction}. 
\end{namedtheorem}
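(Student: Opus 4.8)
The plan is to show that the normalized continuous value function $w(n) := v(n/N)$ solves exactly the optimality conditions that characterize the discrete optimal value $z$, and to conclude equality; equivalently, I would sandwich $z(n)$ between $v(n/N)$ from both sides by comparing the two action sets. The single structural fact I would borrow from Theorem~\ref{valuefunction} is that $v$ solves the continuous Bellman optimality equation
\[
v(s) = \gamma\max_{0\le a\le \min(s,1-s)}\bigl[(1-p)\,v(s+a) + p\,v(s-a)\bigr],\qquad v(0)=0,\ v(1)=1,
\]
and that its maximizer is the bold bet $a^\star(s)=\min(s,1-s)$.

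First I would record the discrete optimality equation, $z(n) = \gamma\max_{1\le a\le\min(n,N-n)}[(1-p)z(n+a)+p z(n-a)]$ with $z(0)=0$, $z(N)=1$, whose admissible bets are exactly the integers, i.e. the points of the $1/N$-grid after normalization. The crux is the observation that restricting the continuous maximum to this grid loses nothing: at a grid state $s=n/N$ the continuous maximizer $a^\star=\min(n,N-n)/N$ is itself a grid point (since $n$ and $N$ are integers), and the bold transitions send $n$ to the integers $2n$ or $2n-N$, keeping the entire optimal trajectory on the grid. Hence substituting $w(n)=v(n/N)$ into the discrete equation, the discrete max and the continuous max coincide and both equal $v(n/N)$, so $w$ satisfies the discrete optimality equation together with the boundary values $w(0)=v(0)=0$ and $w(N)=v(1)=1$.

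To finish I would invoke uniqueness. For $\gamma<1$ the Bellman operator is a $\gamma$-contraction, so $z=w$ immediately. To cover $\gamma=1$ cleanly---where the operator is not a contraction---I prefer the two-sided policy comparison, which sidesteps uniqueness: every discrete policy embeds into the continuous problem (integer bets from integer states stay on the grid and induce the identical transition-and-reward distribution), so its value at $n$ equals its continuous value at $n/N$, giving $z(n)\le v(n/N)$; conversely, because bold play is realizable on the grid, its discrete value at $n$ equals its continuous value $v(n/N)$, giving $z(n)\ge v(n/N)$.

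The main obstacle is the $\gamma=1$ case: there the value must first be shown well-defined, which requires that under bold play (and indeed any positive-bet policy) the capital reaches an absorbing state $0$ or $N$ with probability one, so that $v(n/N)$ is the bona fide optimal probability of hitting the target. Granting this termination, both directions go through verbatim. A secondary point to verify is that optima are attained by stationary deterministic policies in both MDPs, so that comparing the two action sets is legitimate; this is standard for finite absorbing MDPs on the discrete side and is exactly the content of the bold-play optimality asserted in Theorem~\ref{valuefunction} on the continuous side.
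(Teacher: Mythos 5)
Your proposal is correct, and its first half---verifying that $w(n)=v(n/N)$ satisfies the discrete Bellman equation because the continuous maximizer $\min\{s,1-s\}$ at a grid state is itself a grid action (Corollary \ref{blackwell}), so nothing is lost by restricting the max to integer bets---is exactly the paper's argument. Where you genuinely diverge is the concluding step. The paper does not split on $\gamma$: it proves, for all $0\le\gamma\le 1$ at once, that the \emph{discrete} Bellman equation has a unique solution, via a finite combinatorial analog of Lemma \ref{unique}: if $f$ were another solution, the set $S$ of states attaining $\max_n f(n)-z(n)$ is closed under the downward transition $n\mapsto n-a$ of any $f$-optimal action, and since every admissible action has $a\ge 1$, iterating this reaches the terminal state $0$ in finitely many steps, forcing the maximal gap to be $0$ (and symmetrically for $z-f$). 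You instead invoke contraction for $\gamma<1$ and, for $\gamma=1$, a two-sided policy embedding: any discrete policy embeds into the continuous problem with identical value on the grid, giving $z(n)\le v(n/N)$ by optimality of $v$, while bold play is grid-feasible and achieves $v$ by Corollary \ref{blackwell}, giving $z(n)\ge v(n/N)$. That embedding argument is sound, and in fact it works uniformly for every $\gamma\in[0,1]$, which would make both your Bellman verification and your contraction step redundant; it is also more elementary in that it ties $z$ directly to the definition of the optimal value as a supremum over policies rather than passing through fixed points. What the paper's route buys is a stronger standalone fact: the discrete Bellman equation has a \emph{unique} solution even at $\gamma=1$, in sharp contrast to the continuous case (Theorem \ref{allsolutionbellman}) where constant functions $C\ge 1$ also solve it---a distinction relevant to the paper's later discussion of Q-learning fixed points. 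Finally, your worry about well-definedness at $\gamma=1$ is not a real obstacle: the return contains at most one nonzero term bounded by $1$, so every policy's value exists; termination with probability one is only needed to interpret the value as a hitting probability, not to define it.
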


The above statement reduces the discrete problem to the continuous problem by a uniform discretization. The rest of the discussion will be on the more general continuous setting. 

In the continuous setting, let the target capital be $1$ without loss of generality. The state space is then $[0,1]$, and the action space is $0<a\leq \min\{s,1-s\}$ at state $s$, meaning that the bet can be any fraction of the current capital as long as the capital after winning does not exceed $1$ (the target). We give the optimal state-value function below and some intuitive descriptions later in this section.

\begin{namedtheorem}[Theorem \ref{valuefunction}]
Let $0\leq\gamma\leq 1$ and $p>0.5$. Under the continuous setting of the Gambler's problem, the optimal state-value function is $v(1)=1$, and 
\begin{equation}
\label{eqn:vs}
v(s)=\sum_{i=1}^\infty (1-p)\gamma^i b_i\prod_{j=1}^{i-1}((1-p)+(2p-1)b_j)
\end{equation}
for $0\leq s<1$, where $s={0.b_1b_2\dots b_\ell\dots}_{(2)}$ is the binary representation of the state $s$.
\end{namedtheorem}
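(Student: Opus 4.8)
The plan is to characterize $v$ through the Bellman optimality equation and then exploit the self-similar structure that the \emph{bold} betting strategy induces on the binary expansion of the state. For an interior state $0<s<1$ the Bellman optimality operator reads
\[
\mathcal{T}v(s)=\max_{0<a\le\min\{s,1-s\}}\gamma\big[(1-p)\,v(s+a)+p\,v(s-a)\big],
\]
with boundary values $v(0)=0$ and $v(1)=1$. The first step is to argue that the bold action $a^\star=\min\{s,1-s\}$ is optimal, i.e.\ the gambler always stakes as much as the rules allow. This is the Dubins--Savage phenomenon for subfair games ($p>0.5$), and it is the crux of the whole argument; I return to it below.

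Granting bold play, the recursion collapses onto the binary shift. If $s=0.b_1b_2\dots_{(2)}$, then $b_1=0$ corresponds to $s\le\tfrac12$, where $a^\star=s$ sends the capital to $2s=0.b_2b_3\dots_{(2)}$ on a win and to $0$ on a loss; while $b_1=1$ corresponds to $s\ge\tfrac12$, where $a^\star=1-s$ sends the capital to $1$ on a win and to $2s-1=0.b_2b_3\dots_{(2)}$ on a loss. Writing $Ts=0.b_2b_3\dots_{(2)}$ for the shift, both cases unify into
\[
v(s)=\gamma(1-p)\,b_1+\gamma\big[(1-p)+(2p-1)\,b_1\big]\,v(Ts),
\]
using $(1-p)+(2p-1)=p$. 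Unrolling this linear recursion and collecting the coefficient of each digit $b_i$ produces the product $\prod_{j=1}^{i-1}\big((1-p)+(2p-1)b_j\big)$ multiplied by $\gamma^i(1-p)b_i$, which is exactly \eqref{eqn:vs}. Convergence is immediate: each factor $(1-p)+(2p-1)b_j\in\{1-p,p\}$ is at most $p<1$, so the series is dominated by $\sum_{i\ge 1}(1-p)p^{i-1}=1$, confirming $0\le v\le 1$.

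I expect the optimality of bold play to be the main obstacle. For a fixed $\gamma<1$ the operator $\mathcal{T}$ is a $\gamma$-contraction in the sup-norm, so it suffices to show that the series-defined $v$ is its fixed point, i.e.\ that for every feasible $a$ one has $(1-p)v(s+a)+p\,v(s-a)\le(1-p)v(s+a^\star)+p\,v(s-a^\star)$. Equality at $a^\star$ is the recursion above, and the inequality for all other $a$ is the delicate part. I would first establish that $v$ is nondecreasing, which follows from the series because raising any digit $b_i$ from $0$ to $1$ both switches on the $i$-th term and multiplies every later factor from $1-p$ to $p$, hence scales up all subsequent (nonnegative) terms. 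I would then bound the left-hand side by applying the self-similarity relation recursively on the dyadic subintervals into which $s\pm a$ fall, reducing the claim to the one-step comparison at $a^\star$; alternatively one can invoke the Dubins--Savage optimality of bold play directly. The undiscounted case $\gamma=1$ then follows by letting $\gamma\uparrow 1$, since each term of \eqref{eqn:vs} is monotone in $\gamma$.

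A second point that needs care is that dyadic rationals have two binary expansions, and for $\gamma<1$ the value given by \eqref{eqn:vs} is \emph{not} invariant under the choice (the two expansions agree only at $\gamma=1$). The correct convention is the terminating expansion $0.b_1\dots b_\ell 1\,0\,0\dots_{(2)}$: at such a state bold play reaches the target in one winning step, which matches the finite-support series and coincides with the value computed directly from the Bellman equation. I would record this convention and verify consistency at the base case $s=\tfrac12$, where $v(\tfrac12)=\gamma(1-p)$, after which the derivation above applies verbatim to every non-dyadic state and, through the convention, to the dyadic states as well.
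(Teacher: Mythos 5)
Your skeleton matches the paper's structural insight: under bold play the binary shift gives the recursion $v(s)=\gamma(1-p)b_1+\gamma\bigl((1-p)+(2p-1)b_1\bigr)v(Ts)$, whose unrolling is \eqref{eqn:vs}, and your remark about the two binary expansions of dyadic rationals (terminating expansion; discontinuity when $\gamma<1$) is correct and matches the paper. But the step you yourself call the crux --- that no feasible action beats $a^\star$, i.e.\ that the series-defined $v$ actually satisfies the Bellman optimality equation --- is left unproven, and it is precisely where the bulk of the paper's proof lives. Your sketch, ``apply the self-similarity relation recursively on the dyadic subintervals into which $s\pm a$ fall, reducing the claim to the one-step comparison at $a^\star$,'' does not close as stated: a generic action $a$ need not align with the dyadic structure of $s$, and even for dyadic $s$ and $a$ the reduction is not a clean one-step comparison. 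In the paper's Lemma \ref{induction} the case $2^{-1}\le s<2^{-1}+2^{-2}$ with $s-a<2^{-1}$ leaves a residual term $(1-p)(2p-1)\gamma^2\,v(2s+2a-1)$ that must be controlled by monotonicity; ruling out actions at finer dyadic resolution than $s$ requires the bit-counting comparisons of Claim \ref{claim:vdifference} and Lemma \ref{newa} (the $M_c+k$ versus $M_d+k'$ inequalities, with three separate cases); and extending optimality from $\bigcup_{\ell}G_\ell$ to all of $(0,1)$ requires the continuity analysis of Lemma \ref{continuous}. Your fallback, ``invoke Dubins--Savage directly,'' is a citation rather than a proof, and as stated it covers only $\gamma=1$: the classical bold-play theorem for subfair red-and-black is undiscounted, whereas your architecture (sup-norm contraction, fixed-point uniqueness) operates at $\gamma<1$ and would need the discounted extension of bold-play optimality, which you neither cite nor prove.

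Two secondary gaps. First, the passage to $\gamma=1$ by letting $\gamma\uparrow1$: monotonicity of the series terms in $\gamma$ only gives convergence of the right-hand side of \eqref{eqn:vs}; you also need that the optimal value functions themselves converge, $v_\gamma(s)\to v_1(s)$, which requires an argument on the policy side (for each fixed policy the discounted return increases to the undiscounted return by monotone convergence, and each $v_\gamma$ is dominated by $v_1$); this is fillable but is not addressed, and it matters because at $\gamma=1$ the Bellman equation has spurious solutions (Theorem \ref{allsolutionbellman}), so no fixed-point argument can work there. Second, your monotonicity argument for $v$ compares only states differing in a single flipped digit; comparing arbitrary $s<s'$ requires showing that a $1$ at the first differing position dominates an all-ones tail, i.e.\ the estimate $(1-p)\gamma/(1-p\gamma)\le 1$, which is exactly the content of inequality \eqref{diffexistbit} in Claim \ref{claim:vdifference}. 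By contrast, the paper avoids the limiting argument entirely: it proves uniqueness of solutions of ({ABX}) for all $0\le\gamma\le1$ at once (Lemma \ref{unique}, using Lemmas \ref{thm:monotonicity} and \ref{thm:continuity}) and then verifies that $v$ solves ({ABX}), so the identification of $v$ as the optimal value function needs no continuity in $\gamma$.
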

Next, we solve the Bellman equation of the continuous Gambler's problem, that is, $f(0)=0$, $f(1)=1$, and
\begin{equation}
f(s) = \max_{0< a \leq \min\{s,1-s\}}(1-p)\gamma f(s+a)+p\gamma f(s-a),
\end{equation}
for a real function $f$.
In the strictly discounted case $0\leq\gamma < 1$, the solution of the Bellman equation 
is the optimal value function $f(s)=v(s)$ (Proposition \ref{uniquegammalessthanone}). 

This uniqueness does not hold in general. If the reward is not discounted, the solution of the Bellman equation is either the optimal value function, or a constant function larger than $1$.
\begin{namedtheorem}[Theorem \ref{allsolutionbellman}]
Let $\gamma = 1$, $p>0.5$, and $f(\cdot)$ be a real function on $[0,1]$. $f(s)$ solves the Bellman equation if and only if either
\begin{itemize}
\vspace{-0.5mm}
\item $f(s)$ is $v(s)$ defined in Theorem \ref{valuefunction}, or
\vspace{-0.5mm}
\item $f(0)=0$, $f(1)=1$, and $f(s)=C$ for all $0<s<1$, for some constant $C\geq 1$.
\end{itemize}
\end{namedtheorem}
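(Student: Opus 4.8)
The plan is to prove the two directions separately, with the forward (``if'') direction a direct substitution and essentially all the work in the converse.

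For the ``if'' direction I would first dispatch the constant family. Fixing $s\in(0,1)$ and substituting $f\equiv C$ on $(0,1)$ with $f(0)=0,\ f(1)=1$, every bet $a<\min\{s,1-s\}$ keeps both successors in $(0,1)$ and returns $(1-p)C+pC=C$, so the only candidates that could beat $C$ are the boundary-reaching bets: $a=s$ when $s\le\tfrac12$ (value $(1-p)f(2s)\le(1-p)C<C$) and $a=1-s$ when $s\ge\tfrac12$ (value $(1-p)+pC$). Since $(1-p)+pC\le C \iff C\ge1$, the maximum equals $C$ exactly when $C\ge1$, which simultaneously gives sufficiency and the necessity of $C\ge1$ (for $C<1$ bold play strictly exceeds $C$, so the equation fails). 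That $v$ solves the equation is the content of Theorem~\ref{valuefunction}; equivalently it follows from the self-similar identities $v(s)=(1-p)v(2s)$ on $[0,\tfrac12]$ and $v(s)=(1-p)+p\,v(2s-1)$ on $[\tfrac12,1]$ read off from \eqref{eqn:vs}.

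For the converse, let $f$ be any solution. The first step is to show $f\ge v$. Because the bold bet is always feasible, $f(s)\ge(1-p)f(2s)$ for $s\le\tfrac12$ and $f(s)\ge(1-p)+p\,f(2s-1)$ for $s\ge\tfrac12$; letting these define the monotone bold-play operator $B$ and setting $v_k=B^k\mathbf{0}$, an induction gives $f\ge v_k$, and $v_k\uparrow v$ since the coefficient surviving $k$ bold steps is a product of $k$ factors each at most $p<1$, hence $\le p^k\to0$ (a short argument also bounds $f$ from below so that this residual truly vanishes). Writing $h=f-v\ge0$ with $h(0)=h(1)=0$, I would compare the $f$-optimal action $a^\star$ at $s$ against its (feasible but generally suboptimal) use for $v$ to obtain the subharmonic inequality $h(s)\le(1-p)\,h(s+a^\star)+p\,h(s-a^\star)$, while testing $f$ on $v$'s bold action gives $h(s)\ge(1-p)h(2s)$ and $h(s)\ge p\,h(2s-1)$.

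The dichotomy is then governed by whether the near-optimal dynamics of $f$ reach the endpoints. If $\sup_{(0,1)}h=0$ we are done with $f=v$. Otherwise, the subharmonic inequality shows that at any point where $h$ is within $\epsilon$ of its supremum both successors under $a^\star$ are interior — reaching $0$ or $1$ would force $h=0$ there — and are themselves within $O(\epsilon)$ of the supremum; equivalently, the martingale $f(S_t)$ induced by the (near-)optimal bets is not absorbed at $\{0,1\}$ but idles with vanishing stakes, and idling is optimal only where $f$ is locally flat, so propagating flatness and invoking $f\ge v$ should force $f$ constant on $(0,1)$, the constant being $\ge1$ by the necessity already established. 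The hard part will be exactly this last step: turning ``the supremum of $h$ propagates to interior neighbours / the optimal policy idles'' into ``$f$ is globally constant'', since $f$ is assumed neither continuous nor monotone and the maximizing $a^\star$ need not be attained. This is precisely where hybrid solutions (neither $v$ nor a constant) must be excluded, and where the endpoint values cannot control the interior supremum — as they must fail to, given that the genuine constant solutions are discontinuous at $0$ and $1$. I expect to close it with a compactness/limiting argument along the propagated sequence together with the self-similar bounds $h(s)\ge(1-p)h(2s)$ and $h(s)\ge p\,h(2s-1)$ to pin $f$ down on a dense dyadic set, and then conclude by rigidity of the max operator.
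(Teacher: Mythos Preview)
Your ``if'' direction is fine and matches the paper. The converse, however, has two genuine gaps precisely where you flag steps as routine, and the paper closes them by a quite different route.

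First, the ``short argument also bounds $f$ from below'' is not short. Your bold-play iteration gives $f(s)\ge v_k(s)+c_k\,f(\sigma^k s)$ with $0<c_k\le p^k$; this yields $f\ge v$ on dyadic rationals (the orbit hits an endpoint in finitely many steps), but for non-dyadic $s$ you need $\inf f>-\infty$ to kill the residual. Establishing that is the content of the paper's monotonicity lemma (Lemma~\ref{thm:monotonicity}): assuming $f(s_1)>f(s_2)$ with $s_1<s_2$, one uses $p>0.5$ and the Bellman inequality with stakes $2^{-\ell}(s_2-s_1)$ to force $f$ to $-\infty$ along an arithmetic progression inside $(0,1)$, contradicting real-valuedness. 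This is the decisive use of $p>0.5$ and is not a one-liner.

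Second, your dichotomy step is where the paper diverges from your plan. You try to propagate near-suprema of $h=f-v$ through an approximately optimal action with no regularity on $f$; as you yourself note, the maximizer need not exist and the propagated sequence has no obvious limit. The paper avoids this entirely. Having established monotonicity, it proves continuity of $f$ on $(0,1)$ (Lemma~\ref{thm:continuity}: a jump would force infinitely many disjoint increments of fixed size within a bounded range). With $f$ monotone and continuous on $(0,1)$ it sets $C_0=\lim_{s\to0^+}f(s)$, $C_1=\lim_{s\to1^-}f(s)$, shows $f\ge C_0+(C_1-C_0)v$ on dyadics and hence everywhere, and then invokes the uniqueness argument of Lemma~\ref{unique}, Case~(\Romannum{2}), for the affine rescaling $\tilde f=C_0+(C_1-C_0)v$. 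This cleanly yields either $C_1\neq C_0$ (forcing $C_0=0$, $C_1=1$, hence $f=v$) or $C_1=C_0$ (so $f\equiv C_0$ on $(0,1)$, with $C_0\ge1$ by testing the Bellman equation at $s=\tfrac34$). The regularity lemmas are what make $C_0,C_1$ exist and what let Lemma~\ref{unique} apply; without them I do not see how your compactness/limiting sketch pins $f$ down on non-dyadic points or excludes hybrids.
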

%\vspace{-0.5mm}
Under the most complicated case of $\gamma=1$, $p=0.5$, the problem involves midpoint concavity and Cauchy's functional equation \citep{sierpinski1920equation,sierpinski1920fonctions}. The measurable functions that solve the Bellman equation are $f(s)=C^\prime s+B^\prime$, $s\in (0,1)$, for $C^\prime+B^\prime \geq 1$. Additionally, there exists non-constructive, non Lebesgue measurable solutions under Axiom of Choice (Theorem \ref{thm:negativef}).

Though the description of the Gambler's problem seems natural and simple, Theorem \ref{valuefunction} shows that its simpleness is deceptive. 
% non-smoothness
The optimal value function is fractal, self-similar, and non-rectifiable (see Corollary \ref{thm:self-similar} and Lemma \ref{induction}). It is thus not smooth on any interval, which can be unexpected when a significant line of reinforcement learning studies is based on function approximation like discretization and neural networks. 
The value function \eqref{eqn:vs} can neither be simplified into a formula of elementary functions, which introduces additional difficulties to analyses of algorithms.
% singular
The function is monotonically increasing with $v(0)=0$ and $v(1)=1$, but its derivative is $0$ almost everywhere, which is counterintuitive. This is known as singularity, a famous pathological property of functions. 
% discontinuity
$v(s)$ is continuous almost everywhere but not absolutely continuous. Also when $\gamma$ is strictly smaller than $1$, it is discontinuous on a dense and compact set of infinitely many points. These properties indicate that assumptions like smoothness, continuity, and approximability are not satisfied in this problem. In general, it is reasonable to doubt if these assumptions can be imposed in reinforcement learning. 
% cantor
To better understand the pathology of $v(s)$, we analogize it to the Cantor function, which is well known in analysis as a counterexample of many seemingly true statements \citep{dovgoshey2006cantor}. In fact, $v(s)$ is a generalized Cantor function, where the above descriptions are true for both $v(s)$ and the Cantor function.  

\begin{figure}[t!]
\centering
\includegraphics[width=0.95\textwidth,height=0.425\textwidth]{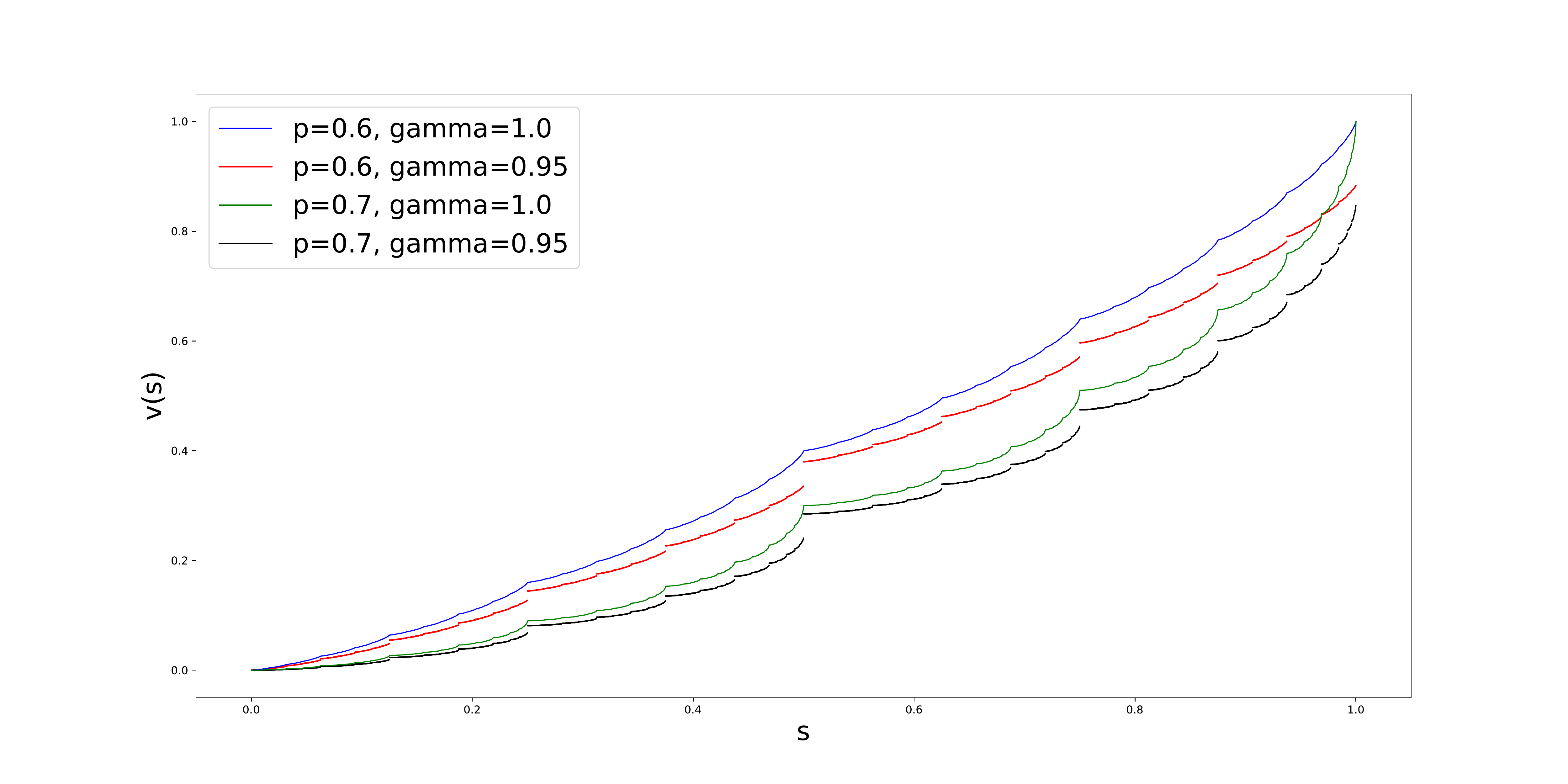}
\caption{The optimal state-value function of the continuous Gambler's problem.}
\label{fig:valuefunction}
\end{figure}

\textbf{Intuitive descriptions of $\bm{v(s)}$.} All the statements above require the definition of $v(s)$. In fact, in this paper, $v(s)$ is important enough such that its definition will not change with the context. The function cannot be written as a combination of elementary functions. Nevertheless, we give an intuitive way to understand the function for the original, undiscounted problem. The function can be regarded as generated by the following iterative process. First we fix $v(0)=0$ and $v(1)=1$, and compute 
\[v(\frac{1}{2})= p v(0)+(1-p) v(1)=(1-p) .\]
Here, $v(\frac{1}{2})$ is the weighted average of the two ``neighbors'' $v(0)$ and $v(1)$ that have been already evaluated. Further, the same operation applies to $v(\frac{1}{4})$ and $v(\frac{3}{4})$, where $v(\frac{1}{4}) = p v(0)+(1-p) v(\frac{1}{2}) = (1-p)^2$ and $v(\frac{3}{4}) = p v(\frac{1}{2})+(1-p) v(1) = (1-p) + p(1-p)$, and so forth to $v(\frac{1}{8})$, $v(\frac{3}{8})$, etc. 
This process evaluates $v(s)$ on the dense and compact set $\bigcup_{\ell \ge 1} G_{\ell}$ of the dyadic rationals, where $G_\ell=\{k2^{-\ell}\mid k\in \{1,\dots,2^\ell-1\}\}$. With the fact that $v(s)$ is monotonically strictly increasing, these dyadic rationals determine the function $v(s)$ uniquely.

This iterative process can also be explained from the analytical formula of $v(s)$. Starting with the first bit, a bit of $0$ will not change the value, while a bit of $1$ will add $(1-p)\prod_{j=1}^{i-1}((1-p)+(2p-1)b_j)$ to the value. This term can also be written as $(1-p)((1-p)^{\#\bm{0} \ \text{bits}}\cdot p^{\#\bm{1} \ \text{bits}})$, where the number of bits is counted over all previous bits.
The value $(1-p)^{\#\bm{0} \ \text{bits}}\cdot p^{\#\bm{1} \ \text{bits}}$ decides the gap between two neighbor existing points in the above process, when we insert a new point in the middle. This insertion corresponds to the iteration on $G_\ell$ over $\ell$.

We provide high resolution plots of $z(n), N=100$ and $v(s)$ in Figure \ref{fig:valuefunctiondiscrete} and Figure \ref{fig:valuefunction}, respectively. The non-smoothness and the self-similar fractal patterns can be clearly observed from the figures. 
Also, $v(s)$ is continuous when $\gamma=1$ while $v(s)$ is not continuous on infinitely many points when $\gamma<1$. In fact, when $\gamma<1$, the function is discontinuous on the dyadic rationals $\bigcup_{\ell \ge 1} G_{\ell}$ while continuous on its complement, as we will rigorously show later.

\textbf{Self-similarity.} 
The function $v(s)$ on $[\bar s,\bar s+2^{-\ell}]$ for any $\bar s={0.b_1b_2 \dots b_\ell}_{(2)}$, $\ell\ge 1$ is self-similar to the function itself on $[0,1]$. Let $s = 0.{b_1b_2\dots b_\ell \ldots}_{(2)}\in [\bar s,\bar s+2^{-\ell}]$, this can be observed by 
\begin{align}
\label{eqn:selfsim}
v(s) & = \sum_{i=1}^\infty (1-p)\gamma^i b_i\prod_{j=1}^{i-1}((1-p)+(2p-1)b_j)\nonumber\\
& = \sum_{i=1}^{\ell} (1-p)\gamma^i b_i\prod_{j=1}^{i-1}((1-p)+(2p-1)b_j) + \gamma^{\ell} (\prod_{j=1}^{\ell}((1-p)+(2p-1)b_j))\nonumber\\
& \quad \times \sum_{i=1}^\infty (1-p)\gamma^i b_{\ell+i}\prod_{j=1}^{i-1}((1-p)+(2p-1)b_{\ell+j})\nonumber\\
%& = v(\bar s) + \gamma^{\ell}\prod_{j=1}^{\ell}((1-p)+(2p-1)b_j)\cdot v(0.b_{\ell+1} b_{\ell+2}\dots)\\
& = v(\bar s) + \gamma^{\ell}\prod_{j=1}^{\ell}((1-p)+(2p-1)b_j)\cdot v(2^\ell (s-\bar s)).
\end{align}
The self-similarity can be compared with the Cantor function \citep{dovgoshey2006cantor,mandelbrot1985self}, which uses the ternary of $s$ instead in the formula. 
The Cantor function is self-similar to itself on $[\bar s, \bar s + 3^{-\ell}]$, when $\bar s= {0.{b_1}b_2\dots {b_\ell}}_{(3)}$ and ${b_\ell} \neq 1$. 
Both $v(s)$ and the Cantor function can be uniquely described by their self-similarity, the monotonicity, and the boundary conditions.

\textbf{Optimal policies.} It is immediate by Theorem \ref{valuefunction} and Lemma \ref{induction} that
\[\pi(s)=\min\{s,1-s\}\]
is one of the (Blackwell) optimal policies. Here, Blackwell optimality is defined as the uniform optimality under any $0\leq\gamma\leq 1$. This policy agrees with the intuition that under a game that is in favor of the casino ($p>0.5$), the gambler desires to bet the maximum to finish the game in as little cumulative bet as possible. In fact, the probability of reaching the target is the expected amount of capital by the end of the game, which is negative linear to the cumulative bet. 

The optimality is not unique though, for example, $\pi(\frac{15}{32})=\frac{1}{32}$ is also optimal (for any $\gamma$). Under $\gamma=1$ the original, undiscounted setting, small amount of bets can also be optimal. Namely, when $s$ can be written in finitely many bits $s={b_1b_2\dots b_\ell}_{(2)}$ in binary (assume $b_\ell=1$), $\pi(s)=2^{-l}$ is also an optimal policy. This policy is by repeatedly rounding the capital to carryover the bits, keeping the game to be within at most $\ell$ rounds of bets.

\subsection{Preliminaries}
\label{sec:prelim}

We use the canonical formulation of the discrete-time Markov decision process (MDP), denoted as the tuple $(\mathcal{S},\mathcal{A}, \mathcal{T}, r,\rho_0,\gamma)$. That includes $\mathcal{S}$ the state space, $\mathcal{A}$ the action space, $\mathcal{T}:\mathcal{S}\times\mathcal{A}\times\mathcal{S} \to \mathbb{R}^+$ the stochastic transition probability function, $r:\mathcal{S}\to\mathbb{R}$ the reward function, $\rho_0\in \mathcal{S}$ the initial state, and $\gamma \in [0,1]$ the unnormalized discount factor. 
A deterministic policy $\pi:\mathcal{S}\to\mathcal{A}$ is a map from the state space to the action space. In this problem, $\mathcal{T}(s,a,s-a)$ and $\mathcal{T}(s,a,s+a)$ are $p$ and $1-p$ respectively for $s\in\mathcal{S}$, $a\in \mathcal{A}$, and $\mathcal{T}(\cdot)$ is otherwise $0$.

Our goal is to solve the optimal value function of the Gambler's problem, which is the probability that the gambler reaches the target from an initial state. 
The definition of the state-value function of an MDP with respect to state $s$ and policy $\pi$ is
\[
f^\pi(s) = \E{\sum_{t} \gamma^t r_t \Big| s_0=s, a_t=\pi(s_t), r_t= r(s_t), s_{t+1}\sim \mathcal{T}(s_t,a_t), t=0,1,\dots} .
\]
When $\pi^\ast(\cdot)$ is one of the optimal policies, $f^{\pi^\ast}(s)$ is the optimal state-value function. Despite that multiple optimal policies can exist, this optimal state-value function is unique \citep{sutton2018reinforcement,szepesvari2010algorithms}.

\subsection{Implications} 

Our results indicate hardness on reinforcement learning \citep{papadimitriou1987complexity,littman1995complexity,thelen1998dynamic} and revisions of existing reinforcement learning algorithms. It is worth noting that similar patterns of fractal and self-similarity have been observed empirically, for example in \citet{chockalingam2019role} for the Mountain Car problem. With these characterizations being observed in simple problems like Mountain Car and the Gambler's problem, our results are expected to be generalized to a variety of reinforcement learning settings.

The first implication is naturally on value function approximation, which is a developed topic in reinforcement learning \citep{lee2008makes,lusena2001nonapproximability}. By the fractal property of the optimal value function, that representation of such function must be inexact \citep{tikhonov2014rate}. When discretization is used for value function representation, the approximation error is at least $\OO(1/N)$, where $N$ is the number of bins. 

\begin{namedtheorem}[Proposition \ref{thm:approxdisc}]
When $N\in \mathbb{N}^+$, $N\ge 4$ is a power of $2$, let $\bar v_1(s)$ be piecewise constant on any of the intervals $s\in (k/N, (k+1)/N)$, $k=0,\dots,N-1$, then
\[
\int_s \left| v(s)-\bar v_1(s) \right| ds \geq \frac{1}{N}\frac{(2-\gamma)(1-p)\gamma}{1-p\gamma} + o(\frac{1}{N}).
\]
\end{namedtheorem}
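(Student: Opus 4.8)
The plan is to turn the global error into a sum of identical per-bin pieces using the exact self-similarity \eqref{eqn:selfsim}, and then to evaluate the handful of one-dimensional integrals that survive by the fixed-point equations they satisfy. Since $N=2^\ell$ is a power of two, each bin $(k/N,(k+1)/N)$ is a level-$\ell$ dyadic interval, so writing $k/N=0.b_1\cdots b_\ell$ in binary, \eqref{eqn:selfsim} gives the exact affine law $v(s)=v(k/N)+A_k\,v\!\left(N(s-k/N)\right)$ on the bin, with contraction factor $A_k=\gamma^\ell\prod_{j=1}^\ell\big((1-p)+(2p-1)b_j\big)$. I would first reduce to the best piecewise-constant competitor: on each bin $\int_{k/N}^{(k+1)/N}|v-\bar v_1|$ is smallest when $\bar v_1$ equals the median of $v$ over the bin, and substituting $t=N(s-k/N)$ collapses this minimum to $\tfrac{A_k}{N}$ times a single base number $\int_0^1|v(t)-c^\star|\,dt$ that is the same for every $k$. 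Summing over $k$ and using the separable identity
\[
\sum_{k=0}^{N-1}\prod_{j=1}^\ell\big((1-p)+(2p-1)b_j\big)=\prod_{j=1}^\ell\big((1-p)+p\big)=1
\]
eliminates the combinatorial sum over bins and leaves a closed expression divided by $N$.

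The remaining base integrals I would evaluate directly from \eqref{eqn:selfsim}. Splitting $[0,1]$ at $1/2$, rescaling each half, and using $v(1/2)=(1-p)\gamma$ turns $\int_0^1 v$ into a linear equation in itself, giving $\int_0^1 v=\tfrac{(1-p)\gamma}{2-\gamma}$; the same self-referential device evaluates the one-sided integrals over $[0,1/2]$ and $[1/2,1]$ and the boundary limit $v(1^-)=\tfrac{(1-p)\gamma}{1-p\gamma}$, which is precisely the quantity that manufactures the factor $\tfrac{(2-\gamma)(1-p)\gamma}{1-p\gamma}$ once the closed forms are substituted back. Assembling these pieces produces the claimed leading $\tfrac1N$ term, with the $(2-\gamma)$ entering through the integral of $v$ and the $1-p\gamma$ through the behaviour of $v$ at the top of its range.

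The delicate part—and where I expect the real work to be—is separating the genuine $\tfrac1N$ contribution from the $o(\tfrac1N)$ remainder while $v$ is singular, and pinning down exactly how the discount $\gamma$ partitions between the leading coefficient and the remainder. When $\gamma<1$, $v$ jumps at every dyadic rational, so the per-bin median may land on a jump, and the clean identity ``median error $=\int_{\text{right half}}v-\int_{\text{left half}}v$'' must be re-derived for a function of bounded variation rather than a continuous one. One must also fix the left/right-limit conventions at the bin endpoints so that the single boundary jump at $s=1$, of size $\tfrac{1-\gamma}{1-p\gamma}$, is counted exactly once, and verify that the contributions of the interior dyadic jumps sum to $o(\tfrac1N)$ \emph{uniformly} in $k$ rather than merely being individually small. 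Establishing this uniform control of the residual, together with the bounded-variation version of the median estimate, is the crux on which the stated leading constant depends.
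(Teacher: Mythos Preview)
Your per-bin reduction is correct, but it does \emph{not} produce the stated $1/N$ leading term when $\gamma<1$, and the gap is already visible in your own formula. You write $A_k=\gamma^\ell\prod_{j=1}^\ell\bigl((1-p)+(2p-1)b_j\bigr)$ and then invoke the separable identity $\sum_k\prod_j=1$; but the quantity you must sum is $\sum_k A_k=\gamma^{\ell}\sum_k\prod_j=\gamma^{\ell}$, not $1$. Your total minimum error is therefore exactly
\[
\sum_{k=0}^{N-1}\frac{A_k}{N}\int_0^1|v(t)-c^\star|\,dt
\;=\;\frac{\gamma^{\ell}}{N}\,E
\;=\;N^{\log_2\gamma-1}E,
\]
which for $\gamma<1$ is $o(1/N)$. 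So the pieces cannot assemble to the claimed constant $(2-\gamma)(1-p)\gamma/(1-p\gamma)$: the $\gamma^{\ell}$ factor you silently dropped when passing from $A_k$ to $\prod_j$ swallows the entire putative leading term. (Your fixed-point value $\int_0^1 v=(1-p)\gamma/(2-\gamma)$ is correct, incidentally, and disagrees with the paper's Fact $\int_0^1 v=(1-p)\gamma$, which is the $\gamma=1$ value only.)

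The paper's route differs in exactly this respect: it does \emph{not} factor each bin through the self-similar contraction to a common base problem on $[0,1)$. Instead it expresses the per-bin median error directly in the actual increments $v(\bar s+\tfrac12\Delta s)-v(\bar s)$ and $v(\bar s+\Delta s)-v(\bar s+\tfrac12\Delta s)$, where $v(\bar s+\Delta s)=v((k{+}1)/N)$ is the genuine value at the next grid point, and then aggregates these increments across all bins so that telescoping against $v(1)-v(0)=1$ supplies a $\Theta(1)$ numerator. In other words, the paper's $1/N$ rate is carried by the jumps of $v$ at the dyadic bin boundaries (and ultimately by the gap $v(1)-v(1^-)$), which your open-interval rescaling never sees. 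The issue you flagged as ``the delicate part''---how the dyadic jumps enter---is not an $o(1/N)$ residual to be controlled after the fact; in the paper's computation it is the entire leading-order mechanism, and it is absent from your scheme.
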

Alternatively, when a subclass of $L$-Lipschitz continuous functions is used to represent $v(s)$, this error is then at least $(1/L)\cdot (1-p)^2\gamma^2(1-\gamma)^2/(4-4p\gamma)$ by the discontinuity of $v(s)$ (Proposition \ref{thm:approxlip}). It is worth remarking that despite this specific lower bound diminishes when $\gamma$ is $1$, the approximation error is nonzero for an arbitrarily large $L$ under $\gamma=1$, as the derivative of $v(s)$ can be infinite (Fact \ref{thm:derivative}). 

Notably, neural networks are within this family of functions, where the Lipschitz constant $L$ is determined by the network architecture. By the proposition, it is not possible to obtain the optimal value function when a neural network is used, albeit the universal approximation theorem \citep{csaji2001approximation,dovgoshey2006cantor,levesley2007problem}.

The second implication is by Theorem \ref{valuefunction} and Fact \ref{thm:derivative} that the derivative of $v(s)$ is
\begin{equation*}
\lim_{\Delta s \rightarrow 0^+}\frac{v(s+\Delta s)}{\Delta s} = 0, \ \ \lim_{\Delta s \rightarrow 0^-}\frac{v(s+\Delta s)}{\Delta s} =
\begin{cases}
+\infty, & \text{if}\ s=0 \ \text{or} \ s\in\bigcup_{\ell\geq 1} G_\ell, \\
0, & \text{otherwise}.
\end{cases}
\end{equation*}
This imposes that the value function's derivative must not be exactly obtained, as it is $0$ almost everywhere, except on the dyadic rationals $G_\ell$, where it has a left derivative of infinity and a right derivative of $0$. Algorithms relying on $\partial v(s)/\partial s$ and $\partial Q(s,a)/\partial a$ \citep{lillicrap2015continuous,gu2017interpolated,heess2015learning,fairbank2012value,fairbank2008reinforcement,pan2019hill,lim2018actor}, where $Q(s,a)$ is the action-value function \citep{sutton2018reinforcement}, can suffer from the estimation error or even have unpredictable behavior. 

In practice, the boolean implementation of float numbers can further increase this error, as all points $s$ implemented are in $G_\ell$ for some $\ell$. A precise evaluation requires all these derivatives to be infinity when a Lebesgue derivative is used (the average of left and right derivatives), which cannot be obtained in a computer system. 

The third implication is on Q-learning \citep{mnih2015human,watkins1992q,baird1995residual}, by Theorem \ref{allsolutionbellman} and its supporting lemmas. It is proved that when $\gamma=1$, Q-learning has multiple converging points, as the Bellman equation has multiple solutions, namely $v(s)$ and
\[f(0)=0, f(1)=1,\ \text{and}\ f(s)=C \ \text{for all}\ 0<s<1, \]
for some constant $C\geq 1$.
Therefore, even when the Q-learning algorithm converges, it may not converge to the optimal value function $v(s)$. In fact, as the solution can be either the ground truth of the optimal value function, or a large constant function, it is easier to approximate a constant function than the optimal value function, resulting in a relatively lower Bellman error when converging to the large constant. 

This challenges Q-learning under $\gamma=1$ when the return (cumulative reward) is unbiased. Though the artificial $\gamma$ is originally introduced to prevent the return from diverging, it can be also necessary to prevent the algorithm from converging to a large constant in Q-learning, which is not desired.

\section{Discrete Case}

The analysis of the discrete case of the Gambler's problem will give an exact solution. It will also explain the reason the plot on the book has a strange pattern of repeating spurious points.

The discrete case can be described by the following MDP: The state space is $\{0, \dots, N\}$; the action space at $n$ is $\cA(n)=\{0<a\leq\min\{n, N-n\}\}$; the transition from state $n$ and action $a$ is $n-a$ and $n+a$ with probability $p$ and $1-p$, respectively; the reward function is $r(N)=1$ and $r(n)=0$ for $0\leq n \leq N-1$. The MDP terminates at $n\in\{0,N\}$. We use a time-discount factor of $0\leq \gamma\leq 1$, where the agent receives $\gamma^Tr(N)$ rewards if the agent reaches the state $n=N$ at time $T$.

Let $z(n)$, $n\in\cN, 0\leq n\leq N$, be the value function. The exact solution below of the discrete case is relying on Theorem \ref{valuefunction}, our main theorem which describes the exact solution of the continuous case. This theorem will be discussed and proved later in Section \ref{sec:abc1}.

\begin{proposition}
\label{discretevaluefunction}
Let $0\leq\gamma\leq 1$ and $p>0.5$. The optimal value function $z(n)$ is $v(n/N)$ in the discrete setting of the Gambler's problem, where $v(\cdot)$ is the optimal value function under the continuous case defined in Theorem \ref{valuefunction}.
\end{proposition}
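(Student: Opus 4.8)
The plan is to prove $z(n) = v(n/N)$ by a two-sided (sandwich) argument that couples the discrete MDP to the continuous one through the scaling map $\phi(n) = n/N$, rather than by manipulating the closed form of Theorem \ref{valuefunction} directly. The single structural fact that drives everything is that $\phi$ carries the discrete state space $\{0,\dots,N\}$ onto the grid $G = \{k/N : 0 \le k \le N\} \subseteq [0,1]$, and that an integer bet $a \in \{1,\dots,\min\{n,N-n\}\}$ at state $n$ corresponds to the continuous bet $a/N \le \min\{n/N, 1-n/N\}$ at state $n/N$, whose two successor states $(n\pm a)/N$ again lie on $G$. Thus integer bets keep a continuous trajectory pinned to the grid, the transition probabilities $p, 1-p$ and the terminal reward at the target ($n=N \leftrightarrow s=1$) agree under $\phi$, and the per-step discount $\gamma$ is unchanged. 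Consequently every discrete trajectory corresponds to a continuous trajectory that never leaves $G$, and the two have identical (discounted) probabilities of absorption at the target.

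First I would establish the lower bound $z(n) \ge v(n/N)$. By Lemma \ref{induction} the continuous policy $\pi(s) = \min\{s, 1-s\}$ is optimal, and at a grid point $s = n/N$ it prescribes the bet $\min\{n,N-n\}/N$, i.e.\ exactly the discrete maximum bet $\min\{n,N-n\}$. Starting from $n/N$ and following this policy, every visited state stays on $G$ and every bet is an integer after rescaling, so the induced continuous Markov chain coincides with the chain generated by the discrete max-bet policy. Since this policy attains the continuous optimum, its value is $v(n/N)$; being a feasible discrete policy, it certifies $z(n) \ge v(n/N)$.

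Next I would establish the upper bound $z(n) \le v(n/N)$ by embedding. Fix any discrete policy $\pi_d$ and extend it to a continuous policy $\tilde\pi_d$ that agrees with $\pi_d$ (after rescaling) on $G$ and is defined arbitrarily off $G$. Because integer bets are grid-preserving, a trajectory started at $n/N$ under $\tilde\pi_d$ visits only grid states and never consults the arbitrary extension; hence $v^{\tilde\pi_d}(n/N)$ equals the discrete value $z^{\pi_d}(n)$. Since $v$ is the optimal continuous value function (Theorem \ref{valuefunction}), $z^{\pi_d}(n) = v^{\tilde\pi_d}(n/N) \le v(n/N)$, and taking the supremum over discrete policies gives $z(n) \le v(n/N)$. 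Combining the two bounds yields $z(n) = v(n/N)$.

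The main obstacle is making the coupling between the two MDPs airtight rather than the inequalities themselves: one must check that the grid $G$ is genuinely invariant under all admissible integer bets (which is where feasibility $a \le \min\{n,N-n\}$ and integrality are both used), and that value, discount, and the absorbing reward transfer exactly under $\phi$ so that $v^{\tilde\pi_d}(n/N) = z^{\pi_d}(n)$ holds as an identity and not merely an inequality. A secondary subtlety is the undiscounted case $\gamma = 1$, where values are absorption probabilities; here one should invoke the asserted uniqueness of the optimal value function (so that the supremum defining $z(n)$ is the intended object) and note that the Blackwell optimality of $\pi(s)=\min\{s,1-s\}$ from Lemma \ref{induction} supplies the optimal grid policy uniformly across $\gamma \in [0,1]$.
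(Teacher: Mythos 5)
Your proposal is correct, but it follows a genuinely different route from the paper. The paper proves the proposition at the level of Bellman equations: it first verifies that $z(n)=v(n/N)$ satisfies the discrete Bellman equation (restricting the continuous maximization to integer bets gives one inequality, and the fact that the maximal bet $a^\ast=\min\{n,N-n\}$ is an integer bet attaining the continuous optimum, via Corollary~\ref{blackwell}, gives the other), and then proves that the discrete Bellman equation has a \emph{unique} solution by an argument parallel to Lemma~\ref{unique}: the set of states maximizing $f(n)-z(n)$ is closed under the losing transition $n\mapsto n-a$, and since integer states strictly decrease, the sequence reaches $0$ in finitely many steps, forcing the gap to vanish. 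Your proof instead sandwiches $z(n)$ by a trajectory-coupling argument: the lower bound simulates the grid-preserving continuous optimal policy inside the discrete MDP, and the upper bound embeds every discrete policy into the continuous MDP, where its value is dominated by $v$. Both routes rest on Theorem~\ref{valuefunction} plus the optimality of the max-bet policy; yours avoids the uniqueness argument entirely and is probabilistically more transparent, while the paper's route additionally establishes that the discrete Bellman equation has no spurious solutions, a fact it reuses in its discussion of value iteration and Q-learning. One small repair: the optimality of $\pi(s)=\min\{s,1-s\}$ (i.e., that its value function \emph{equals} $v$, uniformly in $\gamma$) is Corollary~\ref{blackwell}, not Lemma~\ref{induction}; the lemma alone only asserts that this action attains $\argmax_{a\in G_{\ell+1}\cap\cA(s)}Q_v(s,a)$ at dyadic states, and the policy statement you need is exactly what the paper invokes in its own proof.
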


\begin{proof}
We first verify the Bellman equation. 
By the definition of $v(\cdot)$ we have
\begin{align*}
z(n) & = v(n/N)\\
& = \max_{0<a\leq \min\{n/N, 1-n/N\}}\ p\gamma\ v(n/N-a)+(1-p)\gamma\ v(n/N+a)\\
& \geq \max_{0<a\leq \min\{n/N, 1-n/N\}, Na\in \mathbb{N}}\ p\gamma\ v(n/N-a)+(1-p)\gamma\ v(n/N+a)\\
& = \max_{0<a\leq \min\{n, N-n\}, a\in \mathbb{N}}\ p\gamma\ z({n-a})+(1-p)\gamma\ z({n+a}).
\end{align*}
Meanwhile let $a^\ast=\min\{n, N-n\}$, Corollary \ref{blackwell} suggests that
\begin{align*}
z(n) & = v(n/N)\\
& = p\gamma\ v((n-a^\ast)/N)+(1-p)\gamma\ v((n+a^\ast)/N)\\
& = p\gamma\ z({n-a^\ast})+(1-p)\gamma\ v({n+a^\ast})\\
& \leq \max_{0<a\leq \min\{n, N-n\}, a\in \mathbb{N}}\ p\gamma\ z({n-a})+(1-p)\gamma\ z({n+a}).
\end{align*}
Therefore $z(n)=\max_{0<a\leq \min\{n, N-n\}, a\in \mathbb{N}}\ p\gamma\ z({n-a})+(1-p)\gamma\ z({n+a})$ as desired.

We then show that $z(n)=v(n/N)$ is the unique function that satisfies the Bellman equation. The proof is similar to the proof of Lemma \ref{unique}, but the arguments will be relatively easier, as both the state space and the action space are discrete. Let $f(n)$ also satisfy the Bellman equation. We desire to prove that $f(n)$ is identical to $z(n)$.

Define $\delta=\max_{1\leq n\leq N-1}\ f(n)-z(n)$. This maximum must exist as there are finitely many states. Then define the non-empty set $S=\{n\mid f(n)-z(n)=\delta, 1\leq n \leq N-1\}$. 
For any ${n^\prime}\in S$ and $a^\prime\in\argmax_{1\leq n \leq \min\{n^\prime, N-n^\prime\}}p\gamma\ f(n^\prime-a)+(1-p)\gamma\ f(n^\prime+a)$, we have
\begin{align*}
f(n^\prime) &= p\gamma\ f(n^\prime-a^\prime)+(1-p)\gamma\ f(n^\prime+a^\prime) \\
&\stackrel{(\varheart)}{\leq} p\gamma\ (z(n^\prime-a^\prime)+\delta)+(1-p)\gamma\ (z(n^\prime+a^\prime)+\delta) \\
&\leq p\gamma\ z(n^\prime-a^\prime)+(1-p)\gamma\ z(n^\prime+a^\prime)+\delta \\
&\leq z(n^\prime) + \delta \\
&= f(n^\prime).
\end{align*}
As the equality holds, by the equality of $(\varheart)$ we have $n^\prime-a^\prime\in S$ and $n^\prime+a^\prime\in S$.

Now we specify some $n_0\in S$ and $a_0\in\argmax_{1\leq n \leq \min\{n_0, N-n_0\}}p\gamma\ f(n_0-a)+(1-p)\gamma\ f(n_0+a)$. Then, we have $n_0-a_0\in S$. Denote $n_1=n_0-a_0$ and, recursively, $a_t\in\argmax_{1\leq n \leq \min\{n_t, N-n_t\}}p\gamma\ f(n_t-a)+(1-p)\gamma\ f(n_t+a)$ and $n_{t+1}=n_t-a_t$, $t=1,2,\dots$; Since $a_t\geq 1$ and $n_t\in\cN$, there must exist a $T$ such that $n_T=0$. Therefore, $\delta=f(n_T)-z(n_T)=0$.

By the same argument $\bar\delta=\max_{1\leq n\leq N-1}\ z(n)-f(n)=0$. Therefore, $z(n)$ and $f(n)$ are identical, as desired.

As $z(n)$ is the unique function that satisfies the Bellman equation, it is the optimal value function of the problem. 
\end{proof}

Proposition \ref{discretevaluefunction} indicates the discretization of the problems yields the discrete, exact evaluation of the continuous value function at $0, 1/N, \dots, 1$. If we omit the learning error, the plots on the book and by the open source implementation \citep{zhang2019python} are the evaluation of the fractal $v(s)$ at $0, 1/N, \dots, 1$. This explains the strange appearance of the curve in the figures.

\section{Setting}

We formulate the continuous Gambler's problem as a Markov decision process (MDP) with the state space $\cS=[0,1]$ and the action space $\cA(s)=(0,\min\set{s,1-s}]$, $s\in (0,1)$.
Here $s\in \cS$ represents the capital the gambler currently possesses and the action $a\in\cA(s)$ denotes the amount of bet. 
Without loss of generality, we have assumed that the bet amount should be less than or equal to $1-s$ to avoid the total capital to be more than $1$.
The consecutive state $s'$ transits to $s-a$ and $s+a$ with probability $p\geq 0.5$ and $1-p$ respectively. The process terminates if $s\in\set{0,1}$ and the agent receives an episodic reward $r=s$ at the terminal state. Let $0\leq \gamma\leq 1$ be the discount factor.

Let $f:[0,1]\to\RR$ be a real function. For $f(s)$ to be the optimal value function, the Bellman equation for the non-terminal and terminal states are 
\[
f(s) = \max_{a\in \cA(s)} p\gamma\ f(s-a) + (1-p)\gamma\ f(s+a)\ \  \text{for any}\ \ s\in (0,1), \tag{{A}}
\]
and
\[
f(0)=0,\ \ f(1)=1. \tag{{B}}
\]

It can be shown (later in Lemma \ref{unique} and Lemma \ref{thm:monotonicity}) that a function satisfying (AB) must be lower bounded by $0$. A reasonable upper bound is $1$, as the value function is the probability of the gambler eventually reaching the target, which must be between $0$ and $1$. It is also reasonable to assume the continuity of the value function at $s=0$. Otherwise an arbitrary small amount of starting capital will have at least a constant probability of reaching the target $1$.\footnote{This continuity assumption is only for a better organization of the settings. The Gambler's problem can be solved without this assumption by solving (AB), as described in Section \ref{sec:solvingbellman}.} Consequently the expectation of capital at the end of the game is greater than the starting capital, which contradicts $p\geq 0.5$. The bounded version (X) of the problem leads to the optimal value function.
\[
0\leq\gamma\leq 1,\ p>0.5,\ f(s)\leq 1\ \text{for all}\ s,\ f(s) \ \text{is continuous on} \ s=0. \tag{{X}}
\]
Respectively, the unbounded version (Y) of the problem leads to the solutions of the Bellman equation.
\[
0\leq\gamma\leq 1,\ p>0.5 \tag{{Y}}.
\]
The results extend for $p=0.5$ in general, except an extreme corner case of $\gamma=1$, $p=0.5$, where the monotonicity in Lemma \ref{thm:monotonicity} will not apply. This case ({Z}) involves arguments over measurability and the belief of Axiom of Choice, which we will discuss at the end of Section \ref{sec:analysis}.
\[
\gamma=1,\ p=0.5,\ f(s) \ \text{is unbounded} \tag{{Z}}.
\]

We are mostly interested in two settings: the first setting ({ABX}) and its solution in Theorem \ref{valuefunction}, discuss a set of necessary conditions of $f(s)$ being the optimal value function of the Gambler's problem. As we show later the solution of ({ABX}) is unique, this solution must be the optimal value function. The second setting ({ABY}) and its solutions in Proposition \ref{uniquegammalessthanone} and Theorem \ref{allsolutionbellman} discuss all the functions that satisfy the Bellman equation. These functions are the optimal points that value iteration and Q-learning algorithms may converge to. ({ABZ}) is interestingly connected to some foundations of mathematics like the belief of axioms, and is discussed in Theorem \ref{thm:negativef}.

\section{Analysis}
\label{sec:analysis}

\subsection{Analysis of the Gambler's Problem}
\label{sec:abc1}

In this section we show that $v(s)$ defined below is a unique solution of the system ({ABX}). Since the optimal state-value function must satisfy the system ({ABX}), $v(s)$ is the optimal state-value function of the Gambler's problem. This statement is rigorously proved in Theorem \ref{valuefunction}.

Let $0\leq\gamma\leq 1$ and $p> 0.5$. We define $v(1)=1$, and 
\begin{equation*}
v(s)=\sum_{i=1}^\infty (1-p)\gamma^i b_i\prod_{j=1}^{i-1}((1-p)+(2p-1)b_j) \tag{\ref{eqn:vs}}
\end{equation*}
for $0\leq s<1$, where $s={0.b_1b_2\dots b_\ell\dots}_{(2)}$ is the binary representation of $s$. It is obvious that the series converges for any $0\leq s<1$.

The notation $v(s)$ will always refer to the definition above in this paper and will not change with the context. We use the notation $f(s)$ to denote a general function or a general solution of a system, which varies according to the required properties.

Let the set of dyadic rationals be
\begin{equation}
G_\ell=\set{k2^{-\ell}\mid k\in \set{1,\dots,2^\ell-1}}
\end{equation}
such that $G_\ell$ is the set of numbers that can be represented by at most $\ell$ binary bits. 
%Also define $G_{\ell,s}=G_\ell\cap (0, \min(s, 1-s)]$ for any $0\leq s \leq 1$. 
The general idea to verify the Bellman equation (AB) is to prove 
\[
v(s)=\max_{a\in G_{\ell}\cap \cA(s)}\ (1-p)\gamma\ v(s+a)+p\gamma\ v(s-a)  \text{ for any } s \in G_{\ell}
\] 
by induction on $\ell=1,2,\dots$, and generalize this optimality to the entire interval $s\in (0,1)$. 

It then suffices to show the uniqueness of $v(s)$ that solves the system ({ABX}). This is proved by assuming the existence of a solution $f(s)$ and then deriving the identity $f(s)=v(s)$, conditioning on the Bellman property that $v(s)$ satisfies ({AB}). For presentation purposes, the uniqueness is discussed first.
%, in Lemma \ref{unique}, though it is depending on other lemmas. The proof of this lemma also carries the main idea leading to the theorem.

As an overview, Lemma \ref{unique}, \ref{thm:monotonicity}, and \ref{thm:continuity} describe the characterizations of the system ({ABX}). Claim \ref{claim:vdifference}, Lemma \ref{newa}, \ref{induction}, and \ref{continuous} describe the properties of $v(s)$.

\begin{lemma}[Uniqueness under existence]
\label{unique}
Let $f(s): [0,1]\to \RR$ be a real function. If $v(s)$ and $f(s)$ both satisfy ({ABX}), then $v(s)=f(s)$ for all $0\leq s \leq 1$.
\end{lemma}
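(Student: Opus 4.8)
The plan is to set $g = f - v$ and prove $\sup_{[0,1]} g \le 0$ and $\sup_{[0,1]}(-g)\le 0$ separately, which together force $f\equiv v$. Both functions are bounded (indeed $0\le v\le 1$ by its series, and $f\le 1$ by (X), so $g$ is bounded above), and $g(0)=g(1)=0$ by the boundary conditions (B). The engine of the proof is the pointwise comparison coming from (A): if $a_s$ attains the maximum in the Bellman equation for $f$ at an interior $s$, then since $v$ also satisfies (A) its value dominates the value of that same action, so
\[
g(s) \;\le\; p\gamma\, g(s-a_s) + (1-p)\gamma\, g(s+a_s),
\]
and symmetrically, using a maximizing action for $v$, one bounds $-g$.

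First I would dispose of the strictly discounted case $0\le\gamma<1$. Taking the supremum in the displayed inequality gives $\sup g \le \gamma \sup g$; since $\sup g \ge g(0)=0$, any $\sup g>0$ would yield $\sup g\le\gamma\sup g<\sup g$, a contradiction, so $\sup g\le 0$. The symmetric argument gives $\sup(-g)\le 0$, hence $f=v$. This is the familiar $\gamma$-contraction and is routine.

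The delicate case is $\gamma=1$, where the contraction factor degenerates, and here I would treat the two inequalities asymmetrically. For $v\le f$ I would exploit that $v$ satisfies the doubling recursions exactly: from the self-similarity \eqref{eqn:selfsim} with $\ell=1$ one has $v(s)=(1-p)v(2s)$ on $[0,\tfrac12)$ and $v(s)=(1-p)+p\,v(2s-1)$ on $[\tfrac12,1)$, while the maximum bet $a=\min\{s,1-s\}$ is a feasible (not necessarily optimal) action for $f$, so $f$ satisfies the same two identities with ``$\ge$''. Subtracting yields $h(s):=v(s)-f(s)\le c_s\,h(Ts)$, where $T$ is the binary-shift map and the factor $c_s\in\{1-p,\,p\}$ satisfies $c_s\le p<1$. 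A supremum argument then closes it: if $M:=\sup h>0$, pick $s$ with $h(s)>M-\varepsilon$; since $c_s>0$ this forces $h(Ts)\ge h(s)/p>(M-\varepsilon)/p$, which for $\varepsilon<M(1-p)$ exceeds $M$ and contradicts $h\le M$. Hence $v\le f$.

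The remaining direction $f\le v$ is the main obstacle, and it is exactly where the hypotheses of (X) — boundedness $f\le 1$ and continuity at $s=0$ — become indispensable, since without them the Bellman equation at $\gamma=1$ admits the spurious constant solutions $f\equiv C\ge1$ on $(0,1)$ of Theorem \ref{allsolutionbellman}. For $f$ I cannot invoke the doubling recursion, because its optimal action need not be the maximum bet, so the clean contraction is unavailable. My plan is a descent argument in the spirit of the discrete proof of Proposition \ref{discretevaluefunction}: from a point where $g=f-v$ is (nearly) maximal, the comparison inequality forces both Bellman children $s\pm a_s$ to be (nearly) maximal as well, and iterating along the decreasing child produces a sequence $s_{t+1}=s_t-a_{s_t}\downarrow s_\infty$ on which $g$ stays near its supremum. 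Equivalently, under the greedy policy for $f$ the process $g(s_t)$ is a bounded submartingale and the capital $s_t$ a bounded supermartingale, so both converge. The crux is to show this descent accumulates at $0$ rather than freezing at an interior point: continuity at $0$ gives $g(s_t)\to g(0)=0$ once $s_\infty=0$, contradicting a positive supremum. I expect the bulk of the work — and the place where (X) is genuinely used — to be establishing that no interior freezing with positive $g$ can survive under $f\le 1$ and continuity at $0$, thereby excluding the constant solutions and yielding $\sup g\le 0$, i.e.\ $f\le v$.
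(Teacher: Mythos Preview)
Your treatment of $\gamma<1$ and of the direction $v\le f$ at $\gamma=1$ is correct and matches the paper in spirit: the paper also dispatches $v\le f$ by plugging the maximal bet $a_0=\min\{s_0,1-s_0\}$ into the Bellman inequality for $f$, using that one child lands on the boundary where $v=f$, which yields $v(s_0)-f(s_0)\le p\gamma\,\bar\delta$ and hence a contradiction since $p\gamma<1$. Your doubling-shift version is a clean repackaging of the same idea.

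The genuine gap is exactly where you flag it: the direction $f\le v$ at $\gamma=1$. Your ``nearly maximal'' descent does not close, because the one-step comparison $g(s)\le p\,g(s-a_s)+(1-p)\,g(s+a_s)$ only gives $g(s_{t+1})\ge g(s_t)/p - (1-p)M/p$ on the decreasing child, so the slack $M-g(s_t)$ is multiplied by $1/p>1$ at each step and the lower bound becomes vacuous before the sequence reaches $0$. The submartingale reformulation is equivalent to asking that the $f$-greedy game terminate almost surely, which is precisely the interior-freezing problem you have not resolved; continuity of $f$ at $0$ alone does not force $s_\infty=0$.

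The paper's resolution is the idea you are missing. It first proves that under (AB) with $\gamma=1$, $p>0.5$ the function $f$ is monotone (Lemma~\ref{thm:monotonicity}) and hence continuous on $(0,1]$ (Lemma~\ref{thm:continuity}); with (X) this extends to $[0,1]$. Since $v$ is also continuous at $\gamma=1$ (Lemma~\ref{continuous}), $g=f-v$ is continuous on the compact interval and the supremum $\delta$ is \emph{attained} at some interior $s'$. At an exact maximizer the comparison inequality becomes an equality on both lines, which forces the $f$-optimal action $a'$ to satisfy $p\,v(s'-a')+(1-p)\,v(s'+a')=v(s')$, i.e.\ $a'$ is also $v$-optimal. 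Now the classification of $v$-optimal actions (Corollary~\ref{thm:newairrational} and Corollary~\ref{thm:policyirrational}) takes over: if $s'\notin\bigcup_\ell G_\ell$ the unique $v$-optimal action is the maximal bet, so the descent is the binary shift and hits $0$ after finitely many steps; if $s'\in G_\ell$ then every $v$-optimal action lies in $G_\ell$, so the strictly decreasing sequence stays in the finite set $G_\ell$ and again hits $0$. Either way $0\in S$, contradicting $g(0)=0<\delta$. In short, the missing ingredient is: continuity $\Rightarrow$ the sup is attained $\Rightarrow$ at the maximizer the $f$-greedy action is $v$-greedy $\Rightarrow$ the descent inherits $v$'s action structure and terminates.
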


\begin{proof}
We prove the lemma by contradiction. Assume that $f(s)$ is also a solution of the system such that $f(s)$ is not identical with $v(s)$ at some $s$. Define $\delta=\sup_{0<s<1} f(s)-v(s)$. As $f(2^{-1})\geq (1-p)\gamma\ f(1)+p\gamma\ f(0)=(1-p)\gamma=v(2^{-1})$, we have $\delta\geq 0$. 

We show that $\delta$ cannot be zero by contradiction. If $\delta$ is zero, as $v(s)$ and $f(s)$ are not identical, there exists an $s$ such that $f(s)<v(s)$. In this case, let $\bar\delta=\sup_{0<s<1} v(s)-f(s)$. Then we choose $\bar\epsilon=(1-p\gamma)\bar\delta$ and specify $s_0$ such that $v(s_0)-f(s_0)>\bar\delta-\bar\epsilon$. Let $a_0=\min\{s_0, 1-s_0\}$, we have 
\begin{align*}
v(s_0) &= (1-p)\gamma\ v(s_0-a_0)+p\gamma\ v(s_0+a_0) \\
& \leq (1-p)\gamma\ f(s_0-a_0)+p\gamma\ f(s_0+a_0)+p\gamma\bar\delta \\
& \leq f(s_0)+p\gamma\bar\delta.
\end{align*}
The above inequality is due to the fact that at least one of $s_0-a_0=0$ and $s_0+a_0=1$ must hold. Thus at least one of $v(s_0-a_0)-f(s_0-a_0)$ and $v(s_0+a_0)-f(s_0+a_0)$ must be zero. The inequality contradicts $v(s_0)-f(s_0)>\bar\delta-\bar\epsilon$. Hence, $\delta$ cannot be zero. We discuss under $\delta>0$ for the rest of the proof.

\textbf{Case (\Romannum{1}): $\bm{\gamma<1}$.} In this case, we choose $\epsilon=(1-\gamma)\delta$. By the definition of $\delta$ we specify $s_0$ such that $f(s_0)>v(s_0)+\delta-\epsilon$. In fact, the existence of $s_0$ is by the condition $\gamma<1$. Let $a_0\in\argmax_{a\in\cA(s_0)}p\gamma\  f(s_0-a)+(1-p)\gamma\ f(s_0+a)$. We have
\begin{align*}
f(s_0) & = p\gamma\ f(s_0-a_0)+(1-p)\gamma\ f(s_0+a_0) \\
& \leq p\gamma\ (v(s_0-a_0)+\delta)+(1-p)\gamma\ (v(s_0+a_0)+\delta) \\
& = p\gamma\ v(s_0-a_0)+(1-p)\gamma\ v(s_0+a_0)+\gamma\delta \\
& \leq v(s_0) + \delta-\epsilon.
\end{align*}
The inequality $f(s_0)\leq v(s_0) + \delta-\epsilon$ contradicts $f(s_0)>v(s_0)+\delta-\epsilon$. Hence, the lemma is proved for the case $\gamma<1$. 

\textbf{Case (\Romannum{2}): $\bm{\gamma=1}$.}
When there exists an $s^\prime$ such that $f(s^\prime)-v(s^\prime)=\delta$, we show the contradiction. Let $S=\{s|f(s)-v(s)=\delta, 0<s<1\}\neq \varnothing$. For any $s^\prime\in S$ and $a^\prime\in\argmax_{a\in\cA(s^\prime)}p\gamma\ f(s^\prime-a)+(1-p)\gamma\ f(s^\prime+a)$, we have
\begin{align*}
f(s^\prime) &= p\gamma\ f(s^\prime-a^\prime)+(1-p)\gamma\ f(s^\prime+a^\prime) \\
&= p\ f(s^\prime-a^\prime)+(1-p)\ f(s^\prime+a^\prime) \\
&\stackrel{(\varheart)}{\leq} p\ (v(s^\prime-a^\prime)+\delta)+(1-p)\ (v(s^\prime+a^\prime)+\delta) \\
& = p\ v(s^\prime-a^\prime)+(1-p)\ v(s^\prime+a^\prime)+\delta \\
& \stackrel{(\vardiamond)}{\leq} v(s^\prime) + \delta.
\end{align*}
Thus, the equality of $(\varheart)$ and $(\vardiamond)$ must hold. We specify $s_0\in S$, and by the equality of $(\varheart)$ we have $f(s_0-a_0) = v(s_0-a_0)+\delta$, thus $s_0-a_0\in S$. Let $s_1=s_0-a_0$, and we recursively specify an arbitrary $a_t\in\argmax_{a\in\cA(s_t)}(1-p)\gamma\ f(s_t+a)+p\gamma\ f(s_t-a)$ and $s_{t+1}=s_t-a_t$, for $t=1,2,\dots$, until $s_T=0$ for some $T$, or indefinitely if such an $s_T$ does not exist. If $s_T$ exists and the sequence $\{s_t\}$ terminates at $s_T=0$, then $f(s_T) = v(s_T)+\delta=\delta$ by $(\varheart)$, which contradicts the boundary condition $f(s_T)=f(0)=0$. 

We next show the existence of $T$. 
When there exists $t$ and $\ell$ such that $s_t\in G_\ell$, by Corollary \ref{thm:newairrational} we have $s_{t+1}\in G_\ell$ and inductively $s_{t^\prime} \in G_\ell$ for all $t^\prime\geq t$. 
Consider that $\{s_t\}$ is strictly decreasing and there are finitely many elements in $G_\ell$, $\{s_t\}$ cannot be infinite.
Otherwise $s_t\notin G_\ell$ for any $t,\ell \geq 1$. Then by Corollary \ref{thm:policyirrational} the uniqueness of the optimal action we have $s_{t+1}=2s_t-1$ if $s_t\geq \frac{1}{2}$, and $s_{t+1}=0$ if $s_t\leq \frac{1}{2}$. 
After finitely many steps of $s_{t+1}=2s_t-1$ we will have $s_t=0$ for some $t$.

It amounts to show the existence of $s^\prime$ such that $f(s^\prime)-v(s^\prime)=\delta$. By Lemma \ref{continuous} we have the continuity of $v(s)$. Lemma \ref{thm:monotonicity} indicates the monotonicity of $f(s)$ on $[0,1)$. The upper bound $f(s)\leq f(1)$ in ({X}) extends this monotonicity to the closed interval $[0,1]$. Then by Lemma \ref{thm:continuity} we have the continuity of $f(s)$ on $(0,1]$. By ({X}) this extends to $[0,1]$. Thus we have the continuity of $f(s)-v(s)$, and consequently the existence of $\max_{0\leq s^\prime\leq 1} f(s^\prime)-v(s^\prime)$. As $f(0)-v(0)=f(1)-v(1)=0$ and $\delta>0$, this maximum must be attained at some $s^\prime\in(0,1)$. Therefore we have the existence of $\max_{0< s^\prime< 1} f(s^\prime)-v(s^\prime)$, which concludes the lemma.
\end{proof}

\begin{lemma}[Monotonicity]
\label{thm:monotonicity}
Let $\gamma=1$ and $p>0.5$. If a real function $f(s)$ satisfies ({AB}) then $f(s)$ is monotonically increasing on $[0,1)$.
\end{lemma}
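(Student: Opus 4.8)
The plan is to first rewrite the Bellman equation (A) at $\gamma=1$ in a symmetric ``midpoint'' form. Since $\cA(s)=(0,\min\{s,1-s\}]$, the pairs $(s-a,s+a)$ with $a\in\cA(s)$ are exactly the pairs $(x,y)$ with $0\le x<y\le 1$ and $s=\tfrac{x+y}{2}$; hence (A) is equivalent to the statement that for all $0\le x<y\le 1$,
\[
f\!\left(\tfrac{x+y}{2}\right)\ \ge\ p\,f(x)+(1-p)\,f(y),
\]
with equality attained at the optimal bet. This is a weighted midpoint concavity in which the heavier weight $p>\tfrac12$ sits on the \emph{smaller} argument; a quick check shows increasing affine functions satisfy it while decreasing ones reverse it, so the asymmetry is precisely what should force monotonicity.

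The engine is an amplification step: if $f(x)>f(y)$ for some $x<y$ and $2y-x\le 1$, then applying the midpoint inequality to the pair $(x,\,2y-x)$, whose midpoint is $y$, gives $f(y)\ge p\,f(x)+(1-p)\,f(2y-x)$, hence $f(x)-f(2y-x)\ge \tfrac{1}{1-p}\bigl(f(x)-f(y)\bigr)$. Thus a strict ``inversion'' has its deficit amplified by the factor $\tfrac{1}{1-p}>1$ while the left endpoint is held fixed and the right endpoint is pushed outward. I would assume, toward a contradiction, that $f$ is not increasing on $[0,1)$, so that $D:=\sup\{f(x)-f(y):0\le x<y<1\}>0$, and choose a near-extremal pair $(x,y)$ with $f(x)-f(y)>D-\epsilon$. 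If $2y-x<1$ the amplification immediately produces a pair in $[0,1)$ with deficit exceeding $D$, a contradiction; so every near-extremal pair must be ``right-blocked,'' $2y-x\ge 1$.

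For right-blocked pairs I would fold against the right boundary using $f(1)=1$: the midpoint inequality on $(x,1)$ gives $f\!\left(\tfrac{x+1}{2}\right)\ge p\,f(x)+(1-p)$, and since $2y-x\ge1$ forces $m:=\tfrac{x+1}{2}\le y<1$, the pair $(m,y)$ is again a near-extremal inversion whose left endpoint has moved strictly toward $1$. Iterating this folding drives the left endpoints monotonically up to $1$, so after finitely many folds the right-blocking condition fails, at which point a single application of the amplification step yields a deficit exceeding $D$ — the desired contradiction. Making the last step airtight requires tracking that the deficits stay close to $D$ through the folding (a geometric recursion $d'\ge p\,d+(1-p)(1-f(y))$ with fixed point $1-f(y)$), which is routine bookkeeping.

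The main obstacle is finiteness: the extremal argument presumes $D<\infty$ (equivalently, that $f$ is not wildly unbounded near $1$), and the amplification alone cannot rule out an inversion whose deficit is already infinite. I would dispose of this exactly as in the companion uniqueness argument (Lemma~\ref{unique}, Case~II): use the doubling map $s\mapsto 2s$ on $[0,\tfrac12]$ and $s\mapsto 2s-1$ on $[\tfrac12,1]$, along which the boundary specializations of the midpoint inequality propagate any deficit while keeping the orbit inside the compact interval $[0,1]$; for points whose orbit reaches $\{0,1\}$ in finitely many steps the boundary conditions $f(0)=0$, $f(1)=1$ cap the value and close the argument, and the uniqueness-of-optimal-action facts invoked there pin the orbit to this doubling dynamics. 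This reduction to the boundary is the delicate part; once finiteness is secured, the amplification-and-folding argument above delivers monotonicity on $[0,1)$.
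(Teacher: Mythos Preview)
Your amplification step and the midpoint reformulation are correct, but the proof has a genuine gap exactly where you flag it: the finiteness of $D$. Your proposed remedy is circular. Lemma~\ref{unique}, Case~(\Romannum{2}) explicitly invokes Lemma~\ref{thm:monotonicity} (``Lemma~\ref{thm:monotonicity} indicates the monotonicity of $f(s)$ on $[0,1)$''), so you cannot borrow its machinery here. Moreover, the ``uniqueness-of-optimal-action facts'' you appeal to (Corollary~\ref{thm:newairrational}, Corollary~\ref{thm:policyirrational}) are statements about the specific function $v$ and its $Q_v$; they say nothing about where an arbitrary solution $f$ of ({AB}) attains its maximum, so they cannot pin the orbit of $f$ to the doubling map. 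Even the folding step is shakier than advertised: your recursion $d'\ge p\,d+(1-p)(1-f(y))$ needs $f(y)\le 1$ to keep the deficit near $D$, but without monotonicity or condition~({X}) there is no reason $f(y)\le 1$.

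The paper sidesteps the finiteness issue entirely. Starting from a single inversion $f(s_1)>f(s_2)$, it first \emph{bisects inward}: the midpoint inequality gives $f(s_2-2^{-\ell}\Delta s)-f(s_2)\ge p^{\ell}\Delta f$, producing inversions at arbitrarily small scale with a controlled deficit. Then it takes $k$ \emph{arithmetic} steps of size $2^{-\ell}\Delta s$ to the right of $s_2$, obtaining $f(s_2+k2^{-\ell}\Delta s)-f(s_2)\le -k\,p^{\ell}\Delta f$. Choosing $k=2^n$ and $\ell=n+n_0$ keeps the evaluation point fixed at $s_2+2^{-n_0}\Delta s$ while the bound $-(2p)^n p^{n_0}\Delta f\to-\infty$, contradicting $f$ being real-valued. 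No supremum, no boundedness assumption, no boundary folding. Your geometric amplification $(x,y)\mapsto(x,2y-x)$ doubles the gap and so can only be iterated $O(\log\tfrac{1}{y-x})$ times before hitting the boundary; the paper's bisection-then-arithmetic scheme trades scale for step count and gets $(2p)^n$ amplification at a \emph{fixed} point, which is what makes the argument close without any a priori control on $f$.
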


\begin{proof}
We prove the claim by contradiction. Assume that there exists $s_1<s_2$ where $f(s_1)>f(s_2)$. Denote $\Delta s=s_2-s_1>0$ and $\Delta f=f(s_1)-f(s_2)>0$. By induction we have
\[ f(s_2-2^{-\ell}\Delta s) - f(s_2) \geq p^\ell \Delta f \]
for an arbitrary integer $\ell\geq 1$.
Then when $s_2+2^{-\ell}\Delta s < 1$, by $f(s_2) \geq pf(s_2-2^{-\ell}\Delta s) + (1-p)f(s_2+2^{-\ell}\Delta s)$,
\begin{align*}
f(s_2+2^{-\ell}\Delta s) & \leq \frac{1}{1-p}f(s_2) - \frac{p}{1-p}f(s_2-2^{-\ell}\Delta s)\\
& = f(s_2) + \frac{p}{1-p}(f(s_2)-f(s_2-2^{-\ell}\Delta s))\\
& \leq f(s_2) + f(s_2)-f(s_2-2^{-\ell}\Delta s).
\end{align*}
This concludes $f(s_2+2^{-\ell}\Delta s) - f(s_2) \leq f(s_2)-f(s_2-2^{-\ell}\Delta s)$. By induction we have \[f(s_2+k2^{-\ell}\Delta s) - f(s_2+(k-1)2^{-\ell}\Delta s) \leq f(s_2+(k-1)2^{-\ell}\Delta s)-f(s_2-(k-2)2^{-\ell}\Delta s)\] 
for $k=1,2,\dots$, when $s_2+k2^{-\ell}\Delta s < 1$. We sum this inequality over $k$ and get
\begin{align*}
f(s_2+k2^{-\ell}\Delta s) - f(s_2) & \leq k(f(s_2)-f(s_2-2^{-\ell}\Delta s))\\
& \leq -kp^\ell\Delta f.
\end{align*}
By letting $k=2^{n}$, $\ell=n+n_0$, $s_2+2^{-n_0}\Delta s<1$, and $n\rightarrow +\infty$, we have $s_2+k2^{-\ell}\Delta s < 1$ and $-kp^\ell\Delta f\rightarrow -\infty$. The arbitrarity of $n$ indicates the non-existence of $f(s_2+k2^{-\ell}\Delta s)$, which contradicts the existence of the solution $f(\cdot)$. 
\end{proof}

\begin{lemma}[Continuity]
\label{thm:continuity}
Let $\gamma=1$ and $p\geq 0.5$. If a real function $f(s)$ is monotonically increasing on $(0,1]$ and it satisfies ({AB}), then $f(s)$ is continuous on $(0,1]$.
\end{lemma}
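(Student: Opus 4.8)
The plan is to exploit monotonicity to reduce the claim to the vanishing of jumps, and then to kill each jump using a single well-chosen instance of the Bellman inequality. Since $f$ is monotonically increasing on $(0,1]$, its one-sided limits $f(c^-)$ and $f(c^+)$ exist at every point, with $f(c^-)\le f(c)\le f(c^+)$, and every discontinuity is a jump. Hence it suffices to prove $f(c^-)=f(c^+)$ for each interior $c\in(0,1)$ and $f(1^-)=f(1)$ at the right endpoint. The only tool I need is the weak form of (A): dropping the maximum, $f(s)\ge p\,f(s-a)+(1-p)\,f(s+a)$ holds for every $s\in(0,1)$ and every feasible $a\in\cA(s)$ (here $\gamma=1$).

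For an interior point $c\in(0,1)$, I would apply this inequality along a family of triples that straddle $c$ asymmetrically. Fix a small $\delta>0$ and take $s=c-\delta$ with $a=2\delta$; for $\delta<\min\{c/3,\,1-c\}$ one checks $a\in\cA(s)=(0,\min\{s,1-s\}]$, so the inequality applies and reads $f(c-\delta)\ge p\,f(c-3\delta)+(1-p)\,f(c+\delta)$. Letting $\delta\to0^+$ and using that $c-\delta,\,c-3\delta\uparrow c$ while $c+\delta\downarrow c$, the one-sided limits give $f(c^-)\ge p\,f(c^-)+(1-p)\,f(c^+)$. Since $1-p>0$, this rearranges to $f(c^-)\ge f(c^+)$, which together with monotonicity ($f(c^-)\le f(c^+)$) forces $f(c^-)=f(c)=f(c^+)$, i.e.\ continuity at $c$. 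The point of the asymmetric choice is to place the left limit on the midpoint (the small side of the inequality) while one endpoint reaches the right limit, so that a genuine upward jump would violate the inequality.

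At the right endpoint $s=1$ the straddle is unavailable, since a feasible triple cannot place a point beyond $1$; here I would instead use the boundary value. Taking $s=1-\delta$ and the maximal feasible action $a=\delta$ lands $s+a$ exactly on $1$, so the inequality gives $f(1-\delta)\ge p\,f(1-2\delta)+(1-p)\,f(1)=p\,f(1-2\delta)+(1-p)$. Letting $\delta\to0^+$ yields $f(1^-)\ge p\,f(1^-)+(1-p)$, hence $f(1^-)\ge 1=f(1)$, and monotonicity gives the reverse inequality, so $f$ is left-continuous at $1$. I expect the main (minor) obstacle to be precisely this endpoint case together with the bookkeeping of feasibility $a\in\cA(s)$ for small $\delta$; the substantive idea is the asymmetric straddle, after which everything reduces to one rearrangement of the Bellman inequality and uses $p\in[0.5,1)$ only through $1-p>0$.
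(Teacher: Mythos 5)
Your proof is correct, and it rests on the same key device as the paper's: applying the weak form of (A), $f(s)\ge p\,f(s-a)+(1-p)\,f(s+a)$, to a triple that straddles the suspected discontinuity asymmetrically, so that the midpoint sits on the lower side of the jump. Where you diverge is in how the contradiction is extracted. The paper argues by contradiction and iteration: assuming a jump of size $\delta$ at $s'$, it applies the straddle at geometric scales $\epsilon/4^k$ to show each step gains a fixed increment of order $(1-p)\delta$, and summing infinitely many such increments contradicts the boundedness of $f$ (which follows from monotonicity). You instead invoke the existence and finiteness of one-sided limits of a monotone real-valued function, pass to the limit $\delta\to 0^+$ in a single instance of the inequality, and read off $f(c^-)\ge p\,f(c^-)+(1-p)\,f(c^+)$, hence $f(c^-)\ge f(c^+)$; monotonicity gives the reverse. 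This buys you two things: there is no iteration or accumulation argument to book-keep, and the endpoint $s=1$ is handled by the same one-line limit using $f(1)=1$ from (B), whereas the paper's endpoint treatment is terse to the point of being hard to parse. Both arguments use $1-p>0$ in the same essential way (the lemma is false at $p=1$, e.g.\ $f\equiv 0$ on $[0,1)$, $f(1)=1$), so your explicit flag of that hypothesis matches an assumption the paper leaves implicit. Your feasibility bookkeeping ($\delta<\min\{c/3,1-c\}$ interior, $a=\delta\le\min\{1-\delta,\delta\}$ at the endpoint) is also correct.
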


\begin{proof}
We show the continuity by contradiction. Suppose that there exists a point $s^\prime\in (0,1)$ such that $f(s)$ is discontinuous at $s^\prime$, then there exists $\epsilon,\delta>0$ where $f(s'+\epsilon_1)-f(s'-\epsilon_2)\geq \delta$ for any $\epsilon_1+\epsilon_2=\epsilon$.
Then, by
\[
f(s^\prime-\frac{1}{4}\epsilon) \geq p\ f(s^\prime-\epsilon)+(1-p)\ f(s^\prime+\frac{2}{4}\epsilon),
\]
we have
\[
f(s^\prime-\frac{1}{4}\epsilon)-f(s^\prime-\epsilon) \geq (1-p)\delta/p.
\]
Similarly, for $k=1,2,\dots$,
\[
f(s^\prime-\frac{1}{4^k}\epsilon) - f(s^\prime-\frac{1}{4^{k-1}}\epsilon) \geq   (1-p)\delta/p.
\]
Let $k>((1-p)\delta/p)^{-1}$, we have $f(s^\prime-\frac{1}{4^k}\epsilon)-f(s^\prime-\epsilon)\geq 1$. This contradicts with the fact that $f(s)$ is bounded between $0$ and $1$. The continuity follows on $(0,1)$.

If the function is discontinuous on $s=1$, then there exists $\epsilon, \delta>0$ where $f(1)-f(1-\epsilon_1)\geq \delta$ for any $\epsilon_1\leq\epsilon$. The same argument holds by observing 
\[
f(1-\frac{1}{2^{k-1}}\epsilon) \geq p\ f(1-\frac{1}{2^k}\epsilon) \geq f(1).
\]
The lemma follows.
\end{proof}

When $f(s)$ is only required to be monotonically increasing on $(0,1)$, the continuity still holds but only on $(0,1)$.

For simplicity define
\begin{align}
Q_v(s,a) = p \gamma \ v(s-a) + (1-p) \gamma \ v(s+a)  .
\end{align}
As $v(s)$ is the optimal state-value function (to be proved later in Theorem \ref{valuefunction}), $Q_v(s,a)$ is in fact the optimal action-value function \citep{sutton2018reinforcement,szepesvari2010algorithms}.

Recall that $G_\ell$ is the set of dyadic rationals $\{k2^{-\ell}\mid k\in \{1,\dots,2^\ell-1\}\}$.

\begin{claim}
\label{claim:vdifference}
For any $s={0.b_1 b_2 \dots b_\ell}_{(2)} \in G_{\ell} \cup \set{0}$,
\begin{align}
\label{diffnewbit}
v\bracket{s+2^{-(\ell+1)}} - v(s) = (1-p)\gamma^{\ell+1}\prod_{j=1}^{\ell}\bracket{(1-p)+(2p-1)b_j} \le (1-p) p^\ell \gamma^{\ell+1} .
\end{align}

For any $s={0.b_1 b_2 \dots b_k}_{(2)} \in G_{\ell}$ with $b_k=1$ and $1 \le k\le \ell$,
\begin{align}
\label{diffexistbit}
v(s) - v\bracket{s-2^{-(\ell+1)}} \ge p^{\ell-k+1}(1-p)\gamma^{\ell+1}\prod_{j=1}^{k-1}\bracket{(1-p)+(2p-1)b_j} .
\end{align}
Also, 
\begin{equation}
\label{diffat1}
v(1) - v\bracket{1-2^{-(\ell+1)}} \ge p^{\ell+1} \gamma^{\ell+1} .
\end{equation}
The equality of \eqref{diffexistbit} and \eqref{diffat1} holds if and only if $\gamma=1$.
\end{claim}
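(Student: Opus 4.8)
The plan is to prove all three displays directly from the series definition \eqref{eqn:vs}, since each relates the values of $v$ at finitely-bit dyadic rationals whose binary expansions are fully explicit. Throughout I would write $P_m:=\prod_{j=1}^m\bracket{(1-p)+(2p-1)b_j}$, observing that each factor equals $1-p$ when $b_j=0$ and $p$ when $b_j=1$; since $p>1-p$, this gives $P_m\le p^m$. For \eqref{diffnewbit} the argument is immediate: $s+2^{-(\ell+1)}$ has binary expansion $0.b_1\cdots b_\ell 1$, agreeing with $s$ in its first $\ell$ bits and carrying a single extra $1$ in position $\ell+1$. Substituting into \eqref{eqn:vs}, the terms with index $i\le\ell$ cancel and only the $(\ell+1)$-st term of $v(s+2^{-(\ell+1)})$ survives, yielding exactly $(1-p)\gamma^{\ell+1}P_\ell$; the bound $P_\ell\le p^\ell$ then gives the stated inequality.

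The bulk of the work is \eqref{diffexistbit} and \eqref{diffat1}, which I would handle with one shared computation. First I would pin down the expansion of the lower neighbor by a borrow calculation: since $b_k=1$ while bits $k+1,\dots,\ell+1$ of $s$ all vanish, subtracting $2^{-(\ell+1)}$ flips bit $k$ to $0$ and turns bits $k+1,\dots,\ell+1$ into $1$'s, so $s-2^{-(\ell+1)}=0.b_1\cdots b_{k-1}0\underbrace{1\cdots1}_{\ell+1-k}$. Plugging both expansions into \eqref{eqn:vs}, the first $k-1$ terms agree and cancel; the term of $v(s)$ at index $k$ contributes $(1-p)\gamma^kP_{k-1}$ and its higher terms vanish, while the surviving terms of $v(s-2^{-(\ell+1)})$ at indices $k+1,\dots,\ell+1$ each carry the product $P_{k-1}(1-p)p^{\,i-1-k}$ (the factor at $j=k$ being $1-p$, those at $j=k+1,\dots,i-1$ being $p$), so they form a geometric series.

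After factoring out the positive quantity $(1-p)\gamma^kP_{k-1}$, the inequality \eqref{diffexistbit} reduces to
\[
1-(1-p)\gamma\,\frac{1-r^{M}}{1-r}\ \ge\ r^{M},\qquad r:=\gamma p,\quad M:=\ell+1-k\ge1 .
\]
The crux is the elementary identity $1-r-(1-p)\gamma=1-\gamma$, which rewrites the difference of the two sides as $(1-r^{M})\,\dfrac{1-\gamma}{1-\gamma p}$. When $\gamma<1$ we have $r=\gamma p<1$, hence $1-r^M>0$ and $1-\gamma>0$, so the difference is strictly positive; when $\gamma=1$ it is exactly zero. This proves \eqref{diffexistbit} together with its equality-iff-$\gamma=1$ clause. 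Display \eqref{diffat1} is the same computation with the all-ones neighbor $1-2^{-(\ell+1)}=0.\underbrace{1\cdots1}_{\ell+1}$ and the separately defined value $v(1)=1$: here every factor in the product is $p$, the series is again geometric, and one lands on the case $M=\ell+1$ of the identity above, giving $v(1)-v(1-2^{-(\ell+1)})=r^{\ell+1}+(1-r^{\ell+1})\frac{1-\gamma}{1-\gamma p}\ge(\gamma p)^{\ell+1}$, with equality exactly when $\gamma=1$.

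The only place where genuine care is needed is the borrow step that fixes the exact bit pattern of $s-2^{-(\ell+1)}$, and the attendant bookkeeping of which factors of the product $P$ become $1-p$ versus $p$ as the index $i$ advances. Once the two binary expansions are correct, everything collapses to the single geometric identity driven by $1-r-(1-p)\gamma=1-\gamma$, and I expect no further obstacle beyond this indexing.
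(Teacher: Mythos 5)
Your proposal is correct and follows essentially the same route as the paper's proof: the same borrow computation giving $s-2^{-(\ell+1)}={0.b_1\dots b_{k-1}0\,1\dots 1}_{(2)}$, the same geometric-series summation, and a key step equivalent to the paper's bound $(1-p)\gamma\le 1-\gamma p$ (your identity $1-\gamma p-(1-p)\gamma=1-\gamma$ is just that bound written as an equality of the slack). The only difference is presentational: you spell out \eqref{diffnewbit} and \eqref{diffat1}, which the paper dismisses as immediate from the definition, and your explicit factorization $(1-r^{M})\frac{1-\gamma}{1-\gamma p}$ makes the equality-iff-$\gamma=1$ clause transparent.
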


\begin{proof}
Inequality \eqref{diffnewbit} and \eqref{diffat1} are obtained by the definition of $v(s)$. To derive inequality \eqref{diffexistbit}, denote $k = \max\set{1\le i \le \ell: b_i=0}$ and then 
\begin{align*}
& v(s) - v\bracket{s-2^{-(\ell+1)}} \\ 
=\ & (1-p) \gamma^{k}\prod_{j=1}^{k-1}((1-p)+(2p-1)b_j) \\
& - \sum_{i=k+1}^{\ell+1} (1-p) \gamma^i \cdot 1 \cdot \prod_{j=1}^{k-1}((1-p)+(2p-1)b_j) \cdot (1-p) \cdot \prod_{j=k+1}^{i-1} p \\
=\ & (1-p) \gamma^{k} \prod_{j=1}^{k-1}((1-p)+(2p-1)b_j) \cdot \bracket{1 - (1-p)\sum_{i=k+1}^{\ell+1} \gamma^{i-k} p^{i-k-1}}\\
=\ & (1-p) \gamma^{k} \prod_{j=1}^{k-1}((1-p)+(2p-1)b_j) \cdot \bracket{1 - (1-p)\gamma \frac{1-(\gamma p)^{\ell-k+1}}{1-\gamma p}}\\
\ge\ & (1-p) \gamma^{k} \prod_{j=1}^{k-1}((1-p)+(2p-1)b_j) \cdot \bracket{1 - \bracket{1-(\gamma p)^{\ell-k+1}} }\\
=\ & (1-p) p^{\ell-k+1} \gamma^{\ell+1} \prod_{j=1}^{k-1}((1-p)+(2p-1)b_j).
\end{align*}
The arguments are due to the fact that $s-2^{-(\ell+1)} = {0.b_1 b_2 \dots b_{k-1} 0_k 1_{k+1} \dots 1_{\ell+1}}_{(2)}$ and the inequality is by $(1-p)\gamma \le 1-\gamma p$.
\end{proof}

\begin{lemma}
\label{newa}
Let $\ell\geq 1, 0<\gamma \le 1, 0.5 \le p <1$.    
For any $s\in G_{\ell}$, 
\begin{align*}
\max_{a\in (G_{\ell+1}\setminus G_{\ell})\cap \cA(s)}\ Q_v(s,a) \le \max_{a\in G_{\ell}\cap \cA(s)}\ Q_v(s,a) .
\end{align*}
\end{lemma}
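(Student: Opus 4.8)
The plan is to fix $s\in G_\ell$ and, for each fine action $a\in(G_{\ell+1}\setminus G_\ell)\cap\cA(s)$, compare $a$ against the two coarse actions obtained by rounding it to the grid $G_\ell$, namely $a_-=a-2^{-(\ell+1)}$ and $a_+=a+2^{-(\ell+1)}$. Writing $a=(2m+1)2^{-(\ell+1)}$, both $a_\pm$ are multiples of $2^{-\ell}$; since $\min\{s,1-s\}$ is itself a multiple of $2^{-\ell}$, the odd numerator $2m+1$ forces $a_+\le\min\{s,1-s\}$, so $a_+\in G_\ell\cap\cA(s)$ always, while $a_-\in G_\ell\cap\cA(s)$ exactly when $m\ge1$. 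It then suffices to prove the per-action bound $Q_v(s,a)\le\max\{Q_v(s,a_-),Q_v(s,a_+)\}$ for $m\ge1$, to treat the single boundary action $a=2^{-(\ell+1)}$ (where $a_-=0$) separately, and finally to take the maximum over $a$.

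To prove the per-action bound I would expand the two differences. Both $s-a$ and $s+a$ are odd multiples of $2^{-(\ell+1)}$, and a direct computation gives $Q_v(s,a_+)-Q_v(s,a)=\gamma[-p\,I_2+(1-p)J_2]$ and $Q_v(s,a_-)-Q_v(s,a)=\gamma[p\,I_1-(1-p)J_1]$, where $I_1,I_2$ are the upper and lower one-step increments of $v$ at scale $2^{-(\ell+1)}$ across $s-a$, and $J_1,J_2$ the lower and upper increments across $s+a$. Claim \ref{claim:vdifference} evaluates $I_2$ and $J_1$ exactly through \eqref{diffnewbit}, and lower-bounds $I_1$ and $J_2$ through \eqref{diffexistbit}, with equality precisely at $\gamma=1$. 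Thus the goal reduces to the dichotomy that at least one of $(1-p)J_2\ge p\,I_2$ or $p\,I_1\ge(1-p)J_1$ holds.

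The clean step is that, after substituting the Claim's expressions and cancelling the common factor $(1-p)\gamma^{\ell+1}$, the two lower-bound inequalities $p\,I_1^{\mathrm{lb}}\ge(1-p)J_1$ and $(1-p)J_2^{\mathrm{lb}}\ge p\,I_2$ become entirely free of $\gamma$: they are combinatorial statements about the binary expansions of $s\pm a_\pm$, with the exponents of $p$ and $1-p$ governed by bit-counts and trailing-zero positions. Since $I_1\ge I_1^{\mathrm{lb}}$ and $J_2\ge J_2^{\mathrm{lb}}$ always hold, it suffices to establish this $\gamma$-free dichotomy, and being $\gamma$-free it is enough to check it at $\gamma=1$, where the lower bounds are tight and the dichotomy coincides with the original one. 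At $\gamma=1$ the midpoint averaging identity $v(x_m)=p\,v(x_-)+(1-p)\,v(x_+)$ for a dyadic midpoint $x_m$ with grid neighbors $x_\pm$ (which follows from the definition of $v$, equivalently from Claim \ref{claim:vdifference} at $\gamma=1$) lets me expand $v(s\mp a)$ and obtain the exact identity $Q_v(s,a)=p\,Q_v(s,a_+)+(1-p)\,Q_v(s,a_-)-(1-p)(2p-1)\,[v(s+a_+)-v(s+a_-)]$. Because $v$ is increasing and $p\ge\tfrac12$, the last term is nonpositive, so $Q_v(s,a)\le p\,Q_v(s,a_+)+(1-p)\,Q_v(s,a_-)\le\max\{Q_v(s,a_-),Q_v(s,a_+)\}$, which is the dichotomy.

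The boundary action $a=2^{-(\ell+1)}$ I would handle by the same reduction to $\gamma=1$, comparing directly with $a_+=2^{-\ell}$; the only extra ingredient is to bound $v(s)$ by $Q_v(s,2^{-\ell})$ when $s$ has a trailing zero at position $\ell$, which I expect to get from a one-level-down instance of the midpoint identity. I anticipate that the main obstacle is exactly the passage from $\gamma=1$ to $\gamma<1$: once $v$ becomes discontinuous on the dyadic rationals, the midpoint identity acquires a correction term of the form $(1-\gamma)\,v(\cdot)$, so the transparent convexity computation is only valid at $\gamma=1$. The device that rescues the argument is that, after Claim \ref{claim:vdifference} is invoked, the decisive inequalities are homogeneous and $\gamma$-free, reducing the delicate analytic case $\gamma<1$ to the algebraic case $\gamma=1$; making that reduction airtight — tracking which increments are exact versus merely bounded, and in which direction the slack points — is where the care is needed.
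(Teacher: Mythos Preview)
Your main-case argument (the action $a=(2m+1)2^{-(\ell+1)}$ with $m\ge 1$, compared to its two coarse neighbours $a_\pm$) is correct, and it takes a genuinely different route from the paper. Both you and the paper invoke Claim~\ref{claim:vdifference} and observe that, after cancelling the common factor $(1-p)\gamma^{\ell+1}$, the two target inequalities are $\gamma$-free. At that point the paper verifies the dichotomy by a direct bit-count argument (showing that at least one of the two integer inequalities $M_c+k\ge M_d+k'$ and $M_c+k'+2\ge M_d+k$ must hold), whereas you recognise that the $\gamma$-free statement coincides with the $\gamma=1$ instance and dispatch it with the clean identity
\[
Q_v(s,a)=p\,Q_v(s,a_+)+(1-p)\,Q_v(s,a_-)-(1-p)(2p-1)\bigl[v(s+a_+)-v(s+a_-)\bigr],
\]
which holds at $\gamma=1$ by two applications of the midpoint rule $v(x_m)=p\,v(x_-)+(1-p)\,v(x_+)$. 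This is more conceptual than the paper's computation and explains \emph{why} the dichotomy holds.

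There is, however, a real gap in the boundary case $a=2^{-(\ell+1)}$. Your plan is to compare to $a_+=2^{-\ell}$ alone, but this comparison can fail: take $\gamma=1$, $p=0.6$, $s=\tfrac12\in G_3$, so $b_3=0$; then $Q_v(\tfrac12,\tfrac1{16})\approx 0.3635>0.352=Q_v(\tfrac12,\tfrac18)$. Your stated ``extra ingredient'', bounding $v(s)$ by $Q_v(s,2^{-\ell})$ when $b_\ell=0$, goes the wrong way: for such $s$ one has $v(s)>Q_v(s,2^{-\ell})$ strictly (in the example $v(\tfrac12)=0.4>0.352$), so neither a one-level-down midpoint identity nor any bound of $v(s)$ in terms of $Q_v(s,2^{-\ell})$ will yield $Q_v(s,2^{-(\ell+1)})\le Q_v(s,2^{-\ell})$. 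The correct comparison action when $s$ has trailing zeros is $2^{-m'}$, where $m'$ is the position of the last $1$-bit of $s$: in the example only $a'=2^{-1}$ among $G_3\cap\cA(\tfrac12)$ beats $2^{-(\ell+1)}$. At $\gamma=1$ your identity does salvage this, since iterating the midpoint rule gives $v(s)=Q_v(s,2^{-m'})$ and hence
\[
Q_v(s,2^{-(\ell+1)})=p\,Q_v(s,2^{-\ell})+(1-p)\,Q_v(s,2^{-m'})-(1-p)(2p-1)\bigl[v(s+2^{-\ell})-v(s)\bigr]\le\max\{Q_v(s,2^{-\ell}),Q_v(s,2^{-m'})\}.
\]
But your Claim-\ref{claim:vdifference} reduction device only applies to comparisons between \emph{adjacent} actions $a$ and $a\pm 2^{-(\ell+1)}$, so it does not transport this non-adjacent comparison from $\gamma=1$ to $\gamma<1$. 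The paper closes this case (its Case~(\Romannum{3}), sub-case $s={0.b_1\cdots 1_{m'}0\cdots 0_\ell}_{(2)}$) by a direct estimate of $Q_v(s,2^{-m'})-Q_v(s,2^{-(\ell+1)})$ valid for all $\gamma\le 1$, and something of that sort is still needed here.
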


\begin{proof}
\textbf{Case (\Romannum{1}):} First we prove that for $\ell>1$, any $s\in G_\ell$, $a\in G_\ell \cap \cA(s)$, $a>2^{-\ell}$, and $s+a<1$,
\begin{align*}
Q_v\bracket{s,a-2^{-(\ell+1)}} \le \max\set{Q_v(s,a), Q_v\bracket{s,a-2^{-\ell}}} .
\end{align*}
Note that in this case, $a-2^{-(\ell+1)} \in G_{\ell+1} \cap \cA(s)$ and $a-2^{-\ell} \in G_{\ell} \cap \cA(s)$.

Let $s-a={0.c_1 c_2 \dots c_\ell}_{(2)} = {0.c_1 c_2 \dots 0_{k} 1_{k+1} \dots 1_{\ell}}_{(2)}$, where $k = \max\set{1\le i \le \ell: c_i=0}$ is the index of the last $0$ bit in $s-a$. Such $k$ must exist since $0 \le s-a \le 1 - 3\times 2^{-\ell}<1$. 
Similarly, let $s+a = {0.d_1 d_2\dots d_{\ell}}_{(2)} = {0.d_1 d_2 \dots 1_{k'}}_{(2)}$ where $k'= \max\set{1\le i \le \ell: d_i=1}$ is the index of the last $1$ bit in $s+a$. Such $k'$ must exist since $3\times 2^{-\ell} \le s+a <1$. Also, $s+a-2^{-(\ell+1)} = {0.d_1 d_2 \dots 0_{k'}1_{k'+1}\dots 1_{\ell+1}}_{(2)}$. 

To prove $Q_v\bracket{s,a-2^{-(\ell+1)}} \le Q_v(s,a)$, it is equivalent to proving
\begin{align*}
v(s+a) - v\bracket{s+a-2^{-(\ell+1)}} \ge \frac{p}{1-p} \bracket{v\bracket{s-a+2^{-(\ell+1)}} - v(s-a)} .
\end{align*}
Then by applying inequality \eqref{diffexistbit}, \eqref{diffat1} and inequality \eqref{diffnewbit} in Claim \ref{claim:vdifference} on the LHS and RHS respectively, it suffices to prove
\begin{align*}
p^{\ell-k'}(1-p)\prod_{j=1}^{k'-1} ((1-p)+(2p-1)d_j) &\ge \prod_{j=1}^\ell ((1-p)+(2p-1)c_j) \\
&= p^{\ell-k}(1-p) \prod_{j=1}^{k-1} ((1-p)+(2p-1)c_j) .
\end{align*}
Let $M_c = c_1 + \dots + c_{k-1}, M_d = d_1 + \dots + d_{k'-1}$ be the number of $1$s in $\set{c_1, \ldots, c_{k}}, \set{d_1, \ldots, d_{k'}}$ respectively. Then $Q_v\bracket{s,a-2^{-(\ell+1)}} \le Q_v(s,a)$ holds when $p=0.5$ or $p> 0.5, M_c + k \ge M_b + k'$.

To prove $Q_v\bracket{s,a-2^{-(\ell+1)}} \le Q_v\bracket{s,a-2^{-\ell}}$, it is equivalent to proving
\begin{align*}
    v\bracket{s-a+2^{-\ell}} - v\bracket{s-a+2^{-(\ell+1)}} \ge \frac{1-p}{p} \bracket{v\bracket{s+a-2^{-(\ell+1)}} - v\bracket{s+a-2^{-\ell}} }  .
\end{align*}
Note that $s-a+2^{-\ell} = {0.c_1 c_2 \dots 1_{k}}_{(2)}$ and $s+a-2^{-\ell} = {0.d_1 d_2 \dots 0_{k'} 1_{k'+1} \dots 1_{\ell}}_{(2)}$. Then by inequality \eqref{diffexistbit}, \eqref{diffat1} and inequality \eqref{diffnewbit} on the LHS and RHS respectively, it suffices to prove
\begin{align*}
    p^{\ell-k+2} \prod_{j=1}^{k-1} ((1-p)+(2p-1)c_j) \ge (1-p)^2 p^{\ell-k'} \prod_{j=1}^{k'-1} ((1-p)+(2p-1)d_j) .
\end{align*}
Then $Q_v\bracket{s,a-2^{-(\ell+1)}} \le Q_v\bracket{s,a-2^{-\ell}}$ holds when $p=0.5$ or $p> 0.5, M_c+k'+2\ge M_d+k$. 

As at least one of $M_c+k'+1 \geq M_d+k$ and $M_d + k \ge M_c + k' + 1$ holds, thus $Q_v\bracket{s,a-2^{-(\ell+1)}} < \max\set{Q_v(s,a), Q_v\bracket{s,a-2^{-\ell}}}$.

We cover two corner cases for the completeness of the proof.

\textbf{Case (\Romannum{2}):} Next we prove for $\ell \ge 1$, any $s\in G_\ell$, $a\in G_\ell \cap \cA(s)$ and $s+a=1$,
\begin{align*}
    Q_v\bracket{s,a-2^{-(\ell+1)}} \le Q_v(s,a) .
\end{align*}
Similar to above, it is equivalent prove
\begin{align*}
    v(1) - v\bracket{1-2^{-(\ell+1)}} \ge \frac{p}{1-p} \bracket{v\bracket{s-a+2^{-(\ell+1)}} - v(s-a)} .
\end{align*}
Note that by Claim \ref{claim:vdifference},
\begin{align*}
    v(1) - v\bracket{1-2^{-(\ell+1)}} &\ge p^{\ell+1} \gamma^{\ell+1} = \frac{p}{1-p} \cdot (1-p)p^\ell\gamma^{\ell+1}\\
    &\ge \frac{p}{1-p} \bracket{v\bracket{s-a+2^{-(\ell+1)}} - v(s-a)}  ,
\end{align*}
which concludes the proof.

\textbf{Case (\Romannum{3}):} Last we prove for $\ell > 1$, any $s\in G_{\ell}$ and $a=2^{-\ell}, s<1-2^{-\ell}$. 

When $s={0.b_1 b_2 \dots 0_{m} 1_{m+1} \dots 1_{\ell}}_{(2)}$ with $1\le m < \ell$, to prove
$Q_v\bracket{s,2^{-(\ell+1)}} \le Q_v\bracket{s,2^{-\ell}}$, it is equivalent to proving 
\begin{align*}
    v\bracket{s+2^{-\ell}} - v\bracket{s+2^{-(\ell+1)}} \ge \frac{p}{1-p} \bracket{v\bracket{s-a+2^{-(\ell+1)}} - v(s-a)} .
\end{align*}
In this case, $s+2^{-\ell} = {0.b_1 b_2 \dots 1_{m}}_{(2)}, s-2^{-\ell}={0.b_1 b_2 \dots 0_{m} 1_{m+1} \dots 1_{\ell-1}}_{(2)}$ and $M_c=M_d,k=k'=m$, thus $M_d+k\ge M_c+k'$, which concludes the proof similar to the first part of Case (\Romannum{1}).

When $s={0.b_1 b_2 \dots 1_{m'} 0_{m'+1} \dots 0_{\ell}}_{(2)}$ with $1 \le m' < \ell$, let $M_b=b_1+\dots b_{m'}$. Then,
\begin{align*}
& Q_v\bracket{s, 2^{-m'}} - Q_v\bracket{s, 2^{-(\ell+1)}}\\ 
=\ & (1-p)\gamma (v(s+2^{-m'})-v(s-2^{-m'})) - (1-p)\gamma (v(s)-v(s-2^{-m'}))\\
& - (1-p)\gamma (v(s+2^{-\ell})-v(s)) - p\gamma (v(s-2^{-\ell})-v(s-2^{-m'}))\\
\ge\ & (1-p)\gamma(p\gamma)^{M_b}((1-p)\gamma)^{m'-2-M_b}(1-p)\gamma - (1-p)\gamma (p\gamma)^{M_b} ((1-p)\gamma)^{m'-1-M_b}(1-p)\gamma \\
& - (1-p)\gamma (p\gamma)^{M_b+1} ((1-p)\gamma)^{\ell-2-M_b} (1-p)\gamma \\
& - p\gamma (p\gamma)^{M_b}((1-p)\gamma)^{m'-M_b}(1+(p\gamma)+\dots (p\gamma)^{\ell-m'-1})(1-p)\gamma\\
=\ & p^{M_b}(1-p)^{m'-M_b}\gamma^{m'} - p^{M_b}(1-p)^{m'+1-M_b}\gamma^{m'+1} - p^{M_b+1}(1-p)^{\ell-M_b}\gamma^{\ell+1}\\
& - p^{M_b+1}(1-p)^{m'+1-M_b}\gamma^{m'+2}(1-(p\gamma)^{\ell-m'})/(1-p\gamma)\\
\ge\ & p^{M_b}(1-p)^{m'-M_b}\gamma^{m'} - p^{M_b}(1-p)^{m'+1-M_b}\gamma^{m'+1} - p^{M_b+1}(1-p)^{\ell-M_b}\gamma^{\ell+1}\\
& - p^{M_b+1}(1-p)^{m'-M_b}\gamma^{m'+1}(1-(p\gamma)^{\ell-m'})\\
\ge\ & - p^{M_b+1}(1-p)^{\ell-M_b}\gamma^{\ell+1} - p^{M_b+1}(1-p)^{m'-M_b}\gamma^{m'+1}(-(p\gamma)^{\ell-m'})\\
\ge\ & 0. \tag*{\qedhere}
\end{align*}
\end{proof}

The arguments in the proof that either $M_c+k \geq M_d+k^\prime+1$ or $M_d+k^\prime \geq M_c+k$ must hold is tight for integers $M_c$ and $M_d$. This is the case for $a\in G_{\ell+1}\setminus G_\ell$. When $a\notin G_{\ell+1}$, this sufficient condition becomes even looser. The lemma imposes $G_{\ell}$ to be the only set of possible optimal actions, given $s\in G_{\ell}$.

\begin{corollary}
\label{thm:newairrational}
Let $\ell\geq 1$. For any $s\in G_{\ell}$, 
\[\argmax_{a\in \cA(s)}\ Q_v(s,a) \subseteq G_{\ell}.\]
\end{corollary}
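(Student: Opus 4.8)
The plan is to bootstrap Lemma \ref{newa} from the single level $\ell$ to all finer dyadic levels, then pass from dyadic actions to arbitrary real actions by continuity, and finally upgrade the weak inequalities to strict ones so that no action outside $G_\ell$ can be a maximizer.

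First I would iterate Lemma \ref{newa}. Since $s\in G_\ell\subseteq G_{\ell'}$ for every $\ell'\ge \ell$, the lemma applies verbatim with $\ell'$ in place of $\ell$, giving
\[
\max_{a\in (G_{\ell'+1}\setminus G_{\ell'})\cap\cA(s)} Q_v(s,a) \le \max_{a\in G_{\ell'}\cap\cA(s)} Q_v(s,a).
\]
Because $G_{\ell'+1}=G_{\ell'}\cup(G_{\ell'+1}\setminus G_{\ell'})$, this says $\max_{G_{\ell'+1}\cap\cA(s)}Q_v = \max_{G_{\ell'}\cap\cA(s)}Q_v$, and an induction on $\ell'$ starting from $\ell$ yields $\max_{a\in G_m\cap\cA(s)} Q_v(s,a) = \max_{a\in G_\ell\cap\cA(s)} Q_v(s,a) =: V$ for every $m\ge\ell$. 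Thus among dyadic actions the optimum is already achieved inside $G_\ell$.

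Second I would extend this from dyadic actions to all of $\cA(s)$. By Lemma \ref{continuous}, $v$ is continuous, so $a\mapsto Q_v(s,a)=p\gamma\,v(s-a)+(1-p)\gamma\,v(s+a)$ is continuous on $\cA(s)$. Since $\bigcup_{m\ge\ell}G_m$ is dense in $[0,1]$, the supremum of a continuous function over $\cA(s)$ equals its supremum over dyadic actions, which by the previous step is $V$; as $V$ is attained at a point of $G_\ell$, I conclude $\sup_{a\in\cA(s)}Q_v(s,a)=V$ and the supremum is in fact a maximum attained on $G_\ell$.

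Finally I would rule out maximizers outside $G_\ell$. For a dyadic action $a\in G_m\setminus G_\ell$, Case (\Romannum{1}) of the proof of Lemma \ref{newa} in fact delivers the strict inequality $Q_v(s,a)<\max\{Q_v(s,a'),Q_v(s,a'')\}\le V$ for the two coarser actions produced there, so such $a$ cannot lie in the argmax. The remaining, and main, obstacle is a non-dyadic action $\tilde a$: density and continuity alone only force $Q_v(s,\tilde a)\le V$, not strictness. To close this I would re-run the bit-counting estimates of Lemma \ref{newa} for an action $\tilde a$ with an infinite binary expansion, where the sufficient condition ``$M_c+k\ge M_d+k'+1$ or $M_d+k'\ge M_c+k$'' becomes strictly satisfiable --- this is exactly the ``even looser'' situation noted in the remark following Lemma \ref{newa}. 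Equivalently, one may use the one-sided behaviour of $v$ at dyadic points encoded in Claim \ref{claim:vdifference}: at the optimal $a^\ast\in G_\ell$ both $s-a^\ast$ and $s+a^\ast$ are dyadic (or endpoints), and comparing the gaps \eqref{diffexistbit}--\eqref{diffat1} against \eqref{diffnewbit} shows that moving $a$ off $a^\ast$ decreases $v(s-a)$ on the loss side far faster than it increases $v(s+a)$ on the win side (infinite left derivative, zero right derivative at dyadics), so $Q_v(s,\cdot)$ strictly decreases away from $a^\ast$. Combining the three steps gives $\argmax_{a\in\cA(s)}Q_v(s,a)\subseteq G_\ell$.
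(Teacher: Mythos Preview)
Your overall architecture is reasonable and in the end recovers the paper's intended argument, but Step~2 contains a genuine slip that matters when $\gamma<1$. You invoke Lemma~\ref{continuous} to assert that $v$ is continuous, but that lemma says precisely the opposite of what you use: $v$ is continuous only at points \emph{not} in any $G_m$, and for $\gamma<1$ it is discontinuous on every dyadic rational. Hence $a\mapsto Q_v(s,a)$ need not be continuous on $\cA(s)$; it is discontinuous exactly at the dyadic actions (where $s\pm a$ are dyadic). The conclusion of Step~2 can still be rescued, but the argument must be rephrased: for a non-dyadic $\tilde a$ the points $s\pm\tilde a$ are non-dyadic, so $v$ is continuous there and $Q_v(s,\tilde a)=\lim_n Q_v(s,a_n)\le V$ along any dyadic sequence $a_n\to\tilde a$. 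Dyadic actions are already covered by Step~1. With this correction the weak inequality $\sup_{\cA(s)}Q_v(s,\cdot)=V$ goes through.

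As for the comparison with the paper: the paper does not go through your iteration-plus-continuity detour at all. It treats the corollary as a direct extension of the proof of Lemma~\ref{newa} (see the remark immediately preceding the corollary). The bit-counting inequalities in Cases~(\Romannum{1})--(\Romannum{3}) are written for $a\in G_{\ell+1}\setminus G_\ell$, but the only place the integrality of $M_c,M_d,k,k'$ is used is the dichotomy ``$M_c+k\ge M_d+k'$ or $M_d+k'+2\ge M_c+k$'', which holds for arbitrary $a\notin G_\ell$ with even more slack; the resulting inequality is strict for any such $a$. This is exactly your option~(a) in Step~3, and it handles dyadic and non-dyadic $a\notin G_\ell$ in one shot, obviating both Steps~1 and~2. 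Your option~(b), the derivative/gap comparison at a single optimal $a^\ast$, is only a local statement and does not by itself exclude a non-dyadic $\tilde a$ far from $a^\ast$ attaining $V$; you would still need option~(a) (or something equivalent) to finish.
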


Now we verify the Bellman property on $\bigcup_{\ell \ge 1} G_{\ell}$.

\begin{lemma}
\label{induction}
Let $\ell\geq 1$. For any $s\in G_{\ell+1}$, 
\[\min\{s,1-s\} \in \argmax_{a\in G_{\ell+1}\cap \cA(s)}Q_v(s,a). \]
\end{lemma}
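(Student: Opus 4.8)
The plan is to use the self-similarity identity \eqref{eqn:selfsim} to turn the statement into a Bellman-type inequality that can be carried by induction on $\ell$. The starting point is that the maximal bet already realizes the value exactly: applying \eqref{eqn:selfsim} with $\bar s=0$ gives $v(s)=(1-p)\gamma\,v(2s)$ for $s\le\tfrac12$, and with $\bar s=\tfrac12$ gives $v(s)=(1-p)\gamma+p\gamma\,v(2s-1)$ for $s\ge\tfrac12$; since $Q_v(s,\min\{s,1-s\})$ equals $(1-p)\gamma\,v(2s)$ when $s\le\tfrac12$ and $p\gamma\,v(2s-1)+(1-p)\gamma$ when $s\ge\tfrac12$, we obtain $Q_v(s,\min\{s,1-s\})=v(s)$ for every $s\in(0,1)$. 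Consequently $\min\{s,1-s\}$ lies in $\argmax_{a\in G_{\ell+1}\cap\cA(s)}Q_v(s,a)$ if and only if $Q_v(s,a)\le v(s)$ for all $a\in G_{\ell+1}\cap\cA(s)$, and it is this inequality that I would prove by induction on $\ell$, with induction hypothesis the level-$\ell$ Bellman inequality $Q_v(s',a')\le v(s')$ for all $s'\in G_\ell$ and $a'\in G_\ell\cap\cA(s')$.

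For the inductive step I would split each action $a\in G_{\ell+1}\cap\cA(s)$ according to whether $\{s-a,s+a\}$ stays within one half of $[0,1]$ or straddles the midpoint $\tfrac12$. For a non-crossing action, applying the relevant branch of \eqref{eqn:selfsim} to both $v(s-a)$ and $v(s+a)$ rewrites $Q_v(s,a)$ as $(1-p)\gamma\,Q_v(2s,2a)$ when both endpoints lie in $[0,\tfrac12]$ and as $(1-p)\gamma^2+p\gamma\,Q_v(2s-1,2a)$ when both lie in $[\tfrac12,1]$, where $2s$ (resp. $2s-1$) is a level-$\ell$ point and $2a$ a legal action at it. The same expansion gives $v(s)=(1-p)\gamma\,v(2s)$ (resp. $v(s)=(1-p)\gamma+p\gamma\,v(2s-1)$), so the induction hypothesis $Q_v(2s,2a)\le v(2s)$ (resp. $Q_v(2s-1,2a)\le v(2s-1)$) together with $(1-p)\gamma^2\le(1-p)\gamma$ yields $Q_v(s,a)\le v(s)$. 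This disposes of every non-crossing action, exhibits the maximal bet as the boundary case attaining equality, and subsumes the case $s\in G_\ell$ (for which one could instead quote Lemma \ref{newa} and Corollary \ref{thm:newairrational}).

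The real obstacle is the crossing actions, where the rescaled bet $2a$ overshoots the action set at the halved point and the induction hypothesis no longer applies. Here the behaviour splits on the sign of $s-\tfrac12$: for $s\le\tfrac12$ the inequality $Q_v(s,a)\le v(s)$ collapses to a superadditivity statement $v(x)+v(y)\le v(x+y)$ at dyadic arguments $x=2s-2a$, $y=2s+2a-1$ with $x+y=4s-1$, whereas for $s>\tfrac12$ it collapses to a $p$-weighted increment inequality that genuinely uses $p\ge\tfrac12$ (and $\gamma\le1$) and is not implied by superadditivity alone. I would establish both from the explicit increment bounds \eqref{diffnewbit}, \eqref{diffexistbit}, \eqref{diffat1} of Claim \ref{claim:vdifference}, leaning on the algebraic inequality $(1-p)\gamma\le1-\gamma p$ and the same comparison of the products $\prod_j((1-p)+(2p-1)b_j)$ by bit counts that drives the proof of Lemma \ref{newa}; the cleanest packaging is to strengthen the induction so that this superadditivity/increment-monotonicity of $v$ at level $\ell$ travels alongside the Bellman inequality. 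The base case $\ell=1$ reduces to $s=\tfrac12$, where $Q_v(\tfrac12,\tfrac12)=(1-p)\gamma\ge Q_v(\tfrac12,\tfrac14)$ is immediate from $1\ge(1-p)\gamma(1+2p\gamma)$. I expect this crossing analysis, and especially the $p$-weighted bound on the $s>\tfrac12$ side, to be the crux, since it is exactly where the asymmetry $p>1-p$ of the game enters and where monotonicity or superadditivity of $v$ alone does not suffice.
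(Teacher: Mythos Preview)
Your proposal is broadly correct and shares the inductive skeleton with the paper: both argue by induction on $\ell$, both use the self-similarity identities $v(s)=(1-p)\gamma\,v(2s)$ for $s\le\tfrac12$ and $v(s)=(1-p)\gamma+p\gamma\,v(2s-1)$ for $s\ge\tfrac12$, and both split the actions into ``non-crossing'' and ``crossing'' with respect to the midpoint. Your observation that $Q_v(s,\min\{s,1-s\})=v(s)$, reducing the claim to the Bellman inequality $Q_v(s,a)\le v(s)$, is exactly the reformulation the paper works with. The non-crossing case is handled the same way in both.

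The substantive difference is the crossing case. You reduce $Q_v(s,a)\le v(s)$ for $s<\tfrac12$ to superadditivity $v(x)+v(y)\le v(x+y)$ (this reduction is correct, once one expands $v(2s)=(1-p)\gamma+p\gamma\,v(4s-1)$), and for $s>\tfrac12$ to a dual $p$-weighted inequality, proposing to prove both from Claim~\ref{claim:vdifference} and carry them alongside the main induction. This is viable but requires establishing and maintaining additional structural facts about $v$.

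The paper sidesteps this entirely with a different algebraic move. For $s\in(\tfrac12,\tfrac34)$ and a crossing action it rewrites
\[
Q_v(s,a)=(1-p)\gamma\,Q_v\!\bigl(2s-\tfrac12,\,2a-\tfrac12\bigr)+(1-p)(2p-1)\gamma^2\,v(2s+2a-1)+(1-p)^2\gamma^2,
\]
with the symmetric formula for $s\in(\tfrac14,\tfrac12)$. The point of the shift $(s,a)\mapsto(2s-\tfrac12,2a-\tfrac12)$ is that both coordinates land back in $G_\ell$, so the induction hypothesis applies directly to the first term; the second term is monotone in $a$ by Claim~\ref{claim:vdifference}. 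Both pieces are maximized at the same endpoint $a=\min\{s,1-s\}$, and no superadditivity or increment-comparison lemma is ever invoked.

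So your route works but purchases the crossing case at the cost of auxiliary inequalities about $v$; the paper's shifted rescaling keeps everything inside the $Q_v$ framework and needs only monotonicity. The crux you flag on the $s>\tfrac12$ side---where superadditivity alone does not suffice---is precisely what the shift trick dissolves.
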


\begin{proof}
We prove the lemma by induction on $\ell$. When $\ell=1$, it is obvious since $G_{1}$ has only one element. The base case $\ell=2$ is also immediate by exhausting $a\in\{2^{-1}, 2^{-2}\}$ for $s=2^{-1}$. Now we assume that for any $s\in G_\ell$, $\min\{s,1-s\}\in\argmax_{a\in G_\ell\cap \cA(s)}Q_v(s,a)$. We aim to prove this lemma for $\ell+1$.

For $s\in G_\ell$, by Lemma~\ref{newa}, $\argmax_{a\in G_{\ell+1}\cap \cA(s)}Q_v(s,a)\subseteq G_\ell$. Then by the induction assumption, $\min\{s,1-s\}\in\argmax_{a\in G_{\ell}\cap \cA(s)}Q_v(s,a)\subseteq \argmax_{a\in G_{\ell+1}\cap \cA(s)}Q_v(s,a)$. Hence, the lemma holds for $s\in G_\ell$. We discuss under $s\in G_{\ell+1}\setminus G_\ell$ for the rest of the proof.

We start with two inductive properties of $v(s)$ to reduce the problem from $s\in G_{\ell+1}$ to $s'\in G_{\ell}$, where $s'$ is either $2s$ or $2s-1$. For any $s\geq 2^{-1}$, that is, $s={0.c_1c_2\dots c_{\ell+1}}_{(2)}\in G_{\ell+1}$ with $c_1=1$,
\begin{align*}
v(s) &= \sum_{i=1}^\ell (1-p)\gamma^i c_i\prod_{j=1}^{i-1}((1-p)+(2p-1)c_j) \\
&= (1-p)\gamma + \sum_{i=2}^\ell (1-p)\gamma^i c_i\prod_{j=1}^{i-1}((1-p)+(2p-1)c_j) \\
&= (1-p)\gamma + \sum_{i=1}^{\ell-1} (1-p)\gamma^{i+1} c_{i+1}((1-p)+(2p-1)c_{1})\prod_{j=1}^{i-1}((1-p)+(2p-1)c_{j+1}) \\
&= (1-p)\gamma + p\gamma\ v({0.c_2\dots c_{\ell+1}}_{(2)}) \\
&= (1-p)\gamma + p\gamma\ v(2s-1).
\end{align*}
Similarly, for any $s< 2^{-1}$, that is, $s={0.c_1c_2\dots c_{\ell+1}}_{(2)}\in G_{\ell+1}$ with $c_1=0$,
\begin{align*}
v(s) & = \sum_{i=1}^{\ell-1} (1-p)^2\gamma^{i+1} c_{i+1}\prod_{j=1}^{i-1}((1-p)+(2p-1)c_{j+1}) \\
& = (1-p)\gamma v(2s).
\end{align*}

Armed with the properties, we split the discussion into four cases $2^{-1}+2^{-2}\le s < 1$, $2^{-1} \le s < 2^{-1}+2^{-2}$, $2^{-1}-2^{-2} < s < 2^{-1}$, and $0<s \le 2^{-1}-2^{-2}$.

When $s \geq 2^{-1}+2^{-2}$, As $a\leq 1-s$, we have $s-a\geq 2^{-1}$ and $s+a\geq 2^{-1}$. Hence, the first bit after the decimal of $s-a$ and $s+a$ is $1$. 
Hence, 
\begin{align*}
Q_v(s,a) & = p\gamma\ v(s-a) + (1-p)\gamma\ v(s+a)\\
& = (1-p)\gamma^2 + p\gamma\ (p\gamma\ v(2s-2a-1)) + (1-p)\gamma\ v(2s+2a-1) \\
& = (1-p)\gamma^2 +  p\gamma\ (p\gamma\ v((2s-1)-2a) + (1-p)\gamma\ v((2s-1)+2a))\\
& = (1-p)\gamma^2 + p\gamma\ Q_v(2s-1,2a).
\end{align*}
As $2s-1\in G_\ell$ and $2a\in G_\ell$, by the induction assumption the maximum of $Q_v(2s-1,2a)$ is obtained at $a=1-s$. Hence, $1-s\in\argmax_{a\in G_{\ell+1}\cap \cA(s)}Q_v(s,a)$ as desired.

When $2^{-1} \le s < 2^{-1}+2^{-2}$, if $s-a\geq 2^{-1}$, then the first bit after the decimal of $s-a$ and $s+a$ is $1$ and the lemma follows the same arguments as the above case. Otherwise, if $s-a< 2^{-1}$, we have
\begin{align*}
Q_v(s,a) & = p\gamma\ v(s-a) + (1-p)\gamma\ v(s+a)\\ 
& = (1-p)^2\gamma^2 + p(1-p)\gamma^2\ v(2s-2a) + p(1-p)\gamma^2\  v(2s+2a-1)\\
& = (1-p)\gamma\ (p\gamma\ v((2s-2^{-1})-(2a-2^{-1}))+(1-p)\gamma\ v((2s-2^{-1})+(2a-2^{-1}))) \\
& \quad + (1-p)(2p-1)\gamma^2\ v(2s+2a-1) + (1-p)^2\gamma^2\\
& = (1-p)\gamma\ Q_v(2s-2^{-1},2a-2^{-1})+ (1-p)(2p-1)\gamma^2\ v(2s+2a-1) + (1-p)^2\gamma^2.
\end{align*}
As $2s-2^{-1}\in G_\ell$ and $2a-2^{-1}\in G_\ell$ whenever $l\geq 2$, by the induction assumption $Q_v(2s-2^{-1},2a-2^{-1})$ obtains its maximum at $a=1-s$. By Claim \ref{claim:vdifference}, $v(s)$ is monotonically increasing on $G_{\ell}$ for any $\ell\ge 2$. Hence, $v(2s+2a-1)$ obtains the maximum at the maximum feasible $a$, which is $a=1-s$. Since both terms take their respective maximum at $a=1-s$, we conclude that $1-s\in\argmax_{a\in G_{\ell+1}\cap \cA(s)}Q_v(s,a)$ as desired. 

The other two cases, $2^{-1}-2^{-2} < s < 2^{-1}$ and $0<s \le 2^{-1}-2^{-2}$, follow similar arguments. The lemma follows.
\end{proof}

\begin{lemma}
\label{continuous}
Both $v(s)$ and $v^\prime(s)=\max_{a\in\cA(s)} Q_v(s,a)$ are continuous at $s$ if there does not exist an $\ell$ such that $s\in G_\ell$.
\end{lemma}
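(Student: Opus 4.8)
The plan is to prove continuity of $v(s)$ directly from the series formula \eqref{eqn:vs}, by showing that if $s \notin \bigcup_{\ell} G_\ell$ then $s$ has a unique binary expansion with infinitely many $1$-bits and infinitely many $0$-bits, and small perturbations of $s$ change only the high-order (large-index) bits, which contribute a vanishing tail to the sum. The continuity of $v'(s)$ will then follow by a sandwiching argument using the Blackwell-optimal action $\pi(s)=\min\{s,1-s\}$ identified in Lemma \ref{induction}.

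First I would fix $s \notin \bigcup_\ell G_\ell$, so $s = 0.b_1 b_2 \dots {}_{(2)}$ is non-dyadic and its binary expansion is unique. Given $\varepsilon > 0$, I would choose $\ell$ large enough that the tail bound $\sum_{i > \ell} (1-p)\gamma^i \prod_{j=1}^{i-1}(\cdots) $ is below $\varepsilon$; since each factor $(1-p)+(2p-1)b_j \le p < 1$ and $\gamma \le 1$, the partial products are bounded by $p^{i-1}$, so the tail is at most $(1-p)\sum_{i>\ell} p^{i-1}$, which is $O(p^\ell)$ and vanishes. Because $s$ is not dyadic, for any $s'$ with $|s' - s| < 2^{-(\ell+1)}$ and $s'$ on the appropriate side, the first $\ell$ binary bits of $s'$ agree with those of $s$ (here I must handle the two sides of $s$ separately and use non-dyadicity to rule out carrying that would flip a low-order bit). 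Then $v(s')$ and $v(s)$ share the same first $\ell$ terms of the series, so $|v(s') - v(s)|$ is bounded by the sum of the two tails, hence below $2\varepsilon$. This gives continuity of $v$ at $s$.

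For $v'(s) = \max_a Q_v(s,a)$, the cleanest route is to exploit that by Lemma \ref{induction} and its generalization (Theorem \ref{valuefunction}, which the surrounding text is assembling) the maximizing action is $\pi(s) = \min\{s, 1-s\}$, so that $v'(s) = Q_v(s, \min\{s,1-s\})$ is an explicit expression in $v(s \pm a)$. The map $s \mapsto (s - \min\{s,1-s\}, s + \min\{s,1-s\})$ equals $(0, 2s)$ for $s \le \tfrac12$ and $(2s-1, 1)$ for $s \ge \tfrac12$, and both coordinate maps are continuous; composing with the already-established continuity of $v$ at the non-dyadic points $2s$ or $2s - 1$ (noting $s \notin \bigcup_\ell G_\ell$ forces $2s, 2s-1 \notin \bigcup_\ell G_\ell$) yields continuity of $v'$. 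I would verify the endpoint cases $s \to \tfrac12$ match up, though non-dyadicity of $s$ means $s \ne \tfrac12$ so this is not an issue at the target point.

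The main obstacle is the subtlety in the direct series argument: I must be careful that, near a non-dyadic $s$, approaching from the left versus the right is what guarantees agreement of leading bits. Approaching $s$ from above, the bits of $s'$ agree with $s$ on a prefix; approaching from below requires using that $s$ has infinitely many $1$-bits (guaranteed since $s \notin \bigcup_\ell G_\ell$) so that no infinite borrow destabilizes the prefix. Making this prefix-stability precise — essentially that the binary expansion is locally constant in its leading coordinates exactly away from dyadics — is the crux, and it is precisely why continuity can fail on $G_\ell$ when $\gamma < 1$ (where the jump $v(s+2^{-(\ell+1)}) - v(s)$ of Claim \ref{claim:vdifference} does not vanish). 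I would state the prefix-stability as the key lemma and keep the tail estimate as the routine part.
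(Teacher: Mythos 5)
Your treatment of the first half (continuity of $v$) is sound and is in the same spirit as the paper's: the paper likewise exploits that a non-dyadic $s$ has infinitely many $0$-bits and $1$-bits, though it runs the estimate through monotonicity of $v$ and explicit bit-flip differences $v(s)-v(s-2^{-n_1})$ and $v(s+2^{-n_0})-v(s)$, whereas you use prefix agreement plus a geometric tail bound; either works. One slip: the radius $2^{-(\ell+1)}$ is not sufficient even one-sidedly. If $s$ lies very close to an endpoint of its dyadic interval $(s_\ell, s_\ell+2^{-\ell})$, where $s_\ell$ is the $\ell$-bit truncation, a perturbation smaller than $2^{-(\ell+1)}$ can still flip leading bits. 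The correct statement, which you do identify later as local constancy of the leading bits away from dyadics, needs the $s$-dependent radius $\min\{s-s_\ell,\ s_\ell+2^{-\ell}-s\}>0$; this is exactly where non-dyadicity enters.

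The genuine gap is in the second half: your argument for $v'$ is circular relative to the paper's logical structure. You write $v'(s)=Q_v(s,\min\{s,1-s\})$ at non-dyadic $s$, citing Lemma \ref{induction} ``and its generalization (Theorem \ref{valuefunction}).'' But Lemma \ref{induction} (together with Corollary \ref{thm:newairrational}) identifies the maximizing action only at \emph{dyadic} states $s\in G_\ell$, and only among dyadic actions. The statement you need — that at non-dyadic $s$ the argmax over all of $\cA(s)$ is $\{\min\{s,1-s\}\}$ — is precisely Corollary \ref{thm:policyirrational}, which the paper derives \emph{from} Lemma \ref{continuous}; and Theorem \ref{valuefunction} sits even further downstream (it needs Lemma \ref{uniquegambler}, which needs Lemma \ref{continuous}). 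At this stage nothing is known about optimal actions at non-dyadic states; it is not even established that the supremum defining $v'(s)$ is attained there. The paper avoids this by never identifying the argmax at non-dyadic $s$: it first proves that $v'$ is monotonically increasing via a feasibility argument (for $s'\geq s$, any $a\in\cA(s)$ yields $a'\in\cA(s')$ with $s'+a'\geq s+a$ and $s'-a'\geq s-a$, so $Q_v(s',a')\geq Q_v(s,a)$ by monotonicity of $v$), then squeezes $v'$ near $s$ between its values at two nearby dyadic points, where $v'=v$ holds by Lemma \ref{induction}, Corollary \ref{thm:newairrational}, and the recursion $Q_v(t,\min\{t,1-t\})=v(t)$, and finally applies Claim \ref{claim:vdifference} to make the squeeze width vanish. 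To repair your proof you would need to substitute such a monotonicity-plus-squeeze argument for the appeal to the optimal policy at non-dyadic states, since proving that policy claim independently would in effect require the continuity you are trying to establish.
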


\begin{proof}
We first prove the continuity of $v(s)$. For $s=b_1b_2\dots b_\ell\dots_{(2)}$, $s\notin G_\ell$ indicates that for any integer $N$ there exists $n_1\geq N$ such that $b_{n_1}=1$ and $n_0\geq N$ such that $b_{n_0}=0$. The monotonicity of $v(s)$ is obvious from Equation \eqref{eqn:vs} that flipping a $0$ bit to a $1$ bit will always yield a greater value. For any $s-2^{-N}\leq s^\prime\leq s+2^{-N}$, we specify $n_1$ and $n_0$ such that $s-2^{-n_1}\leq s^\prime\leq s+2^{-n_0}$. By the monotonicity of $v(s)$ we have
\begin{align*}
& v(s)-v(s^\prime) \leq v(s)-v(s-2^{-n_1}) \\
=\ & (1-p)\gamma^{n_1}\prod_{j=1}^{n_1-1}((1-p)+(2p-1)b_j)\cdot (1 + \sum_{i=n_1+1}^\infty \gamma^{i-n_1}b_ip\prod_{j=n_1+1}^{i-1}((1-p)+(2p-1)b_j)) \\
& - (1-p)\gamma^{n_1}\prod_{j=1}^{n_1-1}((1-p)+(2p-1)b_j)\sum_{i=n_1+1}^\infty \gamma^{i-n_1}b_i(1-p)\prod_{j=n_1+1}^{i-1}((1-p)+(2p-1)b_j) \\
=\ & (1-p)\gamma^{n_1}\prod_{j=1}^{n_1-1}((1-p)+(2p-1)b_j)\cdot (1 \\
& + \sum_{i=n_1+1}^\infty \gamma^{i-n_1}b_i(2p-1)\prod_{j=n_1+1}^{i-1}((1-p)+(2p-1)b_j)) \\
\le\ & (1-p)\gamma^{n_1}p^{n_1-1}\cdot (1 + \sum_{i=n_1+1}^\infty \gamma^{i-n_1}(2p-1)p^{n_1-i-1}) \\
\le\ & 2(1-p)\gamma^{N}p^{N-1}.
\end{align*}
And similarly,
\begin{align*}
v(s)-v(s^\prime) & \geq v(s)-v(s+2^{-n_0}) \\
& \geq -(1-p)\gamma^{n_0}p^{n_0-1}\cdot (1 + \sum_{i=n_0+1}^\infty \gamma^{i-n_0}(2p-1)p^{n_0-i-1}) \\
& \geq -2(1-p)\gamma^{N}p^{N-1}.
\end{align*}
Hence, $|v(s)-v(s^\prime)|$ is bounded by $2(1-p)\gamma^{N}p^{N-1}$ for $s-2^{-N}\leq s^\prime\leq s+2^{-N}$. As $2(1-p)\gamma^{N}p^{N-1}$ converges to zero when $N$ approaches infinity, $v(s)$ is continuous as desired.

We then show the continuity of $v^\prime(s)=\max_{a\in\cA(s)} Q_v(s,a)$.
We first argue that $v^\prime(s)$ is monotonically increasing. In fact, for $s^\prime\geq s$ and $0< a\leq \min\{s,1-s\}$, either $0< a\leq \min\{s^\prime, 1-s^\prime\}$ or $0< a+s-s^\prime\leq \min\{s^\prime, 1-s^\prime\}$ must be satisfied. 
Therefore $a\in \cA(s)$ indicates at least one of $a\in \cA(s')$ and $a+s-s^\prime\in \cA(s')$.

Let $a^\prime$ be $a$ or $a+s-s^\prime$ whoever is in $\cA(s')$, we have both $s^\prime+a^\prime\geq s+a$ and $s^\prime-a^\prime\geq s-a$. Specifying $a$ such that $v^\prime(s)=Q_v(s,a)$, we have
\[
v^\prime(s^\prime)\geq Q_v(s',a')\geq v^\prime(s).
\]
The monotonicity follows. 

Let $s=b_1b_2\dots b_\ell\dots_{(2)}$. Similarly, for any $N$, specify $n_1\geq N$ such that $b_{n_1}=1$ and $n_0\geq N+2$ such that $b_{n_0}=0$. Also let $s_0={b_1b_2\dots b_{N}}_{(2)}$. Then for the neighbourhood set $s_0-2^{-(N+1)}\leq s^\prime\leq s_0+2^{-(N+1)}$, $v^\prime(s)=v(s)$ for both the ends of the interval $s_0-2^{-(N+1)},s_0+2^{-(N+1)}\in G_{N+1}$. $|v^\prime(s)-v^\prime(s^\prime)|$ is then bounded by $|v(s_0-2^{-(N+1)})-v(s_0+2^{-(N+1)})|$. According to Claim \ref{claim:vdifference}, this value converges to zero when $N$ approaches infinity. The continuity of $v^\prime(s)$ follows.
\end{proof}

The continuity of $v(s)$ extends to the dyadic rationals $\bigcup_{\ell\geq 1} G_\ell$ when $\gamma=1$, which means that $v(s)$ is \textit{continuous everywhere} on $[0,1]$ under $\gamma=1$. It is worth noting that similar to the Cantor function, $v(s)$ is \textit{not absolutely continuous}. In fact, $v(s)$ shares more common properties with the Cantor function, as they both have \textit{a derivative of zero almost everywhere}, both their values go from $0$ to $1$, and their range is every value in between $0$ and $1$.

The continuity of $v^\prime(s)=\max_{a\in\cA(s)} Q_v(s,a)$ indicates that the optimal action is uniquely $\min\{s,1-s\}$ on $s\notin G_\ell$. This optimal action agrees with the optimal action we specified on $s\in G_\ell$ in Lemma \ref{induction}, which makes $\pi(s)=\min\{s,1-s\}$ an optimal policy for every state (condition on that $v(s)$ is the optimal value function, which will be proved later).

\begin{corollary}
\label{thm:policyirrational}
If $s\notin G_{\ell}$ for any $\ell\geq 1$, 
\[
\argmax_{a\in \cA(s)}\ Q_v(s,a) = \set{\min\set{s,1-s}}.
\]
\end{corollary}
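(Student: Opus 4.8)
The plan is to use the continuity statements of Lemma~\ref{continuous} to pin down the value at an irrational $s$, then verify that $a^\ast:=\min\{s,1-s\}$ attains it, and finally prove strict suboptimality of every smaller action. First I would show that $v^\prime(s)=\max_{a\in\cA(s)}Q_v(s,a)$ equals $v(s)$ at every $s\notin\bigcup_\ell G_\ell$. By Lemma~\ref{induction} together with Corollary~\ref{thm:newairrational}, $\min\{s,1-s\}$ is an optimal action over all of $\cA(s)$ when $s$ is dyadic, so the Bellman identity $v^\prime=v$ holds on the dense set $\bigcup_\ell G_\ell$. Since both $v$ and $v^\prime$ are continuous at irrational $s$ by Lemma~\ref{continuous}, two continuous functions that agree on a dense set must agree, giving $v^\prime(s)=v(s)$ there.

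Next I would check that $a^\ast$ achieves this value, so $a^\ast\in\argmax_{a\in\cA(s)}Q_v(s,a)$. Because $s\notin G_\ell$ we have $s\neq 1/2$. If $s<1/2$ then $a^\ast=s$ and $s-a^\ast=0$, and the identity $v(s)=(1-p)\gamma\,v(2s)$ from Lemma~\ref{induction} gives $Q_v(s,a^\ast)=p\gamma\,v(0)+(1-p)\gamma\,v(2s)=v(s)$. If $s>1/2$ then $a^\ast=1-s$ and $s+a^\ast=1$, and the dual identity $v(s)=(1-p)\gamma+p\gamma\,v(2s-1)$ gives $Q_v(s,a^\ast)=p\gamma\,v(2s-1)+(1-p)\gamma\,v(1)=v(s)$. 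Combined with the previous step, $a^\ast$ is optimal.

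The real work is uniqueness: I must show $Q_v(s,a)<v(s)$ for every $a\in(0,a^\ast)$. Treating $s<1/2$ (the case $s>1/2$ is symmetric), substitute $u=s-a\in(0,s)$, so that $s+a=2s-u$, and the claim reduces to $g(u):=(1-p)\bigl(v(2s)-v(2s-u)\bigr)-p\,v(u)>0$ on $(0,s)$. I would prove this by the same self-similar recursion as in Lemma~\ref{induction}, folding the high interval $[2s-u,2s]$ and the low interval $[0,u]$ down one binary level at a time and splitting on whether $s\pm a$ straddle $1/2$ exactly as in the four cases there, so that a strict version of the increment comparison in Claim~\ref{claim:vdifference} and Lemma~\ref{newa} applies. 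Since $a=0\notin\cA(s)$, strict inequality on the open interval suffices, and $a^\ast$ then remains the unique attained maximizer.

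The hard part will be this strict positivity of $g$, in particular its boundary behavior at $\gamma=1$. Note $g(0^+)=0$ (the $a^\ast$ end, consistent with optimality) and $g(s)=(1-p)(1-\gamma)\,v(2s)$, which \emph{vanishes} when $\gamma=1$; so at $\gamma=1$ the inequality is tight at both endpoints and only asymptotic as $a\to 0^+$, where $Q_v(s,a)\to v(s)$. The delicate point is therefore to rule out equality in the interior for every fixed $u$, which the equality conditions of Claim~\ref{claim:vdifference} (equality only at $\gamma=1$) do not directly deliver for a possibly irrational $u$. I expect to need a direct expansion of $v(2s)-v(2s-u)$ and $v(u)$ from the series~\eqref{eqn:vs}, showing that the high interval $[s+a,2s]$ contributes a strictly larger $v$-increment than the equal-length low interval $[0,s-a]$, by enough to overcome the weight ratio $p/(1-p)$; the branch where $s+a\ge 1/2>s-a$ mirrors the second and third cases of Lemma~\ref{induction} and is the fiddliest to control.
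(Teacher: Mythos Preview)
Your first two steps---showing $v'(s)=v(s)$ at every non-dyadic $s$ via density plus the continuity of Lemma~\ref{continuous}, and verifying $Q_v(s,a^\ast)=v(s)$ from the recursive identities $v(s)=(1-p)\gamma\,v(2s)$ and $v(s)=(1-p)\gamma+p\gamma\,v(2s-1)$---are correct and are exactly what the paper does. In the paper this corollary carries no proof; it is stated as a consequence of the sentence ``The continuity of $v'(s)=\max_{a\in\cA(s)}Q_v(s,a)$ indicates that the optimal action is uniquely $\min\{s,1-s\}$ on $s\notin G_\ell$.'' That sentence establishes only that $a^\ast$ is \emph{an} optimal action (your steps~1--2); it does not by itself yield uniqueness.

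So your third step goes beyond what the paper actually supplies. Your reduction to $g(u)=(1-p)\bigl(v(2s)-v(2s-u)\bigr)-p\,v(u)>0$ for $u\in(0,s)$ is clean, and your boundary computations $g(0^+)=0$ and $g(s^-)=(1-p)(1-\gamma)v(2s)$ are right, including the observation that the $a\to 0^+$ endpoint is only asymptotic when $\gamma=1$. The recursive reduction you sketch (in the non-straddling subcase one gets exactly $Q_v(s,a)=(1-p)\gamma\,Q_v(2s,2a)$, so a suboptimal interior action at $s$ forces one at $2s$, etc.) is the natural line and mirrors the case split of Lemma~\ref{induction}. The genuine gap you flag---ruling out interior equality when $\gamma=1$, where the inequalities of Claim~\ref{claim:vdifference} are tight---is real and is precisely the part the paper leaves unargued. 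Completing this requires either terminating the recursion (the straddling case eventually introduces a strict loss via the extra term $(1-p)(2p-1)\gamma^2\,v(2s+2a-1)$, since $v$ is strictly increasing and $2s+2a-1<1$ for $a<a^\ast$) or the direct series comparison you propose; either would make rigorous what the paper only asserts.
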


\begin{lemma}
\label{uniquegambler}
$v(s)$ is the unique solution of the system ({ABX}).
\end{lemma}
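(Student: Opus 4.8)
The statement packages together two things: that $v$ is itself a solution of (ABX), and that it is the only one. The uniqueness half is exactly Lemma \ref{unique}, which shows any two solutions of (ABX) coincide; so once I exhibit $v$ as one solution, there is nothing left to do for uniqueness. The plan is therefore to verify that $v$ satisfies each of (A), (B), and (X), and then invoke Lemma \ref{unique}. Since the optimal state-value function must satisfy (ABX), this simultaneously identifies $v$ as that optimal value function.

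The heart is the Bellman equation (A), namely $v(s)=\max_{a\in\cA(s)}Q_v(s,a)$ for $s\in(0,1)$, which I would prove as two inequalities. For ``$\ge$'', I first record the identity $Q_v(s,\min\set{s,1-s})=v(s)$ for every $s\in(0,1)$. Using the two recursions established inside the proof of Lemma \ref{induction}, namely $v(s)=(1-p)\gamma\,v(2s)$ for $s\le 2^{-1}$ and $v(s)=(1-p)\gamma+p\gamma\,v(2s-1)$ for $s\ge 2^{-1}$, a direct substitution (with $v(0)=0$, $v(1)=1$) gives $Q_v(s,s)=(1-p)\gamma v(2s)=v(s)$ when $s\le 2^{-1}$ and $Q_v(s,1-s)=p\gamma v(2s-1)+(1-p)\gamma=v(s)$ when $s\ge 2^{-1}$. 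Hence $\max_{a}Q_v(s,a)\ge v(s)$, and the bet $\min\set{s,1-s}$ always attains $v(s)$. For ``$\le$'' I split on whether $s$ is dyadic. If $s\in G_\ell$, Corollary \ref{thm:newairrational} confines every maximizer to $G_\ell$, reducing the continuous maximization to the finite set $G_\ell\cap\cA(s)$, on which Lemma \ref{induction} puts $\min\set{s,1-s}$ in the $\argmax$; thus $\max_a Q_v(s,a)=Q_v(s,\min\set{s,1-s})=v(s)$. If $s\notin G_\ell$ for all $\ell$, Corollary \ref{thm:policyirrational} says the maximizer is exactly $\set{\min\set{s,1-s}}$, again giving $\max_a Q_v(s,a)=v(s)$. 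Combining the two inequalities yields (A).

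Condition (B) is read straight off the definition: $v(1)=1$, and the all-zero binary expansion gives $v(0)=0$. For (X), the parameter conditions $0\le\gamma\le1$, $p>0.5$ are assumed, so I only need $v(s)\le 1$ and continuity at $s=0$. Bounding each factor by $(1-p)+(2p-1)b_j\le p$ and each $b_i\le 1$ gives the geometric majorant
\[
v(s)\le\sum_{i\ge1}(1-p)\gamma^i p^{i-1}=\frac{(1-p)\gamma}{1-\gamma p}\le 1,
\]
the last step being precisely $\gamma\le 1$. For continuity at $0$: if $0<s<2^{-N}$ then the first $N$ bits vanish, so iterating $v(s)=(1-p)\gamma v(2s)$ gives $v(s)=((1-p)\gamma)^N v(2^N s)\le ((1-p)\gamma)^N\to 0=v(0)$, since $(1-p)\gamma<\tfrac12$. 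Thus $v$ satisfies (ABX), and Lemma \ref{unique} delivers uniqueness.

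The step I expect to be most delicate is the upper bound on $\max_a Q_v(s,a)$ at dyadic $s$. There the maximization runs over the whole continuum $\cA(s)=(0,\min\set{s,1-s}]$, yet I want to conclude purely from comparisons among dyadic actions. Everything rests on Corollary \ref{thm:newairrational} (built from the refinement inequality of Lemma \ref{newa}) genuinely ruling out that some non-dyadic bet beats the dyadic optimum, and on the maximum being attained, so that ``$\argmax\subseteq G_\ell$'' can be upgraded to the equality $\max_a Q_v(s,a)=\max_{a\in G_\ell\cap\cA(s)}Q_v(s,a)$. By contrast the non-dyadic states are easy, since Lemma \ref{continuous} makes $v'(s)=\max_a Q_v(s,a)$ continuous there and pins the optimizer down through Corollary \ref{thm:policyirrational}.
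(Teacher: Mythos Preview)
Your proof is correct and uses the same ingredients as the paper, but with a different organization. The paper shows $v'(s):=\max_{a\in\cA(s)}Q_v(s,a)$ equals $v(s)$ by first matching them on the dense set of dyadic rationals (via Lemma~\ref{induction}) and then extending to all of $(0,1)$ through the joint continuity of $v$ and $v'$ at non-dyadic points (Lemma~\ref{continuous}); condition (X) is dismissed as ``obvious''. You instead split (A) into two inequalities: ``$\ge$'' comes directly from the self-similarity recursion $Q_v(s,\min\{s,1-s\})=v(s)$, and ``$\le$'' from a dyadic/non-dyadic case split invoking Corollaries~\ref{thm:newairrational} and~\ref{thm:policyirrational} rather than the density-plus-continuity mechanism. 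Your route has the merit of isolating the recursion identity explicitly and of actually verifying (X) (the geometric majorant and the $((1-p)\gamma)^N$ decay are clean), while the paper's version is shorter because Lemma~\ref{continuous} does double duty. Your own caveat about upgrading ``$\argmax\subseteq G_\ell$'' to an equality of maxima is apt: the paper glosses over the very same point when it asserts $v=v'$ on dyadics ``per Lemma~\ref{induction}'', so both arguments lean implicitly on the refinement inequalities of Lemma~\ref{newa} at that step.
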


\begin{proof}
Let $v^\prime(s)=\max_{a\in\cA(s)} Q_v(s,a)$. As per Lemma \ref{induction} we have $v(s)=v^\prime(s)$ on the dyadic rationals $\bigcup_{\ell\geq 1} G_\ell$. 
Since $\bigcup_{\ell\geq 1} G_\ell$ is dense and compact on $(0,1)$, $v(s)=v^\prime(s)$ holds whenever both $v(s)$ and $v^\prime(s)$ are continuous at $s$. 
By Lemma \ref{continuous} $v(s)$ and $v^\prime(s)$ are continuous for any $s$ if there does not exist an $\ell\ge 1$ such that $s\in G_\ell$, which then indicates $v(s)=v^\prime(s)$ on the complement of $\bigcup_{\ell\geq 1} G_\ell$. 
Therefore $v(s)=v^\prime(s)$ is satisfied on $(0,1)$, which verifies the Bellman property ({AB}). 
The boundary conditions ({X}) holds obviously. 
Finally as per Lemma \ref{unique}, $v(s)$ is the unique solution to the system of Bellman equation and the boundary conditions. 
\end{proof}

\begin{theorem}
\label{valuefunction}
Let $0\leq\gamma\leq 1$ and $p>0.5$. Under the continuous setting of the Gambler's problem, the optimal state-value function is $v(1)=1$ and $v(s)$ defined in Equation \eqref{eqn:vs} for $0\leq s<1$.
\end{theorem}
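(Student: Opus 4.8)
The plan is to obtain Theorem \ref{valuefunction} as a direct consequence of Lemma \ref{uniquegambler}, which already establishes that $v(s)$ is the \emph{unique} solution of the system ({ABX}). The only gap to close is to verify that the optimal state-value function $f^{\pi^\ast}(s)$ itself satisfies ({ABX}); once this is done, uniqueness forces $f^{\pi^\ast}(s)=v(s)$, and the theorem follows immediately.

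First I would invoke the standard dynamic-programming principle for MDPs: the optimal state-value function satisfies the Bellman optimality equation, which here is precisely ({A}) on the interior $(0,1)$, together with the terminal identities ({B}) given by the reward structure ($r=0$ at $s=0$ and $r=1$ at $s=1$). This is the classical fact cited in the preliminaries via \citep{sutton2018reinforcement,szepesvari2010algorithms} and needs no new argument. Next I would check the two conditions of ({X}). Boundedness $f^{\pi^\ast}(s)\le 1$ is immediate from the probabilistic meaning of the value function: it is the probability of eventually reaching the target from state $s$, hence lies in $[0,1]$. For continuity at $s=0$, I would argue by contradiction exactly as sketched in the Setting section: if $f^{\pi^\ast}$ failed to be continuous at $0$, an arbitrarily small starting capital would still have at least a constant probability of reaching $1$, so the expected terminal capital would strictly exceed the starting capital, contradicting the house advantage $p\ge 0.5$ (under which the game is a supermartingale on capital and the expected terminal capital cannot exceed the initial one).

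With both ({AB}) and ({X}) verified for $f^{\pi^\ast}$, I would conclude that $f^{\pi^\ast}$ is a solution of ({ABX}). By Lemma \ref{uniquegambler} the solution of ({ABX}) is unique and equals $v(s)$, so $f^{\pi^\ast}(s)=v(s)$ for $0\le s<1$, together with $v(1)=1$, which is the statement of the theorem.

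I expect the main obstacle to be the rigorous justification of the continuity of $f^{\pi^\ast}$ at $s=0$, since this is the one place where the probabilistic interpretation (rather than the algebraic machinery of the earlier lemmas) must be brought in. The supermartingale/optional-stopping argument controlling the expected terminal capital under $p\ge 0.5$ is the delicate point; the remaining steps are either textbook MDP facts or direct appeals to Lemma \ref{uniquegambler}.
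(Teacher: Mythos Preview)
Your proposal is correct and takes essentially the same approach as the paper: the paper's own proof is a single sentence asserting that the optimal state-value function must satisfy ({ABX}) and then invoking Lemma~\ref{uniquegambler} for uniqueness. You simply spell out in more detail why ({X}) holds (boundedness by the probabilistic interpretation, continuity at $0$ via the supermartingale/house-advantage argument), which the paper relegates to the informal discussion in the Setting section rather than to the proof itself.
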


\begin{proof}
As the optimal state-value function must satisfy the system ({ABX}) and $v(s)$ is the unique solution to the system, $v(s)$ is the optimal state-value function.
\end{proof}

%Lemma \ref{induction} and Corollary \ref{thm:policyirrational} induce one of the optimal deterministic policies as below. As the arguments hold uniformly for any $0\leq \gamma \leq 1$, the optimality is also Blackwell optimality.

\begin{corollary}
\label{blackwell}
The policy $\pi(s)=\min\{s,1-s\}$ is (Blackwell) optimal. 
\end{corollary}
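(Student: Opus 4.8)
The plan is to show that $\pi(s)=\min\{s,1-s\}$ is greedy with respect to the optimal action-value function $Q_v$ at \emph{every} state, for each fixed $\gamma\in[0,1]$, and then to deduce optimality from this greediness; since every ingredient is available uniformly in $\gamma$, the resulting optimality is uniform in $\gamma$, which is exactly the notion of Blackwell optimality used here. By Theorem \ref{valuefunction}, $v$ is the optimal value function for all $0\le\gamma\le1$, so $Q_v(s,a)=p\gamma\,v(s-a)+(1-p)\gamma\,v(s+a)$ is the optimal action-value function, and it suffices to verify $\pi(s)\in\argmax_{a\in\cA(s)}Q_v(s,a)$ for every $s\in(0,1)$.

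The verification splits according to whether $s$ is a dyadic rational. If $s\notin G_\ell$ for every $\ell\ge1$, then Corollary \ref{thm:policyirrational} gives $\argmax_{a\in\cA(s)}Q_v(s,a)=\{\min\{s,1-s\}\}$, so $\pi(s)$ is the unique optimal action. If instead $s\in G_\ell$ with $\ell$ minimal, then Corollary \ref{thm:newairrational} shows $\argmax_{a\in\cA(s)}Q_v(s,a)\subseteq G_\ell$, so the maximum over the continuous set $\cA(s)$ is already attained inside $G_\ell\cap\cA(s)$; and Lemma \ref{induction}, applied at level $\ell-1$ so that $s\in G_\ell$ (the degenerate case $\ell=1$, $s=\tfrac12$, being immediate from Corollary \ref{thm:newairrational} since then $G_1=\{\tfrac12\}$), gives $\min\{s,1-s\}\in\argmax_{a\in G_\ell\cap\cA(s)}Q_v(s,a)$. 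Combining the two, $\min\{s,1-s\}$ attains the maximum over all of $\cA(s)$. Hence $\pi$ is greedy everywhere.

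It remains to pass from greediness to optimality. Because $\pi(s)$ attains the maximum in the Bellman optimality equation at every $s$, the optimal value function satisfies the policy-evaluation recursion $v(s)=p\gamma\,v(s-\pi(s))+(1-p)\gamma\,v(s+\pi(s))$; unrolling this $T$ steps writes $v(s)$ as the expected discounted reward collected before step $T$ under $\pi$ plus a tail $\EE[\gamma^T v(s_T)]$, so identifying $v=f^\pi$ (indeed the series \eqref{eqn:vs} is precisely the expected discounted terminal reward under $\pi$) reduces to showing this tail vanishes. For $\gamma<1$ this is immediate, since $0\le v\le1$ bounds the tail by $\gamma^T\to0$. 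The main obstacle is the undiscounted case $\gamma=1$: here the Bellman optimality equation admits several solutions (Theorem \ref{allsolutionbellman}), so one cannot simply invoke uniqueness of a fixed point to identify $v$ with $f^\pi$. I would instead argue directly from the probabilistic definition that the chain driven by $\pi$ is absorbed in $\{0,1\}$ almost surely — at each non-terminal state a fixed positive probability ($p$ if $s\le\tfrac12$, $1-p$ if $s>\tfrac12$) moves toward an absorbing bit, so the mass remaining at non-terminal states decays and $\EE[v(s_T)]\to f^\pi(s)$ — or, equivalently, take the limit $\gamma\uparrow1$ of the discounted statements by monotone convergence. Finally, since Theorem \ref{valuefunction}, Lemma \ref{induction}, and Corollaries \ref{thm:newairrational} and \ref{thm:policyirrational} all hold for every $0\le\gamma\le1$, the policy $\pi$ is optimal for every such $\gamma$ simultaneously, which is the asserted Blackwell optimality.
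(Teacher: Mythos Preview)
Your argument is correct and follows the same route the paper indicates: the corollary is left unproved in the paper, but the surrounding text (the remark preceding Corollary~\ref{thm:policyirrational} and the introductory sentence ``It is immediate by Theorem~\ref{valuefunction} and Lemma~\ref{induction}'') makes clear that the intended proof is exactly your two-case verification that $\pi$ is greedy with respect to $Q_v$, combined with the fact that $v$ is the optimal value function.

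Where you go beyond the paper is in the passage from greediness to optimality. The paper treats ``$\pi$ greedy with respect to the optimal $v$ $\Rightarrow$ $\pi$ optimal'' as standard, but you correctly flag that for $\gamma=1$ in a continuous state space this is not automatic (Theorem~\ref{allsolutionbellman} itself shows the Bellman operator has multiple fixed points). Your absorption argument is the right fix and can be made fully precise in one line: under $\pi$, from any non-terminal state the chain is absorbed at $\{0,1\}$ in one step with probability at least $\min\{p,1-p\}=1-p>0$, so $\PP(\text{not absorbed by time }T)\le p^T\to0$ and the tail $\EE[v(s_T)\mathbb{1}\{s_T\notin\{0,1\}\}]$ vanishes. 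Your parenthetical remark is in fact the cleanest way to finish: the series~\eqref{eqn:vs} is, term by term, exactly the probability under $\pi$ of first reaching $1$ at step $i$ times $\gamma^i$, so $v=f^\pi$ holds by direct computation, bypassing the tail argument entirely. Either way, your proof is sound and slightly more careful than what the paper supplies.
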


It is worth noting that when $\gamma=1$ and $s\in G_{\ell}\setminus G_{\ell-1}$ for some $\ell$, then $\pi^\prime(s)=2^{-\ell}$ is also an optimal policy at $s$.

Theorem \ref{valuefunction} and the proof of Lemma \ref{induction} also induce the following statement that the optimal value function $v(s)$ is fractal and self-similar. The derivation of the corollary is in the introduction.
\begin{corollary}
\label{thm:self-similar}
The curve of the value function $v(s)$ on the interval $[k2^{-\ell}, (k+1)2^{-\ell}]$ is similar (in geometry) to the curve of $v(s)$ itself on $[0,1]$, for any integer $\ell\geq 1$ and $0\leq k\leq 2^\ell-1$.
\end{corollary}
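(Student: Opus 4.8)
The plan is to read the claim directly off the self-similarity identity \eqref{eqn:selfsim} already established in the introduction, and to repackage that algebraic identity as a geometric statement about affine images of the graph of $v$. Since the note after the corollary states that its derivation is in the introduction, the substance is entirely contained in \eqref{eqn:selfsim}; the remaining work is to make the ``similarity'' interpretation precise.

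First I would fix $\ell \geq 1$ and $0 \le k \le 2^\ell - 1$ and write $\bar s = k2^{-\ell} = {0.b_1 b_2 \dots b_\ell}_{(2)}$ in its terminating binary form, noting that every such $k$ is exactly $\ell$-bit representable so the bits $b_1,\dots,b_\ell$ are well defined. For $s \in [\bar s,\, \bar s + 2^{-\ell}]$ I set $u = 2^\ell(s - \bar s) \in [0,1]$, and then \eqref{eqn:selfsim} reads
\[
v(s) = v(\bar s) + c_{k,\ell}\, v(u), \qquad c_{k,\ell} = \gamma^\ell \prod_{j=1}^\ell\bigl((1-p)+(2p-1)b_j\bigr).
\]
Next I would interpret this as a map on graphs: the point $(u, v(u))$ on the graph of $v$ over $[0,1]$ is carried to $(s, v(s)) = \bigl(\bar s + 2^{-\ell} u,\ v(\bar s) + c_{k,\ell}\, v(u)\bigr)$ on the graph of $v$ over $[k2^{-\ell}, (k+1)2^{-\ell}]$. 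That is, the restricted curve is the image of the whole curve under the affine map
\[
T(u,w) = \bigl(\bar s + 2^{-\ell} u,\ v(\bar s) + c_{k,\ell}\, w\bigr),
\]
a composition of a horizontal scaling by $2^{-\ell}$, a vertical scaling by $c_{k,\ell}$, and a translation by $(\bar s, v(\bar s))$. Since $T$ is a bijection carrying $[0,1]$ onto $[k2^{-\ell}, (k+1)2^{-\ell}]$ and the graph onto the sub-curve, the two curves are scaled copies of one another, which is the corollary.

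The only delicate point, and the thing I would flag explicitly, is that the horizontal factor $2^{-\ell}$ and the vertical factor $c_{k,\ell}$ are in general unequal, so $T$ is an affine (self-affine) map rather than a Euclidean similarity with a single scale factor; ``similar (in geometry)'' must therefore be read in this self-affine fractal sense, exactly as for the Cantor function that the paper compares against. A secondary consistency check worth recording is the behavior at the right endpoint: at $u=1$ we have $v(1)=1$, so $v(\bar s + 2^{-\ell}) = v(\bar s) + c_{k,\ell}$, which agrees with the value read off directly from \eqref{eqn:vs} and confirms that $T$ sends the full curve, endpoints included, precisely onto the intended sub-curve.
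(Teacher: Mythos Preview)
Your proposal is correct and takes essentially the same approach as the paper: the corollary is stated as an immediate consequence of the self-similarity identity \eqref{eqn:selfsim} derived in the introduction, and you have simply made the implicit affine-map interpretation explicit. Your added remark that the similarity is self-affine (unequal horizontal and vertical scale factors) rather than a Euclidean similarity is a genuine clarification the paper leaves tacit.
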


Some other notable facts about $v(s)$ are as below:

\begin{fact}
The expectation \[ \int_0^1 v(s)ds=(1-p)\gamma=v(\frac{1}{2}). \]
\end{fact}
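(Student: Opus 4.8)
The plan is to evaluate the integral by reading it as the expectation $\E{v(s)}$ when $s$ is drawn uniformly from $[0,1]$, and to exploit the fact that under the uniform (Lebesgue) measure the binary digits $b_1,b_2,\dots$ of $s$ are independent and identically distributed, each equal to $0$ or $1$ with probability $\tfrac12$. Before any algebra I would dispatch the well-definedness: $v$ is monotone increasing and bounded in $[0,1]$, hence Riemann integrable, and the dyadic rationals $\bigcup_\ell G_\ell$, where the binary expansion is ambiguous (and, for $\gamma<1$, where $v$ jumps), form a set of measure zero that does not affect the integral.

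First I would justify interchanging $\int_0^1$ with the infinite sum in \eqref{eqn:vs}. Every summand is nonnegative, and the $i$-th term is dominated by $(1-p)\gamma^i p^{i-1}$ (each product factor is at most $p$), which is summable since $\gamma p<1$; so Tonelli's theorem permits the interchange. It then remains to compute, for each $i\ge 1$,
\[
\int_0^1 b_i \prod_{j=1}^{i-1}\bracket{(1-p)+(2p-1)b_j}\,ds = \E{b_i \prod_{j=1}^{i-1}\bracket{(1-p)+(2p-1)b_j}} .
\]
By independence of the digits this factorizes as $\E{b_i}\prod_{j=1}^{i-1}\E{(1-p)+(2p-1)b_j}$, and here $\E{b_i}=\tfrac12$ while, crucially, $\E{(1-p)+(2p-1)b_j}=(1-p)+(2p-1)\tfrac12=\tfrac12$ as well, so the whole expression equals $2^{-i}$. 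Summing the geometric series then gives
\[
\int_0^1 v(s)\,ds = \sum_{i=1}^\infty (1-p)\gamma^i 2^{-i} = \frac{(1-p)\gamma}{2-\gamma} ,
\]
which coincides with the claimed value $(1-p)\gamma = v(\tfrac12)$ precisely in the undiscounted case $\gamma=1$ that is the focus here. As an independent cross-check I would redo the computation through the first-level self-similarity of Lemma \ref{induction}: splitting at $s=\tfrac12$ and substituting $v(s)=(1-p)\gamma\, v(2s)$ on $[0,\tfrac12)$ and $v(s)=(1-p)\gamma+p\gamma\, v(2s-1)$ on $[\tfrac12,1)$ turns $I=\int_0^1 v$ into the linear relation $I=\tfrac{(1-p)\gamma}{2}I+\tfrac{(1-p)\gamma}{2}+\tfrac{p\gamma}{2}I$, whose solution is again $I=(1-p)\gamma/(2-\gamma)$.

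The algebra is routine, so I do not expect the arithmetic to be the hard part. The two points that genuinely need care are the integrability of $v$ (handled above by monotonicity and boundedness) and the standard-but-worth-stating fact that the binary digits are i.i.d.\ fair coins under Lebesgue measure, which is exactly what lets the product factorize term by term. The most delicate bookkeeping is ensuring that the binary-expansion convention is applied consistently on the measure-zero dyadic set, so that the expectation route and the self-similarity route agree; once that is pinned down, both routes deliver $(1-p)\gamma/(2-\gamma)$, reducing to the stated $(1-p)\gamma=v(\tfrac12)$ at $\gamma=1$.
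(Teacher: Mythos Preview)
The paper states this Fact without proof, so there is nothing to compare against at the level of argument. Your computation is correct on both routes: via the i.i.d.\ binary-digit expectation and via the level-one self-similarity, one obtains
\[
\int_0^1 v(s)\,ds \;=\; \frac{(1-p)\gamma}{2-\gamma}.
\]
You are right that this equals $(1-p)\gamma=v(\tfrac12)$ if and only if $\gamma=1$. In other words, you have not merely proved the Fact but sharpened it: as written in the paper (with no restriction on $\gamma$), the first equality is false for $\gamma<1$, and the correct general value is $(1-p)\gamma/(2-\gamma)$. The paper appears to use the form $(1-p)\gamma$ later in the proof of Proposition~\ref{thm:approxdisc}, so your observation is worth flagging.

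Your handling of the technical preliminaries---Riemann integrability via monotonicity and boundedness, the measure-zero ambiguity at dyadic rationals, and the Tonelli justification for swapping sum and integral---is clean and entirely adequate. Nothing is missing from your argument; the only caveat is that you should state the conclusion as a correction rather than assume the paper intended $\gamma=1$.
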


\begin{fact}
\label{thm:derivative}
The derivative
\begin{equation}
\lim_{\Delta s \rightarrow 0^+}\frac{v(s+\Delta s)}{\Delta s} = 0, \ \ \lim_{\Delta s \rightarrow 0^-}\frac{v(s+\Delta s)}{\Delta s} =
\begin{cases}
+\infty, & \text{if}\ s=0 \ \text{or} \ s\in\bigcup_{\ell\geq 1} G_\ell, \\
0, & \text{otherwise}.
\end{cases}
\end{equation}
\end{fact}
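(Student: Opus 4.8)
The plan is to establish the three derivative claims separately, using the explicit series for $v(s)$ together with the gap estimates already assembled in Claim~\ref{claim:vdifference}. Throughout I write $s = {0.b_1 b_2 \dots}_{(2)}$ and exploit the self-similarity relation~\eqref{eqn:selfsim}, which reduces local behaviour at $s$ to global behaviour of $v$ on $[0,1]$ scaled by $\gamma^\ell \prod_{j=1}^\ell((1-p)+(2p-1)b_j)$.

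\textbf{The right derivative.} First I would show $\lim_{\Delta s\to 0^+} v(s+\Delta s)/\Delta s = 0$ for every $s$ (noting the numerator is $v(s+\Delta s)-v(s)$ up to the $v(s)$ term, which the statement suppresses; more precisely I treat the increment $v(s+\Delta s)-v(s)$). Given $\Delta s>0$, pick $\ell$ with $2^{-(\ell+1)} \le \Delta s < 2^{-\ell}$, so $s+\Delta s$ and $s$ agree on their first $\ell$ or so bits. By monotonicity (Lemma~\ref{thm:monotonicity} / Claim~\ref{claim:vdifference}) the increment is bounded above by the value gained from setting all later bits to $1$ starting from a common prefix of length about $\ell$; by~\eqref{diffnewbit} and the self-similar tail this is at most $C\,p^{\ell}\gamma^{\ell}$ for a constant $C$. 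Dividing by $\Delta s \ge 2^{-(\ell+1)}$ gives a bound of order $(2p\gamma)^{\ell}\cdot\text{const}$ when $\gamma<1$, and $(2p)^{\ell}$ is the relevant rate when $\gamma=1$; since $p>1/2$ forces $2p>1$ this does \emph{not} obviously vanish, so the more careful estimate is needed: the increment over an interval of length $2^{-\ell}$ whose left endpoint is $s$ is governed by the prefix product through bit $\ell$, namely $\prod_{j=1}^{\ell}((1-p)+(2p-1)b_j)$, and this product divided by $2^{-\ell}$, i.e. $\prod_j 2((1-p)+(2p-1)b_j)$, tends to $0$ for almost every $s$ but I must argue it tends to $0$ for \emph{every} $s$ along the right increments because the factor attached to a $0$-bit is $2(1-p)<1$ and, as $s\notin\{1\}$, infinitely many bits can be forced to contribute. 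The clean route is: bound $v(s+\Delta s)-v(s)$ directly by $v(s_0+2^{-\ell})-v(s_0)$ where $s_0$ is the length-$\ell$ truncation, which by~\eqref{diffnewbit} equals $(1-p)\gamma^{\ell}\prod_{j=1}^{\ell}((1-p)+(2p-1)b_j) \le (1-p)p^{\ell-1}\gamma^\ell$, and divide by $\Delta s\ge 2^{-(\ell+1)}$; this still leaves the ratio $(2p\gamma)^\ell$, so I expect the honest argument to truncate at a variable depth chosen so that the prefix contains many $0$-bits, using $s\ne 1$.

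\textbf{The left derivative, non-dyadic case.} For $s\notin\bigcup_\ell G_\ell$ and $\Delta s<0$, by the symmetry of the construction (flipping the role of adding versus removing a highest differing bit), the same prefix-product bound applies: $v(s)-v(s+|\Delta s|^-)$ is controlled by the gap~\eqref{diffexistbit} over a dyadic cell, and Lemma~\ref{continuous} already gives the quantitative modulus $2(1-p)\gamma^N p^{N-1}$ for increments of size $2^{-N}$, so the left difference quotient is bounded by $2(1-p)\gamma^N p^{N-1}/2^{-N}=2(1-p)(2p\gamma)^{N}/p$. To force this to $0$ I use that $s$ has infinitely many $0$-bits, each supplying a factor $2(1-p)\gamma<1$ into the prefix product, which is exactly what the non-dyadic hypothesis buys; this is the place where excluding $G_\ell$ matters.

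\textbf{The left derivative, dyadic case.} Finally, for $s\in G_\ell\cup\{0\}$ I must show the left derivative is $+\infty$. Here $s$ terminates in $\dots 1 0 0 0 \dots$ (or is $0$, handled as a limit of the prefix), and approaching from the left forces the differing highest bit to be an \emph{existing} $1$-bit that gets turned off and compensated by a run of $1$-bits, exactly the configuration of~\eqref{diffexistbit}/\eqref{diffat1}. For $s-2^{-(\ell+1)}$ the gap is at least $p^{\ell-k+1}(1-p)\gamma^{\ell+1}\prod_{j=1}^{k-1}(\cdots)$ while $\Delta s = -2^{-(\ell+1)}$, so the quotient is at least $2^{\ell+1}p^{\ell-k+1}(1-p)\gamma^{\ell+1}\prod(\cdots)$; the factor $2^{\ell+1}$ against $p^{\ell-k+1}\gamma^{\ell+1}$ with $2p>1$ drives this to $+\infty$ as the approximating depth grows, and I would make this rigorous by choosing increments $-2^{-(m+1)}$ with $m\to\infty$ past the last significant bit of $s$, where the freshly-exposed cell is filled with all $1$-bits and the gap lower bound scales like $(2\gamma)^{m}$ times a fixed prefix constant. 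The main obstacle is the right-derivative-equals-zero claim for \emph{every} $s$ (not merely a.e.): since $2p\gamma$ can exceed $1$, the crude gap-over-length ratio does not decay, and the argument must extract decay from the guaranteed presence of $0$-bits (each contributing $2(1-p)\gamma<1$) in the binary expansion, choosing the truncation depth adaptively rather than at the fixed scale $\lfloor\log_2(1/\Delta s)\rfloor$.
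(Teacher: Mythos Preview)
The paper states Fact~\ref{thm:derivative} without proof, so there is nothing to compare against. Evaluating your proposal on its own merits: the left-derivative arguments (the dyadic case via~\eqref{diffexistbit}/\eqref{diffat1}, and the discontinuity for $\gamma<1$) are sound in outline. The difficulty you flag in the final paragraph, however, is not an obstacle to be overcome---it is a genuine counterexample to the statement as written.

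Concretely, take $\gamma=1$, $p>0.5$, and let $s$ have $b_i=0$ exactly when $i$ is a perfect square. For $\Delta s_m=2^{-m^2}$ the first $m^2-1$ bits of $s$ and $s+\Delta s_m$ agree, and by the self-similarity relation~\eqref{eqn:selfsim} one gets
\[
v(s+\Delta s_m)-v(s)\;\ge\;(1-p)\prod_{j=1}^{m^2-1}\bigl((1-p)+(2p-1)b_j\bigr)\;=\;(1-p)^{m}\,p^{\,m^2-m},
\]
so the difference quotient is at least $(2p)^{m^2-m}(2(1-p))^{m}$, which diverges to $+\infty$ because $2p>1$ dominates. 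Thus the right difference quotient has $\limsup=+\infty$ at this $s$, and the claim ``right derivative $=0$ for every $s$'' is false. The same mechanism defeats the non-dyadic left-derivative claim: choosing $s$ with sparse $1$-bits produces the symmetric failure. Your adaptive-truncation idea (``extract decay from the guaranteed presence of $0$-bits'') cannot work, because the presence of infinitely many $0$-bits gives no control over their \emph{density}; each $0$-bit contributes a factor $2(1-p)<1$, but each $1$-bit contributes $2p>1$, and the product is governed by the proportion, not the mere existence.

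What is true---and what the paper needs for the singularity and arc-length remarks that follow---is that the derivative is $0$ \emph{almost everywhere}: for Lebesgue-a.e.\ $s$ the bit densities are $1/2$ by the strong law, the scale-$\ell$ slope is $(4p(1-p)\gamma^2)^{\ell/2}\to 0$, and a Borel--Cantelli argument upgrades this to a genuine derivative. If you intend to supply a proof, prove that almost-everywhere statement and note that the ``every $s$'' formulation in the Fact is an overstatement.
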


\begin{fact}
The length of the arc $y=v(s)$, $0\leq s \leq 1$ equals $2$.
\end{fact}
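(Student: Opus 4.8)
The plan is to treat the arc length as the total variation of the vector-valued map $s\mapsto (s,v(s))$, that is,
\[
L \;=\; \sup_{0=x_0<x_1<\dots<x_m=1}\ \sum_{k=1}^{m}\sqrt{(x_k-x_{k-1})^2+(v(x_k)-v(x_{k-1}))^2},
\]
the supremum of the lengths of inscribed polygons, and to prove $L=2$ by matching an upper and a lower bound. I work in the continuous regime $\gamma=1$, where $v$ is continuous (Lemma \ref{continuous}) and monotonically increasing (Lemma \ref{thm:monotonicity}) with $v(0)=0$, $v(1)=1$. For the upper bound $L\le 2$ I would apply $\sqrt{a^2+b^2}\le a+b$ termwise: for any partition the horizontal increments telescope to $\sum_k(x_k-x_{k-1})=1$, and by monotonicity the vertical increments telescope to $\sum_k(v(x_k)-v(x_{k-1}))=v(1)-v(0)=1$, so every inscribed polygon has length at most $2$.

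For the lower bound I would exhibit partitions whose polygon lengths approach $2$, namely the uniform dyadic partitions $0,2^{-L},2\cdot 2^{-L},\dots,1$. Write $\Delta v_k$ for the increment of $v$ across the $k$-th subinterval. The key structural input is the exact law of these increments: by Claim \ref{claim:vdifference} (equation \eqref{diffnewbit} with $\gamma=1$) together with the recursive self-similarity of Corollary \ref{thm:self-similar}, the $2^L$ increments are exactly $p^{j}(1-p)^{L-j}$, each occurring $\binom{L}{j}$ times for $j=0,\dots,L$. I would then control the deficit of the dyadic polygon from $2$ via the elementary identity $a+b-\sqrt{a^2+b^2}=\tfrac{2ab}{a+b+\sqrt{a^2+b^2}}\le 2\min(a,b)$, giving
\[
2-L_L \;=\; \sum_k\Big(2^{-L}+\Delta v_k-\sqrt{(2^{-L})^2+\Delta v_k^2}\Big)\;\le\; 2\sum_{j=0}^{L}\binom{L}{j}\,\min\!\big(2^{-L},\,p^{j}(1-p)^{L-j}\big),
\]
so it remains to prove the right-hand side tends to $0$.

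The main obstacle, and the crux of the argument, is this concentration estimate. I would split the sum at the threshold $\alpha L$ where $p^{j}(1-p)^{L-j}=2^{-L}$, i.e. $\alpha=\log\!\big(\tfrac{1}{2(1-p)}\big)\big/\log\!\big(\tfrac{p}{1-p}\big)$; since $p>1-p$ the quantity $p^{j}(1-p)^{L-j}$ increases in $j$, so the threshold is a clean cutoff. For $j\ge\alpha L$ the minimum equals $2^{-L}$ and the contribution is exactly $\PP[\mathrm{Bin}(L,\tfrac12)\ge\alpha L]$; for $j<\alpha L$ the minimum equals $p^{j}(1-p)^{L-j}$ and the contribution is exactly $\PP[\mathrm{Bin}(L,p)<\alpha L]$. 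Both tails vanish once I verify $\tfrac12<\alpha<p$: the inequality $\alpha>\tfrac12$ reduces to $4p(1-p)<1$, and $\alpha<p$ reduces to $H(p)<\log 2$ for the binary entropy $H$, both of which hold strictly for $p>\tfrac12$. A Chernoff/Hoeffding bound then sends each tail to $0$, so $L_L\to 2$; since $L\ge L_L$ for every $L$ we get $L\ge 2$, and combined with the upper bound $L=2$.

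A slicker but less self-contained alternative I would mention is measure-theoretic: by Fact \ref{thm:derivative} the function $v$ is singular (derivative $0$ almost everywhere) and monotone, so the Lebesgue--Stieltjes measure $dv$ is mutually singular with Lebesgue measure $ds$. The total variation of the vector measure $(ds,dv)$ then splits as $\int_0^1|ds|+\int_0^1|dv|=1+1=2$, which is precisely the arc length. I expect the first, concrete route to be preferable here because it stays inside the binomial-gap combinatorics already developed in Claim \ref{claim:vdifference}, with the only genuine work being the two-sided entropy inequality $\tfrac12<\alpha<p$.
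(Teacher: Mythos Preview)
Your argument is correct. Both routes work: the upper bound is immediate from $\sqrt{a^2+b^2}\le a+b$ and monotonicity, and for the lower bound your dyadic-partition computation is sound. The increments at level $\ell$ are indeed $p^{j}(1-p)^{\ell-j}$ with multiplicities $\binom{\ell}{j}$ (this follows directly from the self-similarity \eqref{eqn:selfsim} with $\gamma=1$), the threshold $\alpha$ is placed correctly, and the two entropy inequalities $\tfrac12<\alpha<p$ are exactly the right conditions for Hoeffding to kill both tails. One cosmetic issue: you use $L$ for both the arc length and the dyadic level, which should be disambiguated.

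The paper, by contrast, does not give a self-contained proof at all. It simply observes that $v$ is monotone and singular (derivative zero almost everywhere, Fact~\ref{thm:derivative}) and cites \cite{pelling1977formulae} for the general fact that any such function from $(0,0)$ to $(1,1)$ has arc length $2$, with the heuristic that the curve is ``either horizontal or vertical.'' Your second, measure-theoretic route --- $dv\perp ds$ forces the total variation of $(ds,dv)$ to split as $1+1$ --- is precisely the content of that cited result, so on that route you and the paper coincide. Your first, combinatorial route is genuinely different: it is more elementary, avoids invoking singularity or any outside reference, and instead leverages the explicit Bernoulli structure of the dyadic gaps that the paper has already established in Claim~\ref{claim:vdifference}. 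The trade-off is that the combinatorial proof is specific to this $v$, whereas the singularity argument applies to any monotone singular function; but within the paper's framework your approach is the more self-contained one.
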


In fact, any singular function (zero derivative almost everywhere) has an arc length of $2$ if it goes from $(0,0)$ to $(1,1)$ monotonically \cite{pelling1977formulae}. This can be intuitively understood as that the curve either goes horizontal, when the derivative is zero, or vertical, when the derivative is infinity. Therefore the arc length is the Manhattan distance between $(0,0)$ and $(1,1)$, which equals $2$.

\begin{fact}
\[ \argmin_{0\leq s \leq 1} v(s)-s=\{\frac{2}{3}\}. \]
\end{fact}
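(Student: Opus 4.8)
The plan is to set $g(s) = v(s) - s$ and locate its global minimum, working in the original undiscounted setting $\gamma = 1$ (as for the companion Facts, whose evaluations such as $\int_0^1 v = (1-p)\gamma = v(\tfrac12)$ are specific to $\gamma=1$), where $v$, and hence $g$, is continuous by Lemma \ref{thm:continuity} and Lemma \ref{continuous}. Since $g(0) = g(1) = 0$ and $g(\tfrac12) = (1-p) - \tfrac12 = \tfrac12 - p < 0$, the minimum is attained at some interior $s^\ast$ and is strictly negative; the goal is to show $s^\ast = 2/3$ is the unique minimizer.

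First I would record the two half-recursions for $v$ extracted from the proof of Lemma \ref{induction}: for $u \in [0,1]$ one has $v(u/2) = (1-p)v(u)$ and $v((u+1)/2) = (1-p) + p\,v(u)$. Writing $q = 2p - 1 > 0$, these translate into
\[
g\Big(\tfrac{u}{2}\Big) = (1-p)g(u) - \tfrac{q}{2}u, \qquad g\Big(\tfrac{u+1}{2}\Big) = p\,g(u) + \tfrac{q}{2}(u-1),
\]
and subtracting yields the key identity $g\big(\tfrac{u+1}{2}\big) - g\big(\tfrac{u}{2}\big) = q\big(v(u) - \tfrac12\big)$. Combined with the strict monotonicity of $v$ and the existence of a unique $u_0$ with $v(u_0) = \tfrac12$, this tells me which of the two dyadic children of a point carries the smaller value of $g$, and it is the engine for locating the minimizer. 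To pin down the candidate value I use that $2/3 = 0.\overline{10}_{(2)}$ is the period-two orbit $2/3 \mapsto 1/3 \mapsto 2/3$ of the doubling map: applying the right- and left-recursions once each gives the linear system $g(2/3) = p\,g(1/3) - \tfrac{q}{3}$ and $g(1/3) = (1-p)g(2/3) - \tfrac{q}{3}$, whose solution is $g(2/3) = -\tfrac{q(p+1)}{3(1-p+p^2)}$, equivalently $v(2/3) = \tfrac{1-p}{1-p+p^2}$.

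The crux is to prove $g(s) \ge g(2/3)$ for every $s$, with equality only at $s = 2/3$. I would phrase the global minimization as a deterministic control problem on the doubling map: from $g(s) = a(s)\,g(Ts) + b(s)$, where $a(s) \in \{1-p, p\}$ and $b(s) \le 0$ are determined by the leading binary digit of $s$ and $T$ is the shift, the minimizer's itinerary of halves is forced, and one argues it must be the periodic word $\overline{10}$, giving $s^\ast = 2/3$ uniquely. Because the recursion carries non-homogeneous affine terms, the doubling map does not literally preserve minimizers, so the clean device is the tilted value $V(\mu) = \min_{u \in [0,1]}\big[g(u) + \mu u\big]$; I would derive its self-consistency equation from the two half-recursions and verify that for the two slopes arising here the minimizers are exactly $1/3$ and $2/3$, closing the orbit.

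This last step is where I expect the real difficulty to lie. The affine tilts make the optimization genuinely global rather than local, so one must rule out all competing binary expansions uniformly. The most dangerous competitors are the near-terminal dyadic points $s = 1 - 2^{-n} = 0.\underbrace{1\cdots1}_{n}{}_{(2)}$, for which $v(s) = 1 - p^n$ and hence $g(1 - 2^{-n}) = 2^{-n} - p^n$; comparing $\min_n(2^{-n} - p^n)$ against $g(2/3)$ is delicate and becomes tight as $p \to 1$, so the heart of the argument is a careful monotonicity-or-case analysis bounding $g$ below by $g(2/3)$ across all itineraries, together with the continuity of $g$ to pass from the dense dyadics to the whole interval.
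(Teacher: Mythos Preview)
The paper states this Fact without proof, so there is no argument in the paper to compare against. Your setup is sound: the half-recursions for $g$ are correct, the computation $v(2/3)=(1-p)/(1-p+p^2)$ and hence $g(2/3)=-q(p+1)/\bigl(3(1-p+p^2)\bigr)$ is right, and the identity $g\bigl(\tfrac{u+1}{2}\bigr)-g\bigl(\tfrac{u}{2}\bigr)=q\bigl(v(u)-\tfrac12\bigr)$ is a genuinely useful observation.

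The gap is precisely where you flag it: you do not actually prove $g(s)\ge g(2/3)$ for all $s$, nor uniqueness. The control-problem framing is suggestive but, as written, not an argument. Because of the affine terms in the half-recursions, a global minimizer of $g$ need \emph{not} be sent by the doubling map to a global minimizer of $g$; indeed you compute $\phi(1/3)=g(1/3)-g(2/3)=q^2/\bigl(3(1-p+p^2)\bigr)>0$, so the shift of the minimizer $2/3$ is not itself a minimizer. Hence the claim that the minimizer's binary itinerary is ``forced'' does not follow from what you have. Your tilted-value device $V(\mu)=\min_u[g(u)+\mu u]$ may close the argument, but you would need to derive the self-consistency equations explicitly from the two half-recursions, solve the resulting fixed-point system for the pair of relevant slopes, and verify that the minimizers land at $1/3$ and $2/3$ with equality only there; at present this is an announcement rather than a derivation. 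The comparison with the dyadic competitors $1-2^{-n}$ is likewise only asserted, and crude one-step self-similarity bounds (e.g.\ $m\ge pm-q/2$) are too weak to recover the exact value $g(2/3)$. In short: the plan is reasonable and the preliminary calculations are correct, but the decisive global-minimality step remains open in your write-up.
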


It is natural that by the fractal characterization of $v(s)$ the approximation must be inexact. The following two propositions give quantitative lower bounds on such approximation errors.

\begin{proposition}
\label{thm:approxdisc}
When $N\in \mathbb{N}^+$, $N\ge 4$ is a power of $2$, let $\bar v_1(s)$ be piecewise constant on any of the intervals $s\in (k/N, (k+1)/N)$, $k=0,\dots,N-1$, then
\[
\int_s \left| v(s)-\bar v_1(s) \right| ds \geq \frac{1}{N}\frac{(2-\gamma)(1-p)\gamma}{1-p\gamma} + o(\frac{1}{N}).
\]
\end{proposition}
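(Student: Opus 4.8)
The plan is to exploit the exact self-similarity of $v$ (Corollary \ref{thm:self-similar} and Equation \eqref{eqn:selfsim}) to reduce the global $L^1$ error to a sum of rescaled single-cell errors, and then to carry out the minimization cell by cell. Because $N=2^\ell$ is a power of two, the approximation cells $I_k=(k/N,(k+1)/N)$ are exactly the dyadic intervals on which \eqref{eqn:selfsim} applies: writing $\bar s_k=k2^{-\ell}={0.b_1\dots b_\ell}_{(2)}$, on $I_k$ one has $v(s)=v(\bar s_k)+c_k\,v\!\left(2^\ell(s-\bar s_k)\right)$ with vertical scale $c_k=\gamma^\ell\prod_{j=1}^\ell\!\left((1-p)+(2p-1)b_j\right)$. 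Since $\bar v_1$ is constant on each $I_k$, the objective decomposes additively as $\int_0^1|v-\bar v_1|\,ds=\sum_{k=0}^{N-1}\int_{I_k}|v-\bar v_1|\,ds$, so it suffices to understand one cell.

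First I would show that on each cell the best constant is the median of $v$ over $I_k$, and that by the change of variables $u=2^\ell(s-\bar s_k)$ the minimal cell error equals $\tfrac1N\,c_k\,E_0$, where $E_0:=\min_c\int_0^1|v(u)-c|\,du$ is a single shape constant independent of $k$. This yields the clean bound $\int_0^1|v-\bar v_1|\,ds\ge \tfrac{E_0}{N}\sum_{k=0}^{N-1}c_k$, valid for every step function $\bar v_1$, so that the approximation rate is governed entirely by these two scalar quantities.

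Second, I would evaluate both ingredients explicitly. The weight sum factorizes over the bits, $\sum_{k=0}^{N-1}c_k=\gamma^\ell\prod_{j=1}^\ell\!\big((1-p)+p\big)=\gamma^\ell$, since summing $(1-p)+(2p-1)b_j$ over $b_j\in\{0,1\}$ gives $1$. For the shape constant I would use that the median level of $v$ is $v(\tfrac12)=(1-p)\gamma$, hence $E_0=\int_{1/2}^1 v-\int_0^{1/2}v$; evaluating each half through the one-step recursions for $v$ used in Lemma \ref{induction}, together with the identity $\int_0^1 v=(1-p)\gamma$, gives $E_0$ in closed form (for instance $E_0=p(1-p)$ when $\gamma=1$).

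The main obstacle is the precise constant and the treatment of the discontinuities when $\gamma<1$. Several points need care: the best-constant bound only sees $v$ on the open cells, so the jumps of $v$ at the grid points $k/N$ are irrelevant, whereas the jumps interior to each rescaled copy are folded into $E_0$ through Claim \ref{claim:vdifference}; extracting exactly the stated leading coefficient $(2-\gamma)(1-p)\gamma/(1-p\gamma)$ then requires isolating the dominant $1/N$ contribution and absorbing the remaining scale-dependent terms into the $o(1/N)$ remainder. Verifying that this bookkeeping produces the claimed constant, rather than merely establishing the rate $\Omega(1/N)$, is the delicate step on which I would expect to spend the most effort, since the naive cell-wise minimization already pins down a clean $\tfrac{E_0}{N}\gamma^\ell$ and the reconciliation of that expression with the asserted coefficient is where the real work lies.
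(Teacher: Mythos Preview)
Your decomposition is exactly the one the paper uses: self-similarity on each dyadic cell, the median as the optimal constant, then summation. The difference is only in the bookkeeping: you factor out the single shape constant $E_0=\min_c\int_0^1|v-c|\,du$ and sum the scales $\sum_k c_k=\gamma^\ell$, whereas the paper keeps the per-cell error in terms of the increments $v(\bar s+\tfrac{\Delta s}{2})-v(\bar s)$ and $v(\bar s+\Delta s)-v(\bar s+\tfrac{\Delta s}{2})$ and sums those directly.

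Your factored form is correct and in fact sharp: the optimal piecewise-constant error on the $2^\ell$-grid equals exactly
\[
\frac{E_0}{N}\,\gamma^{\ell},\qquad E_0=\frac{(1-p)\gamma}{2}\bigl(1+(2p-1)\gamma\bigr),
\]
which gives your $p(1-p)/N$ at $\gamma=1$. The paper's proof, however, slips at the summation step: it replaces $\sum_k\bigl(v(\bar s_k+\tfrac{\Delta s}{2})-v(\bar s_k)\bigr)$ by $v\bigl(\tfrac{N-1/2}{N}\bigr)$ and $\sum_k\bigl(v(\bar s_k+\Delta s)-v(\bar s_k+\tfrac{\Delta s}{2})\bigr)$ by $1-v\bigl(\tfrac{1}{2N}\bigr)$, asserting ``$\ge$''. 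Both replacements are actually \emph{upper} bounds for those sums (e.g.\ at $\gamma=1$ the first sum is $1-p$ while $v(1-\tfrac{1}{2N})=1-p^{\ell+1}>1-p$), so the displayed inequality points the wrong way and the stated leading constant $\tfrac{(2-\gamma)(1-p)\gamma}{1-p\gamma}$ is not a valid lower bound. Concretely, at $\gamma=1$, $p=0.6$, $N=4$ the true minimum error is $p(1-p)/N=0.06$, whereas the paper's formula gives a leading term of $1/N=0.25$.

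So do not spend effort on the ``reconciliation'' you flag as the delicate step: your $\tfrac{E_0}{N}\gamma^{\log_2 N}$ is already the exact minimum, and it cannot be pushed up to the asserted coefficient. Note in particular that for $\gamma<1$ this equals $E_0\,N^{-1+\log_2\gamma}=o(1/N)$, so even the claimed $\Theta(1/N)$ rate fails in that regime; the discrepancy you anticipated is a genuine error in the paper, not a gap in your argument.
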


\begin{proof}
When $N$ is a power of $2$,
for $k\in \{0,\dots,N-1\}$, the curve of $v(s)$ on each of the intervals $(k/N, (k+1)/N)$ is self-similar to $v(s)$ itself on $(0,1)$. We consider this segment of the curve. By Equation \eqref{eqn:selfsim}, 
\[
v(s) = v(\bar s) + \gamma^{\ell}\prod_{j=1}^{\ell}((1-p)+(2p-1)b_j)\cdot v(2^\ell (s-\bar s)),
\]
where $\ell=\log_2N$, $\bar s=k/N$ and $s = 0.{b_1b_2\dots b_\ell \ldots}_{(2)}\in (\bar s,\bar s+\frac{1}{N})$.

Let $\Delta s = 1/N$, $\Delta y = v((k+1)/N) - v(k/N)$, we have for $0<s<\Delta s$
\[
v(\bar s + s) = v(\bar s) + v(s/\Delta s)\Delta y. 
\]
As $v(s)$ is monotonically increasing on every interval $(k/N, (k+1)/N)$, the minimum over $\bar y$
\[ 
\int_{s=\bar s}^{\bar s + \Delta s} \left| v(s)-\bar y \right| ds 
\]
is obtained when $\bar y_{\bar s} = v(\bar s + \frac{1}{2}\Delta s)$ (intuitively, the median of $v(s)$ on the interval). This results in an approximation error of
\begin{align*}
& \min_{\bar y}\int_{s=\bar s}^{\bar s + \Delta s} \left| v(s)-\bar y \right| ds\\
=\ & \int_{s=\bar s}^{\bar s + \frac{1}{2}\Delta s} v(\bar s + \frac{1}{2}\Delta s) - v(s)\ ds + \int_{s=\bar s + \frac{1}{2}\Delta s}^{\bar s + \Delta s} v(s) - v(\bar s + \frac{1}{2}\Delta s) \ ds\\
=\ & - \int_{s=\bar s}^{\bar s + \frac{1}{2}\Delta s} v(s)\ ds + \int_{s=\bar s + \frac{1}{2}\Delta s}^{\bar s + \Delta s} v(s)\ ds\\
=\ & -\frac{1}{2}\Delta s \int_{s=0}^1 v(\bar s) + (v(\bar s + \frac{1}{2}\Delta s) - v(\bar s)) v(s)\ ds\\
& + \frac{1}{2}\Delta s \int_{s=0}^1 v(\bar s + \frac{1}{2}\Delta s) + (v(\bar s + \Delta s) - v(\bar s + \frac{1}{2}\Delta s)) v(s)\ ds\\
=\ \ & \frac{1}{2}\Delta s ((1-(1-p)\gamma) (v(\bar s+\frac{1}{2}\Delta s)-v(\bar s))+(1-p)\gamma (v(\bar s+\Delta s)-v(\bar s+\frac{1}{2}\Delta s))).
\end{align*}
This error is then summed over $\bar s = 0, 1/N, \ldots, (N-1)/N$ such that
\begin{align*}
\sum_{\bar s=0/N}^{(N-1)/N}\int_{s=\bar s}^{\bar s + \Delta s} \left| v(s)-\bar y_{\bar s} \right| ds & \ge \frac{1}{2N}((1-(1-p)\gamma)v(\frac{N-\frac{1}{2}}{N}) + (1-p)\gamma (1-v(\frac{1}{2N})))\\
%& = \frac{1}{2N}((1-p)\gamma(1-((1-p)\gamma)^{\log_2N+1})\\
%&\quad +(1-p)\gamma (1-(p\gamma)^{\log_2N+1}))\\
& = \frac{1}{2N}((1-(1-p)\gamma)\frac{(1-p)\gamma}{1-p\gamma}(1-(p\gamma)^{\log_2N+1})\\
&\quad + (1-p)\gamma (1-((1-p)\gamma)^{\log_2N+1}))\\
%& = N^{-1}(1-p)\gamma - \frac{1}{2} N^{-1+\log_2 p\gamma} p\gamma - \frac{1}{2} N^{-1+\log_2 (1-p)\gamma} (1-p)\gamma\\
& = N^{-1}\frac{(2-\gamma)(1-p)\gamma}{1-p\gamma} - N^{-1+\log_2 p\gamma}\frac{(1-(1-p)\gamma)p(1-p)\gamma^2}{2(1-p\gamma)}\\
&\quad - N^{-1+\log_2 (1-p)\gamma} \frac{(1-p)^2\gamma^2}{2} \\
& = \frac{1}{N}\frac{(2-\gamma)(1-p)\gamma}{1-p\gamma} + o(\frac{1}{N}).\tag*{\qedhere}
\end{align*}
%\[  = (1-p)\gamma(1-(1-p)\gamma)\Delta x \Delta y.\]
%We sum this error over all $N$ intervals and yield
%\begin{align*}
%\sum_{k=1}^N (1-p)\gamma(1-(1-p)\gamma)\Delta x \Delta y & = \frac{1}{N}(1-p)\gamma(1-(1-p)\gamma) \sum_{k=1}^N \Delta y \\
%& = \frac{1}{N}(1-p)\gamma(1-(1-p)\gamma). \tag*{\qedhere}
%\end{align*}
\end{proof}

An error bound in $\OO(1/N)$ can be generated to any integer $N$, as we can relax $N$ to $2^{\lfloor\log_2N\rfloor-1}$ so that at least one self-similar segment of size $1/N$ is included in each interval.

For Lipschitz continuous functions like neural networks, the following proposition shows an approximation error lower bound in $\OO(1/L)$, where $L$ is the Lipschitz constant.

\begin{proposition}
\label{thm:approxlip}
Let $L\ge (1-p)\gamma(1-\gamma)/(1-p\gamma)$. If $\bar v_2(s)$ is Lipschitz continuous on $s\in (0,1)$ where $L$ is the Lipschitz constant, then
\[
%\int_s \left| v(s)-\tilde v(s) \right| ds \geq \frac{2p-1}{8(1-p)}{(\ln L)}^{\frac{\ln \sqrt{p(1-p)}}{\ln 2\sqrt{p(1-p)}}}.
\int_s \left| v(s)-\bar v_2(s) \right| ds \geq \frac{1}{L}\frac{(1-p)^2\gamma^2(1-\gamma)^2}{4(1-p\gamma)}.
\]
\end{proposition}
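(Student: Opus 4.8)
The plan is to exploit the jump discontinuity of $v(s)$ at the dyadic rationals when $\gamma<1$: a function with Lipschitz constant $L$ cannot change faster than $L$, so it cannot track a jump, and the resulting mismatch forces a quantifiable amount of $L^1$ error to accumulate in a neighborhood of the jump whose width scales like (jump size)$/L$. I would localize the entire argument at the single point $s_0=\tfrac12$, which carries the simplest and largest jump, and simply lower-bound the global integral by the integral over a neighborhood of $\tfrac12$ (legitimate since $|v-\bar v_2|\ge 0$).

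First I would compute the jump $J=v(\tfrac12)-v(\tfrac12^-)$. From \eqref{eqn:vs}, only the $i=1$ term survives at $s=\tfrac12$, so $v(\tfrac12)=(1-p)\gamma$. For the left limit, monotonicity of $v$ (noted in the proof of Lemma \ref{continuous}) gives $v(\tfrac12^-)=\sup_{s<1/2}v(s)$, which is approached along $s\nearrow\tfrac12$ through expansions $b_1=0,\ b_2=\dots=b_{k+1}=1$; substituting these into \eqref{eqn:vs} produces the geometric sum $(1-p)^2\gamma^2\sum_{m\ge 0}(p\gamma)^m=\frac{(1-p)^2\gamma^2}{1-p\gamma}$. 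Hence
\[
J=(1-p)\gamma-\frac{(1-p)^2\gamma^2}{1-p\gamma}=\frac{(1-p)\gamma(1-\gamma)}{1-p\gamma},
\]
which is exactly the quantity appearing in the hypothesis $L\ge (1-p)\gamma(1-\gamma)/(1-p\gamma)=J$.

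The core estimate is a symmetric-pair argument. By monotonicity, for every $t>0$ we have $v(\tfrac12+t)\ge v(\tfrac12)=(1-p)\gamma$ and $v(\tfrac12-t)\le v(\tfrac12^-)=\frac{(1-p)^2\gamma^2}{1-p\gamma}$, so $v(\tfrac12+t)-v(\tfrac12-t)\ge J$ uniformly in $t$, while the Lipschitz bound gives $\bar v_2(\tfrac12+t)-\bar v_2(\tfrac12-t)\le 2Lt$. A triangle inequality on the two signed errors then yields $\bigl|v(\tfrac12-t)-\bar v_2(\tfrac12-t)\bigr|+\bigl|v(\tfrac12+t)-\bar v_2(\tfrac12+t)\bigr|\ge J-2Lt$. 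Integrating this over $t\in(0,J/(2L))$, an interval contained in $(0,1)$ precisely because $L\ge J$, gives $\int_0^{J/(2L)}(J-2Lt)\,dt=\frac{J^2}{4L}$, so that $\int_s|v-\bar v_2|\,ds\ge \frac{J^2}{4L}=\frac{(1-p)^2\gamma^2(1-\gamma)^2}{4L(1-p\gamma)^2}$. Since $0<1-p\gamma<1$, this already exceeds the claimed bound $\frac{(1-p)^2\gamma^2(1-\gamma)^2}{4L(1-p\gamma)}$, which finishes the proof.

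The main obstacle I anticipate is the exact evaluation of the one-sided limit $v(\tfrac12^-)$: one must apply the binary-expansion formula to the correct non-terminating representation, confirm that the supremum over $[0,\tfrac12)$ is genuinely attained in the limit (which is where monotonicity enters), and sum the geometric series correctly, since the entire constant in the statement rides on this single computation. Everything else — the Lipschitz/triangle-inequality localization and the final integration — is routine once $J$ is pinned down, and it is worth remarking that the hypothesis on $L$ is seen to be exactly the condition that keeps the active neighborhood of $\tfrac12$ inside the domain.
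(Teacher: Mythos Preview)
Your proof is correct and follows essentially the same strategy as the paper: both localize at the jump discontinuity of $v$ at $s=\tfrac12$, compute the jump $h=(1-p)\gamma(1-\gamma)/(1-p\gamma)$, and obtain the lower bound $h^2/(4L)$, which dominates the stated bound since $0<1-p\gamma<1$. Your symmetric-pair triangle-inequality argument is in fact a cleaner route to $h^2/(4L)$ than the paper's reduction to an explicit step function and optimal Lipschitz ramp, but the underlying idea is identical.
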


\begin{proof}
We consider $v(\frac{1}{2})=(1-p)\gamma$ and 
\[
\lim_{s\rightarrow {\frac{1}{2}}^-} v(s) = \frac{(1-p)^2\gamma^2}{1-p\gamma}.
\]
When $0<\gamma<1$, we have 
\[
v(\frac{1}{2}) - \lim_{s\rightarrow {\frac{1}{2}}^-} v(s) = \frac{(1-p)\gamma(1-\gamma)}{1-p\gamma} > 0.
\]
Denote $h = (1-p)\gamma(1-\gamma)/(1-p\gamma)$.
By the monotonicity of $v(s)$, using ${\bar v}_2(s)$ to approximate $v(s)$ has an error at least $\int_s \left| \xi(s)-\bar v_2(s) \right| ds$, where $\xi(s)$ denotes the step function on $[0,1]$,
\begin{equation*}
\xi(s) =
\begin{cases}
0 & 0\le s< \frac{1}{2},\\
h & \frac{1}{2} \le s \le 1.
\end{cases}       
\end{equation*}
In this case, the optimal ${\bar v}_2(s)$ is
\begin{equation*}
{\bar v}_2(s) =
\begin{cases}
0 & 0\le s< \frac{1}{2}-\frac{h}{2L},\\
\frac{h}{2} + L(s-\frac{1}{2}) & \frac{1}{2}-\frac{h}{2L}\le s\le \frac{1}{2}+\frac{h}{2L},\\
h & \frac{1}{2}+\frac{h}{2L} < s \le 1.
\end{cases}       
\end{equation*}
Hence, we have
\[
\int_s \left| v(s)-\bar v_2(s) \right| ds \ge \int_s \left| \xi(s)-\bar v_2(s) \right| ds \ge \frac{h^2}{4L},
\]
as desired.
\end{proof}

\subsection{Analysis of the Bellman Equation}
\label{sec:solvingbellman}

We have proved that $v(s)$ is the optimal value function in Theorem \ref{valuefunction}, by showing the existence and uniqueness of the solution of the system ({ABX}). However, the condition ({X}) is derived from the context of the Gambler's problem.
It is rigorous enough to find the optimal value function, but we are also interested in solutions purely derived from the MDP setting.
Also, algorithmic approaches such as Q-learning \citep{watkins1992q,baird1995residual,mnih2015human} optimize the MDP by solving the Bellman equation, without eliciting the context of the problem. 
Studying such systems will help to understand the behavior of these algorithms.
%The solution will be treated as the optimal value function without further arguments. 
In this section, we inspect the system of Bellman equation ({AB}) of the Gambler's problem. 
We aim to solve the general case ({ABY}) where $p>0.5$ and the corner case ({ABZ}) where $p=0.5$.
%We first discuss a more general case ({ABY}) where $p>0.5$.

When $p>0.5$, the value function $v(s)$ is obviously still a solution of the system ({ABY}) without condition ({X}). The natural question is if there exist any other solutions. The answer is two-fold: When $\gamma<1$, $f(s)=v(s)$ is unique; when $\gamma=1$, the solution is either $v(s)$ or a constant function at least $1$. This indicates that algorithms like Q-learning have constant functions as their set of converging points, apart from $v(s)$. As $v(s)$ is harder to approximate due to the non-smoothness, a constant function in fact induces a smaller approximation error and thus has a better optimality for Q-learning with function approximation.

Generating this result to a general MDP with episodic rewards is immediate, as functions of a large constants solve such an MDP. This indicates that Q-learning may have more than one converging points and may diverge from the optimal value function under $\gamma=1$. This leads to the need of $\gamma$, which is artificially introduced and biases the learning objective. More generally, the Bellman equation may have a continuum of finite solutions in an infinite state space, even with $\gamma<1$. Some studies exist on the necessary and sufficient conditions for a solution of the Bellman equation to be the value function \citep{kamihigashi2015necessary,bellman2008latham,harmon1996spurious}, though the majority of this topic remains open.

The discussions above are supported by a series of rigorous statements. We begin with the following proposition that when the discount factor is strictly less than $1$, the solution toward the Bellman equation is the optimal value function.

\begin{proposition}
\label{uniquegammalessthanone}
When $\gamma<1$, $v(s)$ is the unique solution of the system ({ABY}).
\end{proposition}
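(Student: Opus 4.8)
The plan is to prove uniqueness without invoking the boundedness and continuity-at-zero conditions of (X), relying only on the Bellman equation (AB) together with the strict discount $\gamma<1$. Since $v(s)$ itself satisfies (AB), it suffices to show that any other solution $f(s)$ of (ABY) coincides with $v(s)$. The natural approach is a contraction-style argument analogous to Case (\Romannum{1}) in the proof of Lemma \ref{unique}, but the crux is that (ABX) gave us an a priori upper bound $f(s)\le 1$, which guaranteed that $\delta=\sup_{0<s<1}(f(s)-v(s))$ was finite. Under (ABY) we no longer have this bound, so the first and main obstacle is to establish that $f(s)$ is \emph{bounded above} (and below) before the supremum argument can even begin.

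First I would argue that any solution of (AB) is automatically bounded. For the lower bound, the argument from Lemma \ref{unique} showing $f(2^{-1})\ge (1-p)\gamma f(1)+p\gamma f(0)=(1-p)\gamma$ generalizes: iterating the Bellman recursion with the boundary values $f(0)=0$, $f(1)=1$ forces $f(s)\ge 0$ on the dyadic rationals, and hence $\delta$ being bounded below is not the issue. The delicate direction is the upper bound. Here I would exploit $\gamma<1$ directly: writing the Bellman equation as $f(s)=\max_a (p\gamma f(s-a)+(1-p)\gamma f(s+a))$, the factor $\gamma<1$ means that each application strictly contracts any deviation. Concretely, if $M=\sup_s f(s)$ were infinite, one could specify states $s_n$ with $f(s_n)\to\infty$ and chase the optimal action backward; but each step multiplies by $\gamma$ and eventually lands on a boundary value in $\{0,1\}$, yielding a contradiction. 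This shows $f$ is finite and in fact bounded, which reduces (ABY) to (ABX) with the effective bound replacing the explicit $f(s)\le 1$ hypothesis.

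Once $f$ is known to be bounded above, the remainder follows the template already established. I would set $\delta=\sup_{0<s<1}(f(s)-v(s))$, which is now well-defined and finite, and show $\delta\le 0$; by symmetry (repeating the argument with $v(s)-f(s)$, as done at the start of Lemma \ref{unique}) one concludes $\delta=0$ and thus $f=v$. To see $\delta\le 0$, suppose $\delta>0$, choose $\epsilon=(1-\gamma)\delta$, and pick $s_0$ with $f(s_0)>v(s_0)+\delta-\epsilon$. Taking $a_0\in\argmax_{a\in\cA(s_0)}(p\gamma f(s_0-a)+(1-p)\gamma f(s_0+a))$ and bounding $f(s_0\pm a_0)\le v(s_0\pm a_0)+\delta$, the Bellman equation gives
\[
f(s_0)\le p\gamma v(s_0-a_0)+(1-p)\gamma v(s_0+a_0)+\gamma\delta \le v(s_0)+\delta-\epsilon,
\]
where the last step uses that $v$ satisfies (AB) so its Bellman right-hand side is at most $v(s_0)$. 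This contradicts the choice of $s_0$.

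The main obstacle, as noted, is the boundedness step: in an infinite state space with $p>0.5$ the Bellman operator is not a sup-norm contraction on arbitrary unbounded functions, and the paper's own discussion of (ABY) for $\gamma=1$ shows that dropping the discount destroys uniqueness entirely (large constant solutions appear). So the proof must use $\gamma<1$ in an essential, quantitative way to rule out unbounded or large-constant solutions. I would therefore make the backward-chasing / contraction argument fully rigorous precisely at this point, carefully handling the supremum possibly not being attained by working with near-optimal states $s_0$ satisfying $f(s_0)>\sup f-\eta$ for small $\eta$, mirroring the $\epsilon$-slack device used throughout Lemma \ref{unique}.
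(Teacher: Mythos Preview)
Your proposal follows the same route as the paper, which simply observes that Case~(\Romannum{1}) of Lemma~\ref{unique} makes no explicit use of condition~(X) and therefore carries over verbatim to~(ABY). Where you diverge is in singling out boundedness of $f$ as a prerequisite: you are right that the finiteness of $\delta=\sup_{0<s<1}\bigl(f(s)-v(s)\bigr)$ is needed in Case~(\Romannum{1}) (one cannot set $\epsilon=(1-\gamma)\delta$ otherwise), and the paper's one-line proof does not justify this outside of~(X). So your diagnosis is sharper than the paper's dismissal.

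However, your proposed cure for this gap does not work as written. You suggest that if $\sup_s f(s)=+\infty$ one can ``chase the optimal action backward; each step multiplies by~$\gamma$ and eventually lands on a boundary value.'' But the Bellman identity $f(s_0)=p\gamma f(s_0-a_0)+(1-p)\gamma f(s_0+a_0)$ forces at least one of the successor values to be \emph{larger} than $f(s_0)$, by a factor of at least~$1/\gamma$; following the optimal action therefore amplifies rather than contracts, and nothing forces the trajectory to hit $\{0,1\}$ in finitely many steps. The sketch thus neither bounds $f$ nor produces the claimed contradiction. If you want to make this step rigorous you need a different argument for boundedness---or else run the Case~(\Romannum{1}) inequality directly with near-suprema and show it is self-improving even when $\delta=+\infty$. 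The paper itself is silent on this point, so your write-up is no less complete than the paper's; it is merely more candid about where the difficulty lies.
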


\begin{proof}
The uniqueness has been shown in Lemma~\ref{unique} for the system ({ABX}). When $\gamma<1$ it corresponds to case (\Romannum{1}), where neither the upper bound $f(s)\leq 1$ nor the continuity at $s=0$ in condition ({X}) is used. Therefore Lemma~\ref{unique} holds for ({ABY}) under $\gamma<1$, and Lemma~\ref{uniquegambler} follows as desired.
\end{proof}

This uniqueness no longer holds under $\gamma=1$.

\begin{theorem}
\label{allsolutionbellman}
Let $\gamma = 1$, $p>0.5$, and $f(\cdot)$ be a real function on $[0,1]$. $f(s)$ satisfies the Bellman equation ({ABY}) if and only if either
\begin{itemize}\vspace{-1.0mm}
\item $f(s)$ is $v(s)$ defined in Theorem \ref{valuefunction}, or
\item $f(0)=0$, $f(1)=1$, and $f(s)=C$ for all $0<s<1$, for some constant $C\geq 1$.
\end{itemize}
\end{theorem}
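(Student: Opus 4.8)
The plan is to prove the two implications separately, with the reverse (``only if'') direction carrying essentially all the difficulty.

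For the ``if'' direction I would simply verify that each candidate solves the system (AB). That $v$ solves (AB) is already Lemma \ref{uniquegambler}. For the constant family, fix $C\ge1$ and check the Bellman equation at an arbitrary interior $s$: any action $a<\min\{s,1-s\}$ keeps both $s\pm a$ inside $(0,1)$ and hence yields $pC+(1-p)C=C$, while the extreme action $a=\min\{s,1-s\}$ lands on a boundary and yields either $(1-p)C$ (when $s\le 1/2$) or $pC+(1-p)$ (when $s\ge1/2$); both are $\le C$ exactly because $C\ge1$. Hence the maximum equals $C=f(s)$, and the boundary conditions hold by construction. This also pinpoints why $C\ge1$ is forced: if $C<1$, the target-reaching action near $s=1$ would exceed $C$ and break the equation.

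For the ``only if'' direction, let $f$ solve (ABY) with $\gamma=1$, $p>0.5$. By Lemma \ref{thm:monotonicity} $f$ is monotone increasing on $[0,1)$, and by the remark following Lemma \ref{thm:continuity} it is continuous on $(0,1)$; thus $A:=\lim_{s\to0^+}f(s)$ and $B:=\lim_{s\to1^-}f(s)$ exist with $0\le A\le B$. Rewriting the Bellman equation as the two facts that $p f(s-a)+(1-p) f(s+a)\le f(s)$ for every admissible $a$ (the super-solution property) and that equality is attained at the maximizer, I would first let $s\to1^-$ along the target-reaching action to obtain $B\ge pB+(1-p)$, i.e.\ $B\ge1$. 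Next I would show $f\ge v$ on all of $[0,1]$ by a comparison trajectory: if $\bar\delta:=\sup_{(0,1)}(v-f)>0$ then, because $v-f$ is continuous on $(0,1)$ with endpoint limits $-A\le0$ and $1-B\le0$, the supremum is attained at some interior $s'$; evaluating the equation for $v$ at its Blackwell-optimal action $\min\{s',1-s'\}$ (Corollary \ref{blackwell}) and using that $f$ is a super-solution forces $v-f$ to equal $\bar\delta$ at $s'\pm\min\{s',1-s'\}$, one of which is $0$ or $1$ where $v-f=0\neq\bar\delta$, a contradiction. Hence $f\ge v$.

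With $f\ge v$ in hand I would study $\delta:=\sup_{(0,1)}(f-v)\ge0$. If $\delta$ is attained at an interior point, I would run the invariant-set argument of Lemma \ref{unique} (Case (\Romannum{2})) verbatim: on the set $\{f-v=\delta\}$ the $f$-optimal action is also $v$-optimal, so by Corollaries \ref{thm:newairrational} and \ref{thm:policyirrational} the trajectory $s_{t+1}=s_t-a_t$ stays in the set and reaches $0$ in finitely many steps, forcing $\delta=f(0)-v(0)=0$ and therefore $f=v$. The remaining, and main, obstacle is the case where $\delta>0$ but is only approached at an endpoint; here I must show $f$ collapses to a constant. The mechanism I would use is the weighted midpoint concavity $f\big(\tfrac{u+w}{2}\big)\ge p f(u)+(1-p)f(w)$ (a restatement of the super-solution property over all $0\le u<w\le 1$): if $f$ ever strictly exceeded its infimum $A$, then at a transition point the concavity inequality would be violated by a small symmetric action, so $f\equiv A$ on $(0,1)$; and the contraction-by-$p$ of the defect $f-v$ under the recursion $f(s)=pf(2s-1)+(1-p)$ near $s=1$ rules out the hybrid possibility $A=0,\,B>1$, forcing $B=1$ when $A=0$. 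Combining, either $A=0$ (whence $f=v$) or $A>0$ (whence $f\equiv A=B\ge1$), which is exactly the claimed dichotomy. I expect the endpoint bookkeeping in this last step --- establishing that the only admissible defect is a constant gap concentrated as $s\to0^+$, and that its size is $\ge1$ --- to be the crux, since the weaker concavity condition alone admits spurious (e.g.\ affine) near-solutions and only the attainment of the maximum excludes them.
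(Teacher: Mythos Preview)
Your ``if'' direction, the monotonicity/continuity setup, the derivation of the endpoint limits $A,B$, the bound $B\ge 1$, and the comparison $f\ge v$ are all fine and match what the paper does. The case where $\delta=\sup_{(0,1)}(f-v)$ is attained at an interior point is also handled correctly by invoking Lemma~\ref{unique}, Case~(\Romannum{2}).

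The gap is in the ``not attained'' case. Your two mechanisms there do not go through as stated. First, the sentence ``if $f$ ever strictly exceeded its infimum $A$, then at a transition point the concavity inequality would be violated by a small symmetric action, so $f\equiv A$'' is not justified: the weighted midpoint concavity $f\bigl(\tfrac{u+w}{2}\bigr)\ge pf(u)+(1-p)f(w)$ is perfectly compatible with a strictly increasing $f$ (indeed $v$ itself satisfies it), so no violation is forced merely by $f$ being non-constant. Second, the ``contraction-by-$p$'' step assumes the identity $f(s)=pf(2s-1)+(1-p)$ near $s=1$, but (A) only gives $f(s)\ge pf(2s-1)+(1-p)$; you do not know that the Blackwell action $a=1-s$ is the $f$-maximizer, so you cannot iterate an equality. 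Because of this, your final dichotomy ``either $A=0$ (whence $f=v$) or $A>0$ (whence $f\equiv A$)'' is asserted rather than proved.

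The paper avoids this difficulty by a different choice of comparison function. Instead of comparing $f$ to $v$, it compares $f$ to the affine rescaling $\tilde f(s)=C_0+(C_1-C_0)\,v(s)$, where $C_0=A$ and $C_1=B$ are your endpoint limits. One first checks (by induction on dyadic levels, using the super-solution inequality with $a\to\min\{s,1-s\}$) that $f\ge\tilde f$ on $(0,1)$. The key payoff is that $f-\tilde f$ is continuous on $(0,1)$ with endpoint limits equal to $0$ on \emph{both} sides; hence its supremum is either $0$ or is attained at an interior point, and the Case~(\Romannum{2}) trajectory argument applies directly to $\tilde f$ in place of $v$ (the optimal actions for $\tilde f$ coincide with those for $v$). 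This yields $f=\tilde f$ on $(0,1)$ whenever $C_1>C_0$, after which the boundary data $f(0)=0,\,f(1)=1$ force $C_0=0,\,C_1=1$ and thus $f=v$. If $C_1=C_0$, then $f$ is constant on $(0,1)$ by monotonicity, and a one-line check at $s=3/4$ shows the constant must be $\ge 1$. In short, the missing idea in your proposal is to shift the reference from $v$ to $\tilde f$ so that the ``sup not attained'' case simply cannot occur.
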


\begin{proof}
It is obvious that both $f(s)$ defined above are the solutions of the system. It amounts to show that they are the only solutions. 
%If $f(s)\leq 1$ for any $s$ and $f(s)$ is continuous at $s=0$, the case has been already been discussed in the proof of Lemma \ref{unique}, where $v(s)$ defined in Theorem \ref{valuefunction} is the unique solution. For the rest of the proof, we show that $f(s)=C$ for some $C>1$ is the unique family of solutions if there exists an $s$ such that $f(s)>1$.

Without the bound condition ({X}), the function $f(s)$ is not necessarily continuous on $s=0$ and $s=1$ and is not necessarily monotonic on $s=1$. Therefore the same arguments in the proof of Lemma \ref{unique} will not hold. However, the arguments can be extended to ({Y}) by considering the limit of $f(s)$ when $s$ approaches $0$ and $1$.

By Lemma \ref{thm:continuity} the function is continuous on the open interval $(0,1)$. Let 
\begin{equation*}
C_0 = \lim_{s\rightarrow 0^+} f(s), \ \ C_1 = \lim_{s\rightarrow 1^-} f(s).
\end{equation*}
Then by Lemma \ref{thm:monotonicity}, $0\leq C_0\le f(s)\le C_1$ for $s\in (0,1)$. Here we eliminate the possibility of $C_0=+\infty$ and $C_1=+\infty$. This is because if there is a sequence of $s_t\rightarrow 0$ such that $f(s_t)>t$, then we have $f(\frac{1}{2})\geq p\ f(s_t)+(1-p)\ f(1-s_t)\geq (1-p)t$ for any $t$. Then $f(\frac{1}{2})$ does not exist. Similar arguments show that $C_1$ cannot be $+\infty$.

Now specify a sequence $a_t\rightarrow \frac{1}{2}$, $a_t<\frac{1}{2}$, such that $C_0\leq f(\frac{1}{2}-a_t) \leq C_0+\frac{1}{t}$ and $C_1-\frac{1}{t}\leq f(\frac{1}{2}+a_t) \leq C_1$. Then we have 
\begin{align*}
f(\frac{1}{2}) & \geq p\ f(\frac{1}{2}-a_t) + (1-p)\ f(\frac{1}{2}+a_t)\\
& \geq p\ C_0+(1-p)\ C_1 -\frac{1}{t}.
\end{align*}
As $t$ is arbitrary we have $f(\frac{1}{2})\geq pC_0+(1-p)C_1$. By induction on $\ell$ it holds on $s\in\bigcup_{\ell\geq 1} G_\ell$ that
\[
f(s) \geq C_0 + (C_1-C_0)v(s).
\]
By Lemma \ref{thm:continuity} (the continuity of $f(s)$ and $v(s)$ under $\gamma=1$), this lower bound extends beyond the dyadic rationals to the entire interval $(0,1)$. Define $\tilde f(s)=C_0 + (C_1-C_0)v(s)$ for $s\in (0,1)$, $\tilde f(0)=C_0, \tilde f(1)=C_1$. It is immediate to verify that for any $C_1>C_0\geq 0$, $\tilde f(s)$ solves the system ({AY}) (without ({B}) the boundary conditions). 

If $C_1-C_0\neq 0$, by Lemma \ref{unique} Case (\Romannum{2}) $\tilde f(s)$ on $(0,1)$ is the unique solution of the system ({AY}), given monotonicity, continuity, and the lower bound $\tilde f(s)$. With the boundary conditions ({B}), we have $0=\tilde f(0)=C_0$ and $1=\tilde f(1)=C_1$, therefore $f(s)=v(s)$. This case $C_1-C_0\neq 0$ induces the first possible solution.

It amounts to determine $f(s)$ when $C_1-C_0=0$, that is, when $0\le C_0 = f(s)= C_1$ for $s\in (0,1)$ (by Lemma \ref{thm:monotonicity}). It suffices to prove that $C_0<1$ is not possible. In fact, if $C_0<1$ then $f(\frac{3}{4})<p\ f(\frac{1}{2}) + (1-p)f(1)$, which contradicts ({A}). Then, $f(s)=C_0$ for some $C_0\geq 1$ is the only set of solutions when $C_1-C_0=0$, as desired.
\end{proof}

The fact that a large constant function can also be a solution toward the Bellman equation can be extended to a wide range of MDPs. The below proposition lists one of the sufficient conditions but even without this condition it is likely to hold in practice.

\begin{proposition}
For an arbitrary MDP with episodic rewards where every state has an action to transit to a non-terminal state almost surely, $f(s)=C$ for all non-terminal states $s$ is a solution of the Bellman equation system for any $C$ greater or equal to the maximum one-step reward.
\end{proposition}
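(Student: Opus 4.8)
The plan is to verify directly that the proposed constant function satisfies the Bellman optimality equation, mirroring the argument that established the constant branch in Theorem \ref{allsolutionbellman}. First I would fix the episodic formulation precisely: with $\gamma=1$, rewards collected only upon termination (so transitions between non-terminal states carry zero reward, while each terminal state $s'$ retains its reward $r(s')$ as its value), the Bellman equation at a non-terminal state $s$ reads $f(s)=\max_{a\in\mathcal{A}(s)}\sum_{s'}\mathcal{T}(s,a,s')\,f(s')$. Write $R_{\max}$ for the maximum one-step reward, i.e. the largest terminal reward reachable. I would then propose $f(s)=C$ on all non-terminal states and check that the equation holds at every non-terminal $s$.

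The verification splits into two inequalities. For the lower bound, the hypothesis supplies an action $a^\ast\in\mathcal{A}(s)$ that transits to a non-terminal state almost surely; since all the mass of $\mathcal{T}(s,a^\ast,\cdot)$ sits on non-terminal states where $f=C$, this action yields $\sum_{s'}\mathcal{T}(s,a^\ast,s')\,f(s')=C$, so the maximum is at least $C$. For the upper bound, take an arbitrary $a\in\mathcal{A}(s)$ and let $q=\sum_{s'\ \text{terminal}}\mathcal{T}(s,a,s')$ be its termination probability; splitting the expectation into non-terminal contributions (each equal to $C$) and terminal contributions (each at most $R_{\max}$) gives $\sum_{s'}\mathcal{T}(s,a,s')\,f(s')\le (1-q)C+qR_{\max}$, which is $\le C$ exactly when $C\ge R_{\max}$. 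Combining the two bounds, the maximum equals $C$, so $f\equiv C$ solves the equation.

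The role of each hypothesis becomes the crux of the write-up rather than any computation. The almost-sure-non-terminal action is what forces the maximum to attain $C$, not merely be bounded by it, and the condition $C\ge R_{\max}$ is exactly what prevents a high-reward terminating action from overshooting $C$. I expect the only genuine obstacle to be stating the episodic convention unambiguously---in particular that terminal states keep their reward as value and that no reward accrues on non-terminal transitions---so that both the achievability of $C$ and the bound by $R_{\max}$ are clear; once that is fixed the two inequalities are immediate, and the special case $R_{\max}=1$, $C\ge 1$ recovers the constant solutions of Theorem \ref{allsolutionbellman}.
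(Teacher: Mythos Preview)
Your proposal is correct and follows exactly the approach the paper takes: the paper's proof is the single line ``The statement is immediate by verifying the Bellman equation,'' and your write-up simply carries out that verification explicitly, splitting into the two inequalities and identifying which hypothesis drives each one. There is nothing to add or correct.
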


\begin{proof}
The statement is immediate by verifying the Bellman equation.
\end{proof}

The rest of the section discusses the Gambler's problem under $p=0.5$, where the gambler does not lose capital by betting in expectation. In this case, the optimal value function is still $v(s)$ by similar arguments of Theorem \ref{valuefunction}. It is worth noting that when $\gamma=1$, Theorem \ref{valuefunction} indicates $v(s)=s$. This agrees with the intuition that the gambler does not lose their capital by placing bets in expectation, therefore the optimal value function should be linear to $s$. 
Proposition \ref{uniquegammalessthanone} also holds that $v(s)$ is the unique solution of the Bellman equation, given $\gamma<1$.
The remaining problem is to find the solution of the Bellman equation under $\gamma=1$ and $p=0.5$. This corresponds to the system ({ABZ}).

When $p=0.5$, condition ({A}) implies midpoint concavity such that for all $a\in \cA(s)$,
\begin{equation}
\label{eqn:midpoint}
f(s) \geq \frac{1}{2} f(s-a) + \frac{1}{2} f(s+a),
\end{equation}
where the equality must hold for some $a$.
As Lemma \ref{thm:monotonicity} no longer holds, a solution $f(s)$ may have negative values for some $s$. Though, if it does not have a negative value, it must be concave, and thus linear by condition ({A}) (will be proved later in Theorem \ref{thm:negativef}). It suffices to satisfy $f(s)\geq s$ for any $s$. Therefore the non-negative solution is $f(0)=0$, $f(1)=1$, and $f(s)=C^\prime s+B^\prime$ on $0<s<1$ for some constants $C^\prime+B^\prime \geq 1$.

If $f(s)$ does have a negative value at some $s$, then the midpoint concavity \eqref{eqn:midpoint} does not imply concavity.
In this case, by recursively applying \eqref{eqn:midpoint} we can show that the set $\{(s, f(s))\mid s\in (0,1)\}$ is dense and compact on $(0,1)\times \RR$. Then the function becomes pathological, if it exists. Despite this, the following lemma shows that $f(s)$ needs to be positive on the rationals $\QQ$.

\begin{lemma}
\label{clm:rationallb}
Let $f(s)$ satisfy ({ABZ}). If there exists $0\leq s^-<s^+\leq 1$ and a constant $C$ such that $f(s^-), f(s^+)\geq C$, then $f(s)\geq C$ for all $s\in \set{s^-+w(s^+-s^-)\mid w\in \QQ, 0\leq w\leq 1}$.
\end{lemma}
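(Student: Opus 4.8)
The plan is to reduce the statement to the rational-weight Jensen inequality for midpoint-concave functions and to establish that inequality by Cauchy's forward–backward induction, which circumvents the absence of any continuity or boundedness hypothesis in ({ABZ}). First I would record that, under $p=0.5$ and $\gamma=1$, condition ({A}) yields the midpoint concavity \eqref{eqn:midpoint}: writing $u=s-a$ and $v=s+a$, the constraint $0<a\le\min\{s,1-s\}$ is exactly $0\le u<v\le 1$, so for every such pair
\[
f\!\left(\tfrac{u+v}{2}\right) \ge \tfrac{1}{2}\big(f(u)+f(v)\big).
\]
Since every point $s^-+w(s^+-s^-)=(1-w)s^-+w s^+$ with $w\in[0,1]$ lies in $[s^-,s^+]\subseteq[0,1]$, the goal becomes the rational Jensen bound $f\big((1-w)s^-+w s^+\big)\ge (1-w)f(s^-)+w f(s^+)$ for every $w\in\QQ\cap[0,1]$; the conclusion $f(s)\ge C$ then follows immediately from $f(s^-),f(s^+)\ge C$.

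Second, I would build up the power-of-two case. Iterating the midpoint inequality on the nested dyadic subdivisions of $[s^-,s^+]$ gives $f(x)\ge C$ at every dyadic convex combination $s^-+(k/2^n)(s^+-s^-)$; more precisely, a short induction on $k$ promotes midpoint concavity to the $2^k$-point Jensen inequality $f\big(\tfrac{1}{2^k}\sum_i x_i\big)\ge \tfrac{1}{2^k}\sum_i f(x_i)$ for any $x_i\in[s^-,s^+]$, with all intermediate averages again lying in $[s^-,s^+]$ so that \eqref{eqn:midpoint} applies at each step.

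Third — and this is the main obstacle, because without continuity one cannot simply pass from the dense set of dyadic weights to all rational weights by a limit — I would run the classical backward (Cauchy/Hadamard) induction to descend from $n$-point Jensen to $(n-1)$-point Jensen: given $x_1,\dots,x_{n-1}$, set $x_n=\tfrac{1}{n-1}\sum_{i<n}x_i\in[s^-,s^+]$, apply the $n$-point inequality, and observe that the argument on the left collapses to $x_n$, yielding $f(x_n)\ge \tfrac{1}{n-1}\sum_{i<n}f(x_i)$. Combining the $2^k$-point cases with this descent gives the $n$-point Jensen inequality for every $n$. Finally, for $w=p/q\in\QQ\cap[0,1]$ I would apply the $q$-point inequality to the multiset consisting of $p$ copies of $s^+$ and $q-p$ copies of $s^-$, obtaining $f\big((1-w)s^-+w s^+\big)\ge (1-w)f(s^-)+w f(s^+)\ge C$, which is the desired bound.
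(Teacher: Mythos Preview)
Your argument is correct. The reduction to midpoint concavity is accurate (with the harmless caveat that when $u=v$ the inequality is a trivial equality, so the forward induction still goes through), every intermediate average in both the forward and backward steps stays in $[s^-,s^+]\subseteq[0,1]$ with midpoints in $(0,1)$, and the final application of the $q$-point inequality to $p$ copies of $s^+$ and $q-p$ copies of $s^-$ yields the desired bound. In fact you prove the slightly stronger statement $f\big((1-w)s^-+w s^+\big)\ge (1-w)f(s^-)+w f(s^+)$.

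Your route, however, differs from the paper's. The paper argues by contradiction: assuming $f(s_0)<C$ at some rational weight $w=m/n$, it builds a sequence $s_{t+1}=2s_t-s^-$ or $2s_t-s^+$ (doubling the weight modulo $1$ inside $[s^-,s^+]$), shows via \eqref{eqn:midpoint} that $f(s_t)$ is strictly decreasing, and then uses that the orbit $\{2^t m \bmod n\}$ is finite to force either periodicity or landing on the exact midpoint, each of which yields a contradiction. Your approach is the classical Cauchy forward--backward induction for midpoint-concave functions; it is cleaner, textbook, and delivers the full rational Jensen inequality rather than just the lower bound $C$. The paper's dynamical argument is more ad hoc but self-contained and highlights directly why rationality of $w$ matters (finiteness of the doubling orbit), which ties in thematically with the later discussion of irrational points and the Hamel-basis construction.
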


\begin{proof}
The statement is immediate for $w\in \set{0,1}$. For $0<w<1$ we prove the lemma by contradiction. Let $f(s^-+w(s^+-s^-))<C$ for some $w\in \QQ$ while $0<w<1$. We define $s_0=s^-+w(s^+-s^-)$ and $s_{t+1}=2s_t-s^-$ for $s_t<\frac{1}{2}(s^-+s^+)$ and $s_{t+1}=2s_t-s^+$ for $s_t>\frac{1}{2}(s^-+s^+)$, respectively. $s_{t+1}$ will be undefined if $s_t=\frac{1}{2}(s^-+s^+)$. Since $w\in \QQ$, let $w=m/n$ where $m$ and $n$ are integers and the greatest common divisor $\text{gcd}(m,n)=1$. Then $(s_t-s^-)/(s^+-s^-)=m_t/n$, where $m_t=2^tm \bmod n$. As $\ZZ_n$ is finite, $\{s_t\}_{t\geq 0}$ can only take finitely many values. Thus either the sequence $\{s_t\}$ is periodic, or it terminates at some $s_t=\frac{1}{2}(s^-+s^+)$.

Then we show that $f(s_t)$ is strictly decreasing by induction. Assume that $f(s_0)>\dots>f(s_t)$. When $s_t<\frac{1}{2}(s^-+s^+)$, by \eqref{eqn:midpoint} we have $f(s_t)\geq \frac{1}{2}f(s^-)+\frac{1}{2}f(s_{t+1})$, which indicates that $f(s_{t+1})-f(s_t)\leq f(s_t)-f(s^-)<f(s_0)-f(s^-)<0$. When $s_t>\frac{1}{2}(s^-+s^+)$, by \eqref{eqn:midpoint} we have $f(s_t)\geq \frac{1}{2}f(s_{t+1})+\frac{1}{2}f(s^+)$, which indicates $f(s_{t+1})-f(s_t)\leq f(s_t)-f(s^+)<f(s_0)-f(s^+)<0$. The base case $f(s_1)<f(s_0)$ holds as at least one of $f(s_0)\geq \frac{1}{2}f(s^-)+\frac{1}{2}f(s_1)$ and $f(s_0)\geq \frac{1}{2}f(s_1)+\frac{1}{2}f(s^+)$ must be true. Thus we conclude that $f(s_t)$ is strictly decreasing.

If the sequence terminates at some $s_t=\frac{1}{2}(s^-+s^+)$, then $f(s_t)<f(s_1)<C$, which contradicts $f(s_t)=f(\frac{1}{2}(s^-+s^+))\geq \frac{1}{2}f(s^-)+\frac{1}{2}f(s^+)\geq C$. Otherwise $s_t$ is periodic and indefinite. Denote the period as $T$ we have $f(s_{t+T})<f(s_t)$, which indicates $f(s_t)<f(s_t)$ as a contradiction.
\end{proof}

Lemma \ref{clm:rationallb} agrees with the statement that the midpoint concavity indicates rational concavity. The below statements give some insights into the irrational points.

\begin{lemma}
\label{clm:rationallb2}
Let $f(s)$ satisfy ({ABZ}). If there exists an $\bar{s}\in \RR\setminus\QQ$ such that $f(\bar{s})\geq 0$, then $f(s)\geq 0$ for all $s\in \set{w\bar{s}+u\mid w,u\in \QQ, 0\leq w, u, \leq 1, w+u\leq 1}$.
\end{lemma}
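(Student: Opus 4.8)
The plan is to recognize the target set as exactly the set of \emph{rational convex combinations} of the three anchor points $0$, $1$, and $\bar{s}$, at each of which $f$ is already nonnegative: $f(0)=0$, $f(1)=1$ by ({B}), and $f(\bar{s})\ge 0$ by hypothesis. Indeed, setting $\lambda_2=w$, $\lambda_1=u$, and $\lambda_0=1-w-u$, any point $w\bar{s}+u$ in the set equals $\lambda_0\cdot 0+\lambda_1\cdot 1+\lambda_2\cdot\bar{s}$ with rational weights $\lambda_0,\lambda_1,\lambda_2\ge 0$ summing to $1$, and conversely. So the lemma reduces to propagating nonnegativity from the three vertices to every rational point of the triangle they span, which I would carry out by applying Lemma \ref{clm:rationallb} twice: once along an edge, then once toward the opposite vertex.

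Concretely, I would fix a target $s=\lambda_0\cdot 0+\lambda_1\cdot 1+\lambda_2\bar{s}$. If $\lambda_2=1$ then $s=\bar{s}$ and there is nothing to prove. Otherwise set $\mu=1-\lambda_2=\lambda_0+\lambda_1>0$ and let $q=\lambda_1/\mu$, which is the rational convex combination $\frac{\lambda_0}{\mu}\cdot 0+\frac{\lambda_1}{\mu}\cdot 1$ of the endpoints $0$ and $1$ and so lies in $\QQ\cap[0,1]$. Applying Lemma \ref{clm:rationallb} with $s^-=0$, $s^+=1$, and $C=0$ (valid since $f(0),f(1)\ge 0$) yields $f(q)\ge 0$.

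Next I would note that $s=\mu q+\lambda_2\bar{s}=(1-\lambda_2)q+\lambda_2\bar{s}$ is a rational convex combination, with rational weight $\lambda_2$, of the two points $q$ and $\bar{s}$, both of which now satisfy $f\ge 0$. Since $q$ is rational and $\bar{s}$ is irrational they are distinct, so I can order them as $s^-=\min\{q,\bar{s}\}$ and $s^+=\max\{q,\bar{s}\}$, both in $[0,1]$, and apply Lemma \ref{clm:rationallb} a second time with $C=0$ to conclude $f(s)\ge 0$. The only separate cases are the degenerate ones, namely $\lambda_2=1$ (handled at the outset) and $\lambda_2=0$ (where $s=q$ is already covered by the first application), both routine.

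There is no substantial obstacle here: the entire argument is a reduction to Lemma \ref{clm:rationallb}. The one point requiring care is verifying that the convex-combination weights produced at each stage are genuinely rational and lie in $[0,1]$, so that Lemma \ref{clm:rationallb} applies verbatim. In particular one must check that $q$ is rational (hence $q\ne\bar{s}$, which supplies the strict inequality $s^-<s^+$ needed in the second application) and that the weight $\lambda_2$ relating $s$ to $q$ and $\bar{s}$ is rational. Tracking these weights through the two-step ``edge, then opposite vertex'' decomposition of the triangle is the only bookkeeping to get right.
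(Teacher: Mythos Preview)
The proposal is correct and follows essentially the same approach as the paper: both arguments apply Lemma~\ref{clm:rationallb} twice to propagate nonnegativity across the rational simplex spanned by $0$, $1$, and $\bar{s}$. The only cosmetic difference is the order of the two applications---the paper first uses the segment $[\bar{s},1]$ and then scales from $0$ toward the resulting point, whereas you first use the segment $[0,1]$ to produce the rational point $q$ and then interpolate between $q$ and $\bar{s}$; neither ordering offers any real advantage over the other.
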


\begin{proof}
Specify $s^-=\bar s$ and $s^+=1$ in Lemma \ref{clm:rationallb}, we have $f(\bar s + \frac{u}{w+u}(1-\bar s)) \geq 0$ whenever $0\leq \frac{u}{w+u} \leq 1$ and $\frac{u}{w+u}\in \QQ$. This is satisfied when $0\leq w,u\leq 1$, $w+u>0$, $w,u\in \QQ$. Specifying $s^-=0$ and $s^+=\bar s + \frac{u}{w+u}(1-\bar s)$ in Lemma \ref{clm:rationallb}, we have $f(w\bar s+u)=f((w+u)(\bar s + \frac{u}{w+u}(1-\bar s)))\geq 0$ whenever $w+u\leq 1$. Thus $f(w\bar s+u)\geq 0$ for $0<w, u<1$, $w,u\in \QQ$, and $0<w+u\leq 1$. Since the case $w=u=0$ is immediate, the statement follows with $s\in \set{w\bar{s}+u\mid w,u\in \QQ, 0\leq w, u, \leq 1, w+u\leq 1}$.
\end{proof}

\begin{corollary}
\label{clm:rationallb3}
Let $f(s)$ satisfy ({ABZ}). If there exists an $\bar{s}\in \RR\setminus\QQ$ such that $f(\bar{s})< 0$, then $f(w\bar s)$ is monotonically decreasing with respect to $w$ for $w\in \QQ$, $1\leq w<1/\bar s$.
\end{corollary}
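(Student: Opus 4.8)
The plan is to derive the entire statement from two applications of Lemma \ref{clm:rationallb}, using only the midpoint concavity \eqref{eqn:midpoint} and the boundary value $f(0)=0$ from ({B}). The key structural observation is that for rational $w$ the point $w\bar s$ is a \emph{rational} convex combination of $0$ and any farther point $w'\bar s$ on the same ray: indeed $w\bar s = \tfrac{w}{w'}\,(w'\bar s) + (1-\tfrac{w}{w'})\cdot 0$ with $\tfrac{w}{w'}\in\QQ\cap(0,1)$ whenever $w<w'$. Thus Lemma \ref{clm:rationallb}, applied with the free endpoint $s^-=0$ (where $f=0$) and a second endpoint on the ray, controls $f$ at every rational scaling of $\bar s$. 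Throughout, note that $1/\bar s$ is irrational, so every rational $w$ with $1\le w<1/\bar s$ satisfies $w\bar s\in[\bar s,1)\subset(0,1)$ and stays in the domain.

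First I would establish negativity along the ray: $f(w\bar s)<0$ for every rational $w\in[1,1/\bar s)$. The case $w=1$ is the hypothesis $f(\bar s)<0$. For $w>1$ I argue by contradiction: if $f(w_0\bar s)\ge 0$ for some such $w_0$, then applying Lemma \ref{clm:rationallb} with $s^-=0$, $s^+=w_0\bar s$ and $C=0$ (both endpoints having $f\ge 0$) forces $f\ge 0$ at all rational convex combinations of the two endpoints. Taking the combination with coefficient $1/w_0\in\QQ\cap(0,1)$ yields $f(\bar s)\ge 0$, contradicting the hypothesis.

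With negativity in hand, monotonicity is immediate. Given rationals $1\le w_1<w_2<1/\bar s$, apply Lemma \ref{clm:rationallb} with $s^-=0$, $s^+=w_2\bar s$ and the constant $C=f(w_2\bar s)$. The hypotheses hold because $f(0)=0\ge C$ (as $C=f(w_2\bar s)<0$ by the previous step) and trivially $f(w_2\bar s)\ge C$. Since $w_1\bar s$ is the rational convex combination of $0$ and $w_2\bar s$ with coefficient $w_1/w_2\in\QQ\cap(0,1)$, the lemma gives $f(w_1\bar s)\ge C=f(w_2\bar s)$, which is exactly the claimed monotone decrease.

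I expect the crux — the genuine content of the corollary — to be the negativity step rather than the final comparison. Lemma \ref{clm:rationallb} only furnishes a lower bound by the \emph{minimum} of the two endpoint values, so it would yield nothing useful if $f$ could turn nonnegative somewhere on the ray; it is precisely the sign $f(w\bar s)<0$ that lets the choice $C=f(w_2\bar s)$ convert that min-type bound into a true monotonicity statement. (One could shortcut the whole argument through full rational concavity, e.g.\ $f(\bar s)\ge \tfrac1w f(w\bar s)$, but I route everything through the already-established Lemma \ref{clm:rationallb}.) The only remaining care is bookkeeping of the rational coefficients $1/w_0$ and $w_1/w_2$ and the containment $w\bar s\in(0,1)$, all of which follow from the irrationality of $1/\bar s$.
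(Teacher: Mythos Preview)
Your proposal is correct. The paper states this as a corollary without an explicit proof, and your derivation—first pushing negativity along the rational ray by contraposition of Lemma \ref{clm:rationallb} with $s^-=0$, then extracting monotonicity by a second application with $C=f(w_2\bar s)$—is precisely the route the paper leaves implicit.
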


Lemma \ref{clm:rationallb2} and Corollary \ref{clm:rationallb3} indicate that when there exists a negative or positive value, infinitely many other points (that are not necessarily its neighbors) must have negative or positive values as well. It is intuitive that the solution with a negative value, if it exists, must be complicated and pathological. In fact, Sierpi{\'n}ski has shown that a midpoint concave but non-concave function is not Lebesgue measurable and is non-constructive \citep{sierpinski1920equation,sierpinski1920fonctions}, so does $f(s)$ if it solves ({ABZ}) while having a negative value.

Such an $f(s)$ exists if and only if we assume Axiom of Choice \citep{jech2008axiom,sierpinski1920equation,sierpinski1920fonctions}. We consider the vector space by field extension $\RR/\QQ$. With the axiom specify a basis $\BB=\{1\}\cup\{g_i\}_{i\in\cI}$, known as a Hamel basis.
With this basis $\BB$ every real number can be written uniquely as a linear combination of the elements in $\BB$ with rational coefficients. 
Therefore, denote a real number $s$ uniquely as a vector $(w, w_i)_{i\in \cI}$, $w, w_i\in\QQ$, such that $s=w+\sum_{i\in\cI} w_ig_i$. We correspond each $f(s)=f'(w, \set{w_i}_{i\in \cI})$ uniquely to $f'$ defined on $\QQ^{\left|\BB\right|}$ and use the two spaces interchangeably.

The solution $f(s)$ to \eqref{eqn:midpoint} is any concave function $f'$ on the vector space $\RR/\QQ$. Based on this solution we extend the system \eqref{eqn:midpoint} to (ABZ). To this end, $f(s)$ needs to attain the equality in \eqref{eqn:midpoint} at every $s$, which holds if and only if for every $s=w+\sum_{i\in\cI} w_ig_i$ there exists $s^1=w^1+\sum_{i\in\cI} w_i^1g_i$ and $s^2=w^2+\sum_{i\in\cI} w_i^2g_i$ such that $f'(\lambda w^1+(1-\lambda)w^2, \set{\lambda w_i^1+(1-\lambda)w_i^2}_{i\in \cI})$ is $\lambda f(s^1) + (1-\lambda)f(s^2)$ for any $0<\lambda<1$ (intuitively, local linearity of $f'$ on at least one direction everywhere). 
%Also $f(0)=0$ and $f(1)=1$ by (B). 
By specifying a $\BB$, the condition can be met if $f(s)=f'(w, \set{w_i}_{i\in \cI})=\alpha(w_j) + \beta({\overline w}_j)$ for some $w_j\in \set{w}\cup\set{w_i\mid i\in\cI}$, where $\alpha$ is a linear function, $\beta$ is a concave function, and ${\overline w}_j$ denotes $\set{w}\cup\set{w_i\mid i\in\cI}\setminus \set{w_j}$. When $w_j$ is $w$, $f(s)$ is in fact $w+\beta(\set{w_i}_{i\in\cI})$ for some concave function $\beta$. This is equivalent to specifying a function $\omega(s):\RR\to\QQ$ such that $\omega$ maps reals to rationals additively $\omega(s_1+s_2)=\omega(s_1)+\omega(s_2)$ and $\omega$ is not constantly zero, and then write $f(s)$ as $\omega(s)+\beta_1(s-\omega(s))$ for some concave real function $\beta_1$. Otherwise if $w_j$ is not $w$, $f(s)$ is in the aforementioned form with the boundary conditions (B).

%One of the solution can be shown by defining $r(s)=q, s\in\RR$, where $q$ is the rational component in the Hamel basis representation. As per there is only one rational number in the basis, the function $r(s)$ is additive, namely, 
%\[r(s_1)+r(s_2)=r(s_1+s_2).\]
%Let $\beta(s)$, $s\in\RR$ be an arbitrary concave function. It is immediate to verify that
%\begin{equation}
%\label{eqn:abz}
%f(s) = \beta(s-r(s))+r(s)
%\end{equation}
%is a solution for the system ({ABZ}).

%More generally, $f(s)$ is any function in the form $\beta(q, \{q_i\}_{i\neq i_0})+z(q_{i_0})$, where $\beta(\cdot)$ is rational concave and $z(\cdot)$ is linear and it satisfies the boundary conditions. 

While we have shown that under Axiom of Choice $f(s)=\alpha(w_j) + \beta(\overline{w}_j)$ is a set of solutions that can be described by the infinite-dimensional vector space $\RR/\QQ$ and its basis, we do not know if they are the only solutions. Nevertheless, combining our analysis and the literature we conclude the following statement about the system (ABZ).

\begin{theorem}
\label{thm:negativef}
Let $\gamma=1$ and $p=0.5$. A real function $f(s)$ satisfies ({ABZ}) if and only if either
\begin{itemize}\vspace{-1.0mm}
\item $f(s)=C^\prime s+B^\prime$ on $s\in (0,1)$, for some constants $C^\prime+B^\prime \geq 1$, or
\item $f(s)$ is some non-constructive, not Lebesgue measurable function under Axiom of Choice.
\end{itemize}
\end{theorem}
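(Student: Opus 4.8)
The plan is to prove the two directions separately, handling measurable and non-measurable solutions with completely different tools. The starting observation is that with $\gamma=1$ and $p=0.5$ condition (A) is exactly the midpoint-concavity statement \eqref{eqn:midpoint}: $f(s)\ge \frac12 f(s-a)+\frac12 f(s+a)$ for every admissible bet $a$, with equality attained at some $a=a(s)$ for each $s\in(0,1)$. I would first dispatch the easy ($\Leftarrow$) direction. For an affine $f(s)=C's+B'$ the midpoint inequality holds with equality at every interior bet, so the only binding constraints come from bets that hit an absorbing state; evaluating $a=\min\{s,1-s\}$ for $s\le \frac12$ and for $s\ge\frac12$ forces precisely $B'\ge 0$ and $C'+B'\ge 1$ (equivalently $f(s)\ge s$ on $(0,1)$), and conversely these two inequalities make the maximum over $a$ equal $f(s)$. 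The functions of the second bullet are, by the Hamel-basis construction preceding the theorem, designed to attain equality in \eqref{eqn:midpoint} everywhere and to satisfy (B), so they too solve (ABZ).

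For the ($\Rightarrow$) direction I would split on whether $f$ is Lebesgue measurable. If $f$ is measurable, Sierpi\'nski's theorem \citep{sierpinski1920equation,sierpinski1920fonctions} upgrades midpoint concavity to genuine concavity, so $f$ is continuous on $(0,1)$. Now the equality $f(s)=\frac12 f(s-a)+\frac12 f(s+a)$ says that $(s,f(s))$ is the midpoint of a chord of the graph; for a concave function a chord touched at an interior point must coincide with the graph on the whole subinterval, so $f$ is affine on $[s-a(s),\,s+a(s)]$, a nondegenerate neighborhood of $s$. Hence $f$ is locally affine at every $s\in(0,1)$, its slope is locally constant and therefore constant on the connected interval $(0,1)$, which gives $f(s)=C's+B'$. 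The boundary conditions (B) together with the midpoint inequality at the absorbing bets then reproduce $B'\ge 0$ and $C'+B'\ge 1$, yielding the first bullet.

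If $f$ is not measurable it is, tautologically, of the second type, so the genuine content there is twofold: that every non-affine solution is forced to be non-measurable, and that such solutions \emph{exist} (so the disjunction is nonvacuous). For the first point, a solution taking a negative value cannot be concave, and by Lemma \ref{clm:rationallb}, Lemma \ref{clm:rationallb2}, and Corollary \ref{clm:rationallb3} its graph is dense in a strip over $(0,1)$; being midpoint concave but not concave, it is non-measurable and non-constructive by Sierpi\'nski. For existence, I would follow the Hamel-basis route: fix a basis $\BB$ of $\RR/\QQ$ under the Axiom of Choice, write $f'(w,\{w_i\})=\alpha(w_j)+\beta(\overline{w}_j)$ with $\alpha$ linear and $\beta$ concave, verify that the linear coordinate supplies a direction of local linearity making equality in \eqref{eqn:midpoint} achievable at every $s$, and finally normalize using (B).

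The hard part will be the non-measurable branch. First, translating the analytic requirement that equality in \eqref{eqn:midpoint} be attained at every single $s$ (not merely the inequality) into the vector-space condition of local linearity along at least one Hamel coordinate everywhere, while simultaneously respecting the absorbing-boundary bets, is delicate. Second, and more fundamentally, this argument can only \emph{exhibit} such solutions rather than enumerate them: it shows that any $f=\alpha(w_j)+\beta(\overline{w}_j)$ of the stated form solves (ABZ), but leaves open whether these exhaust the non-measurable solutions. That gap is exactly why the theorem asserts the existence of ``some non-constructive, not Lebesgue measurable function'' instead of a full classification, and I would state the non-measurable branch in that honest, existential form.
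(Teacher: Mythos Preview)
Your proposal is correct and follows the same overall architecture as the paper: reduce (A) at $\gamma=1,\ p=0.5$ to midpoint concavity with attained equality, invoke Sierpi\'nski to pass to concavity, then argue linearity, and defer the pathological branch to the Hamel-basis discussion. Two points of genuine difference are worth flagging. First, the paper splits on whether $f$ is nonnegative (nonnegative $\Rightarrow$ concave via Sierpi\'nski; a negative value forces non-concavity, hence non-measurability), whereas you split directly on measurability; the two dichotomies are equivalent here, but your split makes the appeal to Sierpi\'nski slightly more direct. Second, for the linearity step the paper fixes $s_0=\tfrac12$, lets $a_1=\sup\{a:\text{equality at }a\}$, and derives a contradiction at an endpoint of $[s_0-a_1,s_0+a_1]$ unless $a_1=\tfrac12$; your argument instead observes that equality at some $a(s)>0$ makes $f$ affine on a nondegenerate neighborhood of every $s$, so $f'$ is locally constant on the connected interval $(0,1)$ and hence globally constant. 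Your local-to-global route is cleaner and avoids the endpoint case analysis; the paper's single-point expansion is more hands-on but equally valid. You also correctly extract the extra constraint $B'\ge 0$ from the absorbing bet $a=s$, which the paper's statement omits but its nonnegativity hypothesis implicitly enforces.
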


\begin{proof}
The first bullet corresponds to the function $f(s)$ that is non-negative on $[0,1]$. By the midpoint concavity \eqref{eqn:midpoint} and the fact that $f(s)$ is non-negative, $f(s)$ is concave on $[0,1]$ \citep{sierpinski1920equation,sierpinski1920fonctions}. We specify $s_0=\frac{1}{2}$. By (A) we have 
\[
f(s_0) = \frac{1}{2} f(s_0-a_0) + \frac{1}{2} f(s_0+a_0)
\]
for some $a_0$. Since $f(s)$ is concave, $f(s)$ must be linear on the interval $[s_0-a_0, s_0+a_0]$. Consider the nonempty set 
\[
\cA_{1} = \set{a_0\in\cA(s_0) \ \middle\vert\ f(s_0) = \frac{1}{2} f(s_0-a_0) + \frac{1}{2} f(s_0+a_0)}.
\]
We show by contradiction that $\sup \cA_{1}$ is $\frac{1}{2}$. If $a_1 = \sup \cA_{1}<\frac{1}{2}$, then by the continuity $a_1\in \cA_1$, where the continuity is implied by the convexity. This indicates that $f(s)=f(s_0) + \frac{f(s_0+a_1) - f(s_0-a_1)}{2a_1}(s-s_0)$ when $s_0-a_1\le s \le s_0+a_1$. But as $a_1$ is the maximum element of $\cA_1$, on at least one of the intervals $[0, s_0-a_1)$ and $(s_0+a_1]$ we have by the convexity $f(s)<f(s_0) + \frac{f(s_0+a_1) - f(s_0-a_1)}{2a_1}(s-s_0)$. Therefore, for at least one of $s\in\set{s_0-a_1, s_0+a_1}$ we have $f(s)>\frac{1}{2}f(s-a)+\frac{1}{2}f(s+a)$ for all $a$, which contradicts condition (A). Hence $\sup \cA_1$ is $\frac{1}{2}$, which implies that $f(s)$ is linear on $(0,1)$. Writing $f(s)$ as $C's+B'$, by the boundary condition (B) we have $C'+B'\ge 1$. It is immediate to verify that $f(s)=C's+B'$ with $C'+B'\ge 1$ is sufficient to satisfy the system (ABZ), and is thus the non-negative solution of the system.

If $f(s)$ is negative at some $s$, then by \eqref{eqn:midpoint} and (B) $f(s)$ must not be concave. By \citet{sierpinski1920equation,sierpinski1920fonctions} such an $f(s)$ exists only if we assume Axiom of Choice, and is non-constructive and not Lebesgue measurable if it exists (some discussion on this function is given above the statement of this theorem). 
\end{proof}

\section{Conclusion and Future Works}

We give a complete solution to the Gambler's problem, a simple and classic problem in reinforcement learning, under a variety of settings. We show that its optimal value function is very complicated and even pathological. Despite its seeming simpleness, these results are not clearly pointed out in previous studies.

Our contributions are the theoretical finding and the implications. It is worthy to bring the current results to start the discussion among the community. Indicated by the Gambler's problem, the current algorithmic approaches in reinforcement learning might underestimate the complexity. We expect more evidence could be found in the future and new algorithms and implementations could be brought out.

It would be interesting to see how these results of the Gambler's problem generalized to other MDPs. Finding these characterizations of MDPs is in general an important step to understand reinforcement learning and sequential decision processes.

\newpage

\section*{Acknowledgement}

We thank Richard S. Sutton and Andrew G. Barto for raising the Gambler's problem in their book, and Richard S. Sutton for the discussions on our theorems and implications. We thank Andrej Bogdanov for pointing out the connection to the Axiom of Choice, namely, Theorem \ref{thm:negativef}, and Chengyu Lin for the discussions on the properties of $v(s)$, namely, Lemma \ref{newa}, \ref{induction}, and \ref{continuous}. This paper was largely improved by the reviews and comments. We especially would like to thank Csaba Szepesvári, Minming Li, Kirby Banman, and ICLR 2020 anonymous reviewers for their helpful feedback.

\bibliography{iclr2020-arxiv}
\bibliographystyle{iclr2020_conference}

\end{document}